\newcommand{\Tr}{\operatorname{Tr}}
\newcommand{\expect}{\mathbb{E}}
\newcommand{\vx}{{\vec{x}}}
\newcommand{\vw}{{\vec{w}}}
\newcommand{\vy}{{\vec{y}}}
\newcommand{\vxi}{{\vec{x}_i}}
\newcommand{\vxj}{{\vec{x}_j}}
\newcommand{\vwi}{{\vec{w}_i}}
\newcommand{\vW}{{\vec{W}}}
\newcommand{\vWi}{{\vec{W}_i}}
\icmltitlerunning{The Impact of Neural Network Overparameterization on Gradient Confusion and Stochastic Gradient Descent}
\begin{document}

\twocolumn[
\icmltitle{The Impact of Neural Network Overparameterization on \\Gradient Confusion and Stochastic Gradient Descent}
\icmlsetsymbol{equal}{*}

\begin{icmlauthorlist}
\icmlauthor{Karthik A. Sankararaman*}{fb,umd}
\icmlauthor{Soham De*}{dm}
\icmlauthor{Zheng Xu}{umd}
\icmlauthor{W. Ronny Huang}{umd}
\icmlauthor{Tom Goldstein}{umd}

\end{icmlauthorlist}

\icmlaffiliation{fb}{Facebook.}
\icmlaffiliation{dm}{DeepMind, London.}
\icmlaffiliation{umd}{University of Maryland, College Park.}

\icmlcorrespondingauthor{Karthik A. Sankararaman}{karthikabinavs@gmail.com}
\icmlcorrespondingauthor{Soham De}{sohamde@google.com}

\icmlkeywords{optimization, initialization}

\vskip 0.3in
]
\printAffiliationsAndNotice{\icmlEqualContribution}

\begin{abstract}
This paper studies how neural network architecture affects the speed of training. 
We introduce a simple concept called {\em gradient confusion} to help formally analyze this.  When gradient confusion is high, stochastic gradients produced by different data samples may be negatively correlated, slowing down convergence.
But when gradient confusion is low, data samples interact harmoniously, and training proceeds quickly.
Through theoretical and experimental results, we demonstrate how the neural network architecture affects gradient confusion, and thus the efficiency of training.
Our results show that, for popular initialization techniques, increasing the \emph{width} of neural networks leads to \emph{lower} gradient confusion, and thus faster model training. On the other hand, increasing the \emph{depth} of neural networks has the opposite effect. Our results indicate that alternate initialization techniques or networks using both batch normalization and skip connections help reduce the training burden of very deep networks.




\end{abstract}

\section{Introduction}

Stochastic gradient descent (SGD) \citep{robbins1951stochastic} and its variants with momentum have become the standard optimization routine for neural networks due to their fast convergence and good generalization properties \citep{wilson2017marginal, sutskever2013importance, smith2020generalization}. 
%
Yet the convergence behavior of SGD on neural networks still eludes full theoretical understanding. 
Furthermore, it is not well understood how design choices on neural network architecture affect training performance. In this paper, we make progress on these open questions.


Classical stochastic optimization theory predicts that the learning rate of SGD needs to decrease over time for convergence to be guaranteed to the minimizer of a convex function \citep{shamir2013stochastic, bertsekas2011incremental}.  
For strongly convex functions for example, such results show that a decreasing learning rate schedule of $O(1/k)$ is required to guarantee convergence to within $\epsilon$-accuracy of the minimizer in $O(1/\epsilon)$ iterations, where $k$ denotes the iteration number. Such decay schemes, however, typically lead to poor performance on standard neural network problems.
%

Neural networks operate in a regime where the number of parameters is much larger than the number of training data.  In this ``over-parameterized'' regime, SGD seems to converge quickly with constant learning rates. Most neural network practitioners use a constant learning rate for the majority of training (with exponential decay only towards the end of training) without seeing the method stall \citep{krizhevsky2012imagenet, simonyan2014very, he2016deep, zagoruyko2016wide}.
With constant learning rates, theoretical guarantees show that SGD converges quickly to a neighborhood of the minimizer,
 but then reaches a \emph{noise floor} beyond which it stops converging; this noise floor depends on the learning rate and the variance of the gradients \citep{moulines2011non, needell2014stochastic}. 
Recent results show that convergence without a noise floor is possible without decaying the learning rate, provided the model is strongly convex and overfitting occurs \citep{schmidt2013fast, ma2017power, vaswani2018fast}.

While these results do give important insights,
 they do not fully explain the dynamics of SGD on neural networks, and how they relate to over-parameterization.
%
%
%
%
%
Furthermore, training performance is strongly influenced by network architecture. It is common knowledge among practitioners that, under standard Gaussian initialization techniques \citep{glorot2010understanding, he2015delving}, deeper networks train slower \citep{bengio1994learning,saxe2013exact}. This has led to several innovations over the years to get deeper nets to train more easily, such as careful initialization strategies \citep{xiao2018dynamical}, residual connections \citep{he2016deep}, and normalization schemes like batch normalization \citep{ioffe2015batch}. Furthermore, there is evidence to indicate that wider networks are faster to train \citep{zagoruyko2016wide, nguyen2017loss}, and recent theoretical results suggest that the dynamics of SGD simplify considerably for very wide networks \citep{jacot2018neural, lee2019wide}. 
In this paper, we make progress on theoretically understanding these empirical observations and unifying existing theoretical results. To this end, we identify and analyze a condition that enables us to establish direct relationships between layer width, network depth, problem dimensionality, initialization schemes, and trainability and SGD dynamics for over-parameterized networks.
 
\textbf{Our contributions.} 
Typical neural networks are \emph{over-parameterized} (\emph{i.e.,} the number of parameters exceed the number of training points). In this paper, we ask how this over-parameterization, and more specifically the network architecture, affects the trainability of neural networks and the dynamics of SGD. Through extensive theoretical and experimental studies, we show how layer width, network depth, initialization schemes, and other architecture choices affect the dynamics. 
The following are our main contributions.\footnote{To keep the main text of the paper concise, all proofs and several additional experimental results are delegated to the appendix.}
%
%
	\begin{itemize}
	\item We identify a condition, termed \emph{gradient confusion}, that impacts the convergence properties of SGD on over-parameterized models. We prove that high gradient confusion may lead to slower convergence, while convergence is accelerated (and could be faster than predicted by existing theory) if confusion is low, indicating a regime where constant learning rates work well in practice (sections \ref{sec:prelim} and \ref{sec:convergence}). We use the gradient confusion condition to study the effect of various architecture choices on trainability and convergence.
	\item We study the effect of neural network architecture on gradient confusion at standard Gaussian initialization schemes (section \ref{sec:initialization}), and prove (a) gradient confusion increases as the network depth increases, and (b) wider networks have lower gradient confusion. These indicate that deeper networks are more difficult to train and wider networks can improve trainability of networks. Directly analyzing the gradient confusion bound enables us to derive results on the effect of depth and width, without requiring restrictive assumptions like large layer widths \citep{du2018gradient, allen2018convergence}. Our results hold for a large class of neural networks with different non-linear activations and loss-functions. In section \ref{sec:general_depth}, we present a more general result on the effect of depth on the trainability of networks without assuming the network is at initialization.
	
	\item We prove that for linear neural networks, gradient confusion is \emph{independent of depth} when using orthogonal initialization schemes (section \ref{sec:orthInit}) \citep{saxe2013exact, schoenholz2016deep}. This indicates a way forward in developing techniques for training deeper models.
	
	\item We test our theoretical predictions using extensive experiments on wide residual networks (WRNs) \citep{zagoruyko2016wide}, convolutional networks (CNNs) and multi-layer perceptrons (MLPs) for image classification tasks on CIFAR-10, CIFAR-100 and MNIST (section \ref{sec:experiments} and appendix~\ref{app:extra_results}). We find that our theoretical results consistently hold across all our experiments. We further show that the combination of batch normalization and skip connections in residual networks help lower gradient confusion, thus indicating why SGD can efficiently train deep neural networks that employ such techniques.
	\end{itemize}

\section{Gradient confusion}
\label{sec:prelim}

\xhdr{Notations.} We denote vectors in bold lower-case and matrices  in bold upper-case. We use $(\vec{W})_{i, j}$ to indicate the $(i, j)$ cell in matrix $\vec{W}$ and $(\vec{W})_i$ for the $i^{\text{th}}$ row of matrix $\vec{W}$. $\|\vec{W}\|$ denotes the operator norm of $\vec{W}$. $[N]$ denotes $\{1, 2, \ldots, N\}$ and $[N]_{0}$ denotes  $\{0, 1, \ldots, N\}$.

	\textbf{Preliminaries.} Given $N$ training points (specified by the corresponding loss functions $\{f_i\}_{i \in [N]}$), we use SGD to solve empirical risk minimization problems of the form,
		\begin{align}
		\textstyle
		\min_{\vec{w} \in \mathbb{R}^d} F(\vec{w}) := \min_{\vec{w} \in \mathbb{R}^d} \frac{1}{N} \sum_{i=1}^N f_i(\vec{w}), \label{eq:obj_fn}
		\end{align}
using the following iterative update rule for $T$ rounds: 		
\begin{align}
\vec{w}_{k+1} = \vec{w}_k - \alpha \nabla \tilde f_k (\vec{w}_k). \label{eq:sgd}
\end{align}
Here $\alpha$ is the learning rate and $\tilde f_k$ is a function chosen uniformly at random from $\{f_i\}_{i \in [N]}$ at iteration $k \in [T]$. 
$\vec{w}\opt = \arg\min_\vec{w} F(\vec{w})$ denotes the optimal solution.

\textbf{Gradient confusion.}  SGD works by iteratively selecting a random function $\tilde f_k$, and modifying the parameters to move in the direction of the negative gradient of $ \tilde f_k$. 
%
  It may happen that the selected gradient $\nabla \tilde f_k$ is negatively correlated with the gradient of another term $\nabla f_j.$ 
  When the gradients of different mini-batches are negatively correlated, the objective terms disagree on which direction the parameters should move, and we say that there is {\em gradient confusion}.\footnote{Gradient confusion is related to both gradient variance and gradient diversity \citep{yin2017gradient}, but with important differences, which we discuss in section \ref{sec:related}. We also discuss alternate definitions of the gradient confusion condition in section \ref{sec:altdef}.}
 %
%
%
\begin{definition}
\label{ass:cosine}
A set of objective functions $\{f_i\}_{i \in [N]}$ has gradient confusion bound $\eta \ge 0$ if the pair-wise inner products between gradients satisfy, for a fixed $\vec{w} \in \mathbb{R}^d$,
\begin{align}  \label{confusion}
\langle \nabla f_i (\vec{w}) , \nabla f_j (\vec{w}) \rangle  \ge - \eta,  \, \, \forall i \neq j \in [N].
\end{align}
\end{definition} 

%
\begin{figure}[t]
\centering
\includegraphics[width=0.45\textwidth]{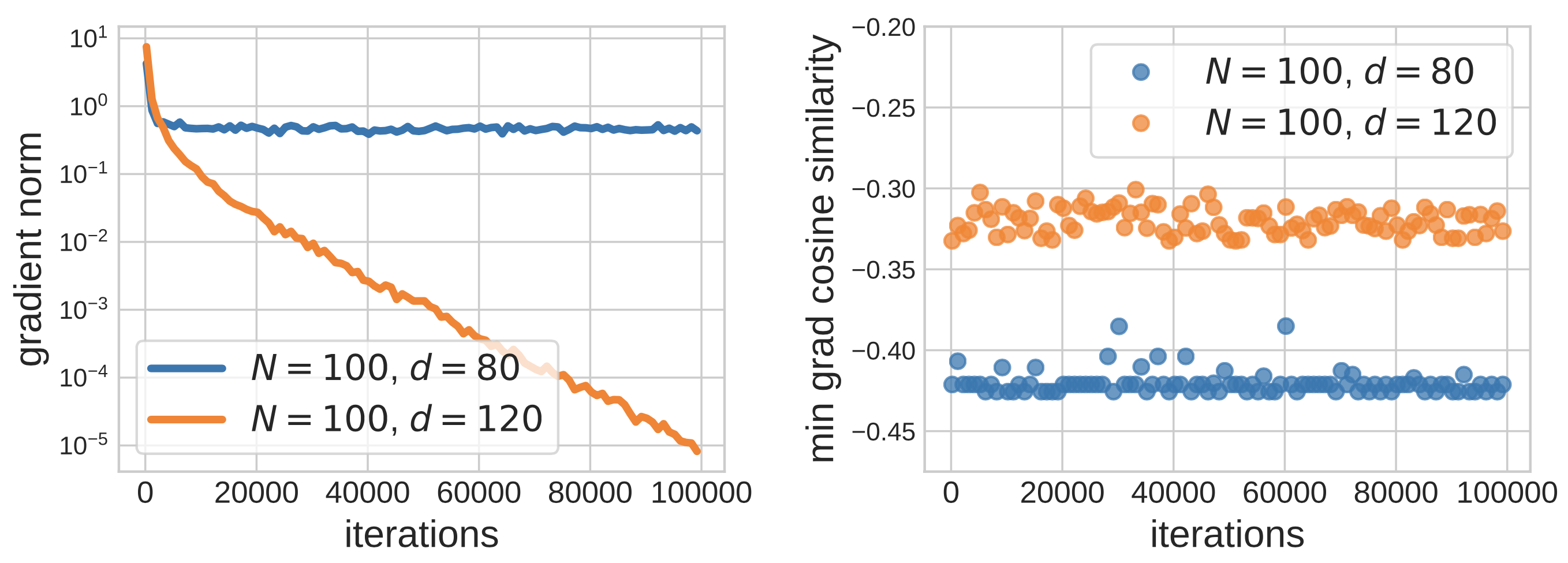}
\caption{Linear regression on an over-parameterized ($d = 120$) and under-parameterized ($d = 80$) model with $N = 100$ samples generated randomly from a Gaussian, trained using SGD with minibatch size 1. Plots are averaged over 3 independent runs. Gradient cosine similarities were calculated over all pairs of gradients. 
}
\label{fig:linreg_simulation}
\end{figure}
\textbf{Observations in simplified settings.}
SGD converges fast when gradient confusion is low along its path.  To see why, consider the case of training a logistic regression model on a dataset with \emph{orthogonal} vectors. We have $f_i(\vec{w})=\mathcal{L}(y_i \vec{x}_i^\top \vec{w}),$ where $\mathcal{L}:\reals\to\reals$ is the logistic loss, $\{\vec{x}_i\}_{i \in [N]}$ is a set of orthogonal training vectors, and $y_i\in\{-1,1\}$ is the label for $\vec{x}_i$. We then have $\nabla f_i(\vec{w}) = \zeta_i \vec{x}_i,$ where $\zeta_i=y_i  \mathcal{L}'(y_i\cdot  \vec{x}_i^\top  \vec{w}).$  Note that the gradient confusion is 0 since $\la \nabla f_i(\vec{w}), \nabla f_j(\vec{w})  \ra = \zeta_i\zeta_j\la  \vec{x}_i , \vec{x}_j  \ra = 0$, $\forall i, j \in [N]$ and $i \neq j$.  Thus, an update in the gradient direction $f_i$ has {\em no} effect on the loss value of $f_j$ for $i\neq j$.  In this case, SGD decouples into (deterministic) gradient descent on each objective term separately, and we can expect to see the fast convergence rates attained by gradient descent. 

Can we expect a problem to have low gradient confusion in practice?  
From the logistic regression problem, we have:
$
| \la \nabla f_i(\vec{w}) , \nabla f_j(\vec{w}) \ra | =  |\la \vec{x}_i, \vec{x}_j \ra|  \cdot | \zeta_i\zeta_j |.
$
This inner product is expected to be small for all $\vec{w}$; the logistic loss satisfies $|\zeta_i \zeta_j| <1$,  and for fixed $N$ the quantity $\max_{ij} |\la \vec{x}_j, \vec{x}_i\ra| $  is $O(1/\sqrt{d})$ whenever $\{\vec{x}_i\}$ are randomly sampled from a sphere (see lemma \ref{lem:orthovec} for the formal statement).\footnote{Generally, this is true whenever $\vec{x}_i = \frac{1}{\sqrt{d}} \vec{y}_i,$ where $\vec{y}_i$ is an isotropic random vector \citep{vershynin2016high}.}
Thus, we would expect a random linear model to have nearly orthogonal gradients, when the number of parameters is "large" and the number of training data is "small", i.e., when the model is over-parameterized. This is further evidenced by a toy example in figure \ref{fig:linreg_simulation}, where we show a slightly over-parameterized linear regression model can have much faster convergence rates, as well as lower gradient confusion.
One can prove a similar result for problems that have random and low-rank Hessians, which suggests that one might expect gradient to be small near the minimizer for many standard neural nets (see appendix \ref{app:hessians} for more discussion).

The above arguments are a bit simplistic, considering toy scenarios and ignoring issues like the effect of network structure. In the following sections, we rigorously analyze the effect of gradient confusion on the speed of convergence on non-convex problems, and the effect of width and depth of the neural network architecture on the gradient confusion.



%

\section{SGD is fast when gradient confusion is low}
\label{sec:convergence}

 Several prior papers have analyzed the convergence rates of constant learning rate SGD \citep{nedic2001convergence, moulines2011non, needell2014stochastic}. These results show that for strongly convex and Lipschitz smooth functions, SGD with a constant learning rate $\alpha$ converges \emph{linearly} to a neighborhood of the minimizer. The noise floor it converges to depends on the learning rate $\alpha$ and the variance of the gradients at the minimizer, i.e., $\expect_i \| \nabla f_i(\vw\opt) \|^2$. To guarantee convergence to $\epsilon$-accuracy in such a setting, the learning rate needs to be small, i.e., $\alpha = O(\epsilon)$, and the method requires $T = O(1/\epsilon)$ iterations. Some more recent results show convergence of constant learning rate SGD without a noise floor and without small step sizes for models that can completely fit the data \citep{schmidt2013fast, ma2017power, vaswani2018fast}. 

Gradient confusion is related to these results. Cauchy-Schwarz inequality implies that if $\expect_i \| \nabla f_i(\vw\opt) \|^2 = O(\epsilon)$, then $\expect_{i, j} |\la \nabla f_i(\vw\opt), \nabla f_j(\vw\opt)\ra| = O(\epsilon)$, $\forall i, j$. Thus the gradient confusion at the minimizer is small when the variance of the gradients at the minimizer is small. Further note that when the variance of the gradients at the minimizer is $O(\epsilon)$, a direct application of the results in \citet{moulines2011non} and \citet{needell2014stochastic} shows that constant learning rate SGD has fast convergence to $\epsilon$-accuracy in $T=O(\log(1/\epsilon))$ iterations, without the learning rate needing to be small.
Generally however, bounded gradient confusion does not provide a bound on the variance of the gradients (see section \ref{sec:related}). Thus, it is instructive to derive convergence bounds of SGD explicitly in terms of the gradient confusion to properly understand its effect.

We first consider functions satisfying the Polyak-Lojasiewicz (PL) inequality \citep{lojasiewicz1965ensembles}, a condition related to, but weaker than, strong convexity, and used in recent work \cite{karimi2016linear, de2017automated}. We provide bounds on the rate of convergence in terms of the optimality gap.  
We start with two standard assumptions. 
\begin{itemize}
\item[\textbf{(A1)}]
$\{f_i\}_{i \in [N]}$ are \emph{Lipschitz smooth}:\\ $f_i(\vec{w}') \leq f_i(\vec{w}) + \nabla f_i(\vec{w})^\top(\vec{w}'-\vec{w}) + \frac{L}{2} \|\vec{w}'-\vec{w}\|^2.$ 
\item[\textbf{(A2)}]
$\{f_i\}_{i \in [N]}$ satisfy the \emph{PL inequality}:\\
$ \frac{1}{2} \| \nabla f_i (\vec{w}) \|^2 \geq \mu (f_i (\vec{w}) - f_i\opt),$
 $f_i\opt = \min_\vec{w} f_i(\vec{w})$. 
\end{itemize}
%
%
%
We now state a convergence result of constant learning rate SGD in terms of the gradient confusion. 
\begin{restatable}{theorem}{linearConvergence}
\label{thm:cosine}
If the objective function satisfies (A1) 
and (A2), 
 and has gradient confusion $\eta$,
 SGD converges linearly to a neighborhood of the minima of problem \eqref{eq:obj_fn} as: 
 $$\textstyle \expect[F(\vec{w}_T) - F\opt] \le \rho^T (F(\vec{w}_0) - F\opt) + \frac{\alpha \eta}{1 - \rho},$$
  where $\alpha < \frac{2}{NL}$, $\rho = 1 - \frac{2\mu}{N} \big(\alpha - \frac{NL\alpha^2}{2} \big)$, $F\opt = \min_{\vw} F(\vw)$ and $\vec{w}_0$ is the initialized weights.
\end{restatable}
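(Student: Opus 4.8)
The plan is to track the one-step decrease of the full objective $F$ under the SGD update and to control the only cross term that appears using the gradient confusion bound. First I would invoke the Lipschitz smoothness (A1) — since $F$ is an average of $L$-smooth functions it is itself $L$-smooth — at the pair $(\vec{w}_k, \vec{w}_{k+1})$ and substitute the update $\vec{w}_{k+1} = \vec{w}_k - \alpha \nabla \tilde f_k(\vec{w}_k)$. This yields the per-step bound
$$F(\vec{w}_{k+1}) \le F(\vec{w}_k) - \alpha \langle \nabla F(\vec{w}_k), \nabla \tilde f_k(\vec{w}_k)\rangle + \tfrac{L\alpha^2}{2}\|\nabla \tilde f_k(\vec{w}_k)\|^2.$$

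Next, writing $\nabla F = \frac{1}{N}\sum_j \nabla f_j$ and letting $f_{i_k} := \tilde f_k$, I would split the inner product into the diagonal term $\|\nabla f_{i_k}\|^2$ and the $N-1$ off-diagonal terms $\langle \nabla f_j, \nabla f_{i_k}\rangle$. Each off-diagonal term is at least $-\eta$ by Definition~\ref{ass:cosine}, so $\langle \nabla F, \nabla f_{i_k}\rangle \ge \frac{1}{N}\|\nabla f_{i_k}\|^2 - \frac{(N-1)\eta}{N}$. Plugging this in collects a $\|\nabla f_{i_k}\|^2$ term with coefficient $\frac{L\alpha^2}{2} - \frac{\alpha}{N}$; this is nonpositive (i.e.\ the term is a genuine decrease) exactly when $\alpha < \frac{2}{NL}$, which is where the step-size restriction in the statement originates.

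Then I would lower bound $\|\nabla f_{i_k}\|^2 \ge 2\mu\,(f_{i_k}(\vec{w}_k) - f_{i_k}\opt)$ via the PL inequality (A2), take the expectation over the uniformly random index $i_k$ (using $\expect_{i_k} f_{i_k}(\vec{w}_k) = F(\vec{w}_k)$), and set $c := \frac{2\mu}{N}\big(\alpha - \frac{NL\alpha^2}{2}\big)$ so that $\rho = 1-c$. This produces the contraction $\expect[F(\vec{w}_{k+1}) - F\opt] \le \rho\,\expect[F(\vec{w}_k) - F\opt] + \frac{(N-1)\alpha\eta}{N}$. Finally, unrolling this linear recursion over $T$ steps, summing the geometric series $\sum_t \rho^t \le \frac{1}{1-\rho}$, and bounding $\frac{N-1}{N} \le 1$ gives the claimed result.

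The main obstacle is a bookkeeping subtlety in the expectation step: PL is stated per function and yields the gap $f_{i_k}(\vec{w}_k) - f_{i_k}\opt$ relative to the \emph{individual} minima $f_i\opt$, whereas the target bound is expressed relative to the \emph{global} minimum $F\opt$. I would handle this by setting $\bar f\opt := \frac{1}{N}\sum_i f_i\opt$ and noting that $\bar f\opt \le F\opt$ (the average of the individual minima is no larger than the minimum of the average), so replacing $\bar f\opt$ by $F\opt$ only weakens the bound in the favorable direction and yields the clean recursion. A secondary check is verifying $\rho \in [0,1)$ so the geometric series converges: nonnegativity follows because $c$ is maximized at $\alpha = \frac{1}{NL}$, where $c = \frac{\mu}{N^2 L} \le 1$ under the usual $\mu \le L$.
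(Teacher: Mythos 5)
Your proposal is correct and follows essentially the same route as the paper's own proof: applying smoothness of $F$ to the SGD step, splitting $\langle \nabla F(\vec{w}_k), \nabla \tilde f_k(\vec{w}_k)\rangle$ into the diagonal term and $N-1$ cross terms each bounded below by $-\eta$, invoking the PL inequality per-function together with $\expect_i [f_i\opt] \le F\opt$ to obtain the contraction with factor $\rho$, and unrolling the recursion with the geometric series bound. Your only addition is the explicit verification that $\rho \in [0,1)$, which the paper leaves implicit.
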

%
%
%
This result shows that SGD converges {\em linearly} to a neighborhood of a minimizer, and the size of this neighborhood depends on the level of gradient confusion.  
When the gradient confusion is small, i.e., $\eta = O(\epsilon)$, SGD has fast convergence to $O(\epsilon)$-accuracy in $T=O(\log(1/\epsilon))$ iterations, without requiring the learning rate to be vanishingly small. We now extend this to general smooth functions. 

 \begin{restatable}{theorem}{boundedVariance}
\label{thm:nonconvex}
If the objective satisfies (A1)
and has gradient confusion $\eta$, then SGD converges to a neighborhood of a stationary point of problem \eqref{eq:obj_fn} as: 
$$\textstyle \min_{k=1, \dots, T} \expect \| \nabla F(\vec{w}_k)  \|^2  \le   \frac{ \rho(F(\vec{w}_1) - F\opt) }{T} + \rho \eta,$$
 for $\alpha < \frac{2}{NL}$, $\rho = \frac{2N}{2 - NL\alpha }$, and $F\opt = \min_{\vw} F(\vw)$.
\end{restatable}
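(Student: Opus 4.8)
The plan is to run the standard non-convex SGD descent analysis, but to use the gradient confusion bound \eqref{confusion} as the key ingredient for controlling the second moment of the stochastic gradient. First I would observe that since each $f_i$ satisfies (A1), so does their average $F = \frac1N\sum_i f_i$: the quadratic upper bound is preserved under convex combinations because $\frac1N\sum_i \nabla f_i = \nabla F$ and the curvature term is common to all $i$. Hence $F$ obeys the descent inequality $F(\vec{w}_{k+1}) \le F(\vec{w}_k) + \langle \nabla F(\vec{w}_k), \vec{w}_{k+1}-\vec{w}_k\rangle + \frac{L}{2}\|\vec{w}_{k+1}-\vec{w}_k\|^2$ along the iterates.

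Substituting the update $\vec{w}_{k+1}-\vec{w}_k = -\alpha \nabla \tilde f_k(\vec{w}_k)$ and taking the conditional expectation over the uniformly random choice of $\tilde f_k$ given $\vec{w}_k$, the cross term collapses to $-\alpha\|\nabla F(\vec{w}_k)\|^2$ because $\expect[\nabla \tilde f_k(\vec{w}_k) \mid \vec{w}_k] = \nabla F(\vec{w}_k)$, leaving
$$\expect[F(\vec{w}_{k+1})\mid \vec{w}_k] \le F(\vec{w}_k) - \alpha\|\nabla F(\vec{w}_k)\|^2 + \frac{L\alpha^2}{2N}\sum_{i=1}^N\|\nabla f_i(\vec{w}_k)\|^2.$$

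The crux of the argument — and the step I expect to be the main obstacle — is bounding the second-moment term $\frac1N\sum_i\|\nabla f_i\|^2$, since gradient confusion controls only pairwise inner products, not the individual gradient norms appearing here. I would expand $\|\nabla F(\vec{w}_k)\|^2 = \frac{1}{N^2}\big(\sum_i \|\nabla f_i\|^2 + \sum_{i\ne j}\langle \nabla f_i,\nabla f_j\rangle\big)$ and invoke \eqref{confusion} to lower-bound each of the $N(N-1)$ off-diagonal terms by $-\eta$. Rearranging gives $\frac1N\sum_i\|\nabla f_i(\vec{w}_k)\|^2 \le N\|\nabla F(\vec{w}_k)\|^2 + (N-1)\eta$, trading the unknown second moment for a quantity in $\|\nabla F\|^2$ and $\eta$, at the cost of a factor $N$ that is precisely what forces the step-size restriction $\alpha < 2/(NL)$.

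Plugging this in yields the one-step inequality $\expect[F(\vec{w}_{k+1})\mid \vec{w}_k] \le F(\vec{w}_k) - \alpha\big(1 - \frac{NL\alpha}{2}\big)\|\nabla F(\vec{w}_k)\|^2 + \frac{L\alpha^2(N-1)}{2}\eta$, whose coefficient of $\|\nabla F\|^2$ is positive exactly when $\alpha < 2/(NL)$. I would then isolate $\|\nabla F(\vec{w}_k)\|^2$, take total expectations, and sum over $k = 1,\dots,T$ so the function-value terms telescope, using $\expect[F(\vec{w}_{T+1})] \ge F\opt$. Finally, lower-bounding the sum by $T\min_{k}\expect\|\nabla F(\vec{w}_k)\|^2$ and dividing through delivers the stated bound, with $\rho = \frac{2N}{2-NL\alpha}$ collecting the remaining $\alpha$- and $N$-dependent factors; matching these constants to the precise form in the statement is routine algebra.
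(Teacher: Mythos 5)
Your proof is correct, but it deploys the gradient confusion bound at a genuinely different place than the paper does. The paper works pathwise: it decomposes $\nabla F(\vec{w}_k) = \tfrac{1}{N}\sum_i \nabla f_i(\vec{w}_k)$ inside the inner product $\langle \nabla F(\vec{w}_k), \nabla \tilde f_k(\vec{w}_k)\rangle$ and applies confusion to the $N-1$ cross terms $\langle \nabla f_i, \nabla \tilde f_k\rangle$ with $i \neq \tilde i$, leaving a deterministic descent inequality with the term $-\bigl(\tfrac{\alpha}{N} - \tfrac{L\alpha^2}{2}\bigr)\|\nabla \tilde f_k(\vec{w}_k)\|^2 + \alpha\eta$; only then does it take expectations and lower-bound $\expect\|\nabla \tilde f_k(\vec{w}_k)\|^2 \ge \expect\|\nabla F(\vec{w}_k)\|^2$ via the bias--variance decomposition. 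You instead take the conditional expectation first, so the cross term collapses exactly to $-\alpha\|\nabla F(\vec{w}_k)\|^2$ with no confusion needed there, and you then spend the confusion bound where the paper spends nothing: controlling the second moment through $\tfrac{1}{N}\sum_i \|\nabla f_i\|^2 \le N\|\nabla F\|^2 + (N-1)\eta$. Both routes rest on the same two facts (each off-diagonal pair contributes at least $-\eta$, and $\alpha < 2/(NL)$ keeps the squared-gradient coefficient positive) and both telescope identically, but your constants come out slightly tighter: your noise term is $\tfrac{L\alpha(N-1)}{2-NL\alpha}\eta$ versus the paper's $\rho\eta$, and your leading coefficient is $\tfrac{2}{\alpha(2-NL\alpha)}$ versus the paper's $\tfrac{2N}{\alpha(2-NL\alpha)}$, a factor of $N$ smaller. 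One caveat on your closing claim that matching constants is ``routine algebra'': your leading coefficient equals $\tfrac{\rho}{N\alpha}$, which is below the stated $\rho$ only when $\alpha \ge 1/N$. However, the paper's own proof produces $\tfrac{\rho}{\alpha}$ in that position, so this mismatch with the stated theorem (an extra factor of $1/\alpha$ in the $1/T$ term) is an inconsistency internal to the paper, not a defect of your argument.
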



Thus, as long as $\eta = O(1/T)$, SGD has fast $O(1/T)$ convergence on smooth non-convex functions.
Theorems \ref{thm:cosine} and \ref{thm:nonconvex} predict an initial phase of optimization with fast convergence to the neighborhood of a minimizer or a stationary point.  This behavior is often observed when optimizing neural nets \citep{darken1992towards, sutskever2013importance},
where a constant learning rate reaches a high level of accuracy on the model. As we show in subsequent sections, this is expected since for neural networks typically used, the gradient confusion is expected to be low.
See section \ref{sec:related} for more discussion on the above results and how they relate to previous work. 
%
%
We stress that our goal is not to study convergence rates per se, nor is it to prove state-of-the-art rate bounds for this class of problems. Rather, we show the direct effect that the gradient confusion bound has on the convergence rate and the noise floor for constant learning rate SGD. As we show in the following sections, this new perspective in terms of the gradient confusion helps us more directly understand how neural network architecture design affects SGD dynamics and why.

\section{Effect of neural network architecture at Gaussian initializations}
\label{sec:initialization}
%
%
To draw a connection between neural network architecture and training performance, we analyze gradient confusion for generic (i.e., random) model problems using methods from high-dimensional probability. In this section, we analyze the effect of neural network architecture at the beginning of training, when using standard Gaussian initialization techniques. Analyzing these models at initialization is important to understand which architectures are more easily trainable than others. Our results cover a wide range of scenarios compared to prior work, require minimal additional assumptions, and hold for a large family of neural networks with different non-linear activation functions and loss-functions. In particular, our results hold for fully connected networks (and can be extended to convolutional networks) with the square-loss and logistic-loss functions, and commonly used non-linear activations such as sigmoid, tanh and ReLU. We consider both the case where the input data is arbitrary but bounded (theorem~\ref{thm:fixedData}, part 1), as well as where the input data is randomly drawn from the surface of a unit sphere (theorem~\ref{thm:fixedData}, part 2).

\xhdr{Setting.} We consider training data $\mathcal{D} = \{ (\vec{x}_i, \cC(\vec{x}_i))\}_{i \in [N]},$ with labeling function $\cC: \mathbb{R}^d \rightarrow [-1, 1]$. For some of our results, we consider that the data points $\{\vec{x}_i\}$ are drawn uniformly at random from the surface of a $d$-dimensional unit sphere. The labeling function satisfies $|\cC(\vec{x})| \le 1$  and $\|\nabla_{\vec{x}} \cC(\vec{x})\|_2 \le 1$ for $\|\vec x\| \le 1.$ Note that this automatically holds for every model considered in this paper where the labeling function is \emph{realizable} (i.e., where the model can express the labeling function using its parameters). More generally, this assumes a Lipschitz condition on the labels (i.e., the labels don't change too quickly with the inputs). 

We consider two loss-functions: square-loss for regression and logistic loss for classification. The square-loss function is defined as $f_i(\vec{w}) = \frac{1}{2} (\cC(\vec{x}_i) - g_{\vec w}(\vec{x}_i))^2$ and the logistic function is defined as $f_i(\vec{w}) = \log(1+\exp(-\cC(\vec{x}_i)g_{\vec w}(\vec{x}_i)))$. 
Here, $g_{\vec w}: \mathbb{R}^d \rightarrow  \mathbb{R}$ denotes the parameterized function we fit to the training data and $f_i(\vec{w})$ denotes the loss-function of hypothesis $g_{\vec w}$ on data point $\vec{x}_i$.
	


	%
	

Let $\vec{W}_0 \in \mathbb{R}^{\ell_1 \times d}$ and $\{ \vec{W}_p \}_{p \in [\beta]}$ where $\vec{W}_p \in \mathbb{R}^{\ell_p \times \ell_{p-1}}$ are weight matrices. Let $\vec{W}$ denote the tuple $(\vec{W}_p)_{p \in [\beta]_0}$. Define $\ell := \max_{p \in [\beta]} \ell_p$ to be the \emph{width} and $\beta$ to be the \emph{depth} of the network. Then, the model $g_{\vec{W}}$ is defined as 
		\begin{equation}
			\label{eq:NNmodel}
			g_{\vec{W}}(\vec{x}) := \sigma(\vec{W}_{\beta} \sigma(\vec{W}_{\beta-1} \ldots \sigma(\vec{W}_1 \sigma(\vec{W}_0 \vec{x}))\ldots )), \nonumber
		\end{equation}
		where $\sigma$ denotes the non-linear activation function applied point-wise to its arguments. We assume that the activation is given by a function $\sigma(x)$ with the following properties. 
		\begin{itemize}
			\item \textbf{(P1) Boundedness:} $|\sigma(x)| \leq 1$ for $x \in [-1, 1]$.
			\item \textbf{(P2) Bounded differentials:}  Let $\sigma'(x)$ and $\sigma''(x)$ denote the first and second sub-differentials respectively. Then, $|\sigma'(x)| \leq 1$ and $ |\sigma''(x)| \leq 1$ for all $x \in [-1, 1]$.
		\end{itemize}
		When $\|\vec{x}\| \leq 1$, activation functions such as \emph{sigmoid}, \emph{tanh}, \emph{softmax} and \emph{ReLU} satisfy these requirements.

Furthermore, in this section, we consider the following Gaussian weight initialization strategy. 
	\begin{strategy}\label{strat:weights}
		 $\vec{W}_0 \in \mathbb{R}^{\ell \times d}$ has independent $\mathcal{N}(0, \frac{1}{d})$ entries. For every $p \in [\beta]$, the weights $\vec{W}_p \in \mathbb{R}^{\ell_p \times \ell_{p-1}}$ have independent $\mathcal{N}\left( 0, \frac{1}{\kappa \ell_{p-1}} \right)$ entries for some constant $\kappa > 0$. 
	\end{strategy}
 This initialization strategy with different settings of $\kappa$ are used almost universally for neural networks \citep{glorot2010understanding,lecun2012efficient,he2015delving}. For instance, typically $\kappa = \frac{1}{2}$ when ReLU activations are used, and $\kappa = 1$ when tanh activations are used.

\xhdr{Main result.}
The following theorem shows how the width $\ell := \max_{p \in [\beta]} \ell_p$ and the depth $\beta$ affect the gradient confusion condition at standard initializations. We show that \emph{as width increases (for fixed depth) or depth decreases (for fixed width) the probability that the gradient confusion bound (equation \ref{confusion}) holds increases}. Thus, as the depth increases (with fixed width), training a model becomes harder, while as the width increases (with fixed depth), training a model becomes easier. Furthermore, note that this result also implies that training very deep \emph{linear} neural networks (with identity activation functions) with standard Gaussian initializations is hard. Throughout the paper, we define the parameter $\zeta_0:= 2 \sqrt{\beta}$. See the appendix (Lemma~\ref{lem:lossPropNeural}) for a more careful definition of this quantity. 
			\begin{restatable}{theorem}{NeuralNetsFixedData}
				\label{thm:fixedData}
				Let $\vec{W}_0, \vec{W}_1, \ldots, \vec{W}_{\beta}$ be weight matrices chosen according to strategy~\ref{strat:weights}. There exists fixed constants $c_1, c_2 >0$ such that we have the following.
				\begin{enumerate}
					\item Consider a fixed but arbitrary dataset $\vec{x}_1, \vec{x}_2, \ldots, \vec{x}_N$ with $\| \vec{x}_i \| \leq 1$ for every $i \in [N]$. For $\eta > 4$, the gradient confusion bound in equation \ref{confusion} holds with probability at least $$\textstyle 1- \beta \exp\left( -c_1 \kappa^2 \ell^2 \right) - N^2 \exp\left( \frac{-c \ell^2 \beta (\eta-4)^2}{64 \zeta_0^4 (\beta+2)^4} \right).$$
					\item If the dataset $\{ \vec{x}_i \} _{i \in [N]}$ is such that each $\vec{x}_i$ is an i.i.d. sample from the surface of $d$-dimensional unit sphere, then for every $\eta > 0$ the gradient confusion bound in equation \ref{confusion} holds with probability at least $$\textstyle 1- \beta \exp\left( -c_1 \kappa^2 \ell^2 \right) - N^2 \exp\left( \frac{-c_2 (\ell d + \ell^2 \beta) \eta^2}{16 \zeta_0^4 (\beta+2)^4} \right).$$
				\end{enumerate}
			\end{restatable}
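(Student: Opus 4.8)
The plan is to reduce the gradient-confusion inner product to a quantity involving only the network output $g$, and then to bound it via a layerwise decomposition combined with Gaussian/sphere concentration. The first step uses the chain rule for both losses: writing $\mathcal{L}$ for either the square or logistic loss, $\nabla f_i(\vec{W}) = a_i\,\nabla g_{\vec{W}}(\vec{x}_i)$ where $a_i$ is the scalar loss derivative. Using $|\mathcal{C}(\vec{x}_i)|\le 1$ together with property (P1) (so that $|g_{\vec{W}}(\vec{x}_i)|\le 1$), one gets $|a_i|\le 2$ in both cases, hence $\langle \nabla f_i,\nabla f_j\rangle = a_i a_j\,\langle \nabla g_i,\nabla g_j\rangle$ with $|a_i a_j|\le 4$. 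Thus the confusion bound \eqref{confusion} reduces to controlling $\langle \nabla g_i,\nabla g_j\rangle$; after isolating a deterministic, non-concentrating part bounded in magnitude by the constant $4$ (coming from $|a_i a_j|\le 4$ together with the fixed first-layer input correlation in part~1), the tail event is governed by $(\eta-4)$ in part~1, and by $\eta$ in part~2 where data randomness removes that part.

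Next I would decompose the gradient by layer. Backpropagation gives $\nabla_{\vec{W}_p} g = \delta_p\, h_{p-1}^\top$, a rank-one matrix, where $h_{p-1}$ is the forward activation feeding layer $p$ (with $h_{-1}=\vec{x}$) and $\delta_p=\partial g/\partial z_p$ is the backward sensitivity obeying $\delta_\beta=\sigma'(z_\beta)$ and $\delta_p=\mathrm{diag}(\sigma'(z_p))\,\vec{W}_{p+1}^\top\delta_{p+1}$. Since the Frobenius inner product of rank-one matrices factorizes,
$$\langle \nabla g_i,\nabla g_j\rangle=\sum_{p=0}^{\beta}\langle \delta_p^i,\delta_p^j\rangle\,\langle h_{p-1}^i,h_{p-1}^j\rangle.$$
The first job is then to pin down a single high-probability regularity event on which all forward norms $\|h_p\|$ and backward norms $\|\delta_p\|$ are simultaneously controlled. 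Strategy~\ref{strat:weights} is scaled precisely so that the forward and backward signals are approximately norm-preserving, yielding $\|\nabla g\|\le\zeta_0=O(\sqrt{\beta})$ with probability at least $1-\beta\exp(-c_1\kappa^2\ell^2)$ after a union bound over the $\beta$ layers; this is the content of Lemma~\ref{lem:lossPropNeural} and supplies both the first probability term and the definition of $\zeta_0$. On this event the map from weights (part~1) or from inputs (part~2) to $\langle \nabla g_i,\nabla g_j\rangle$ is Lipschitz, with Lipschitz constant controlled by $\zeta_0$ and polynomial factors in $\beta$ (the source of the $(\beta+2)^4$ term).

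Finally I would apply concentration conditional on the regularity event. The gradient is effectively a vector of dimension $\ell d+\ell^2\beta$ (an $\ell\times d$ first layer plus $\beta$ hidden blocks of size $\approx\ell\times\ell$) with norm at most $\zeta_0$, so its fourth power is at most $\zeta_0^4$. For part~1 the inputs are fixed, so only the weight randomness over the $\beta$ hidden layers drives the fluctuation; Gaussian (Lipschitz-function) concentration yields a subgaussian tail $\exp\!\big(-c\,\ell^2\beta(\eta-4)^2/(\zeta_0^4(\beta+2)^4)\big)$, with the $\ell d$ contribution absorbed into the bounded first-layer term. For part~2 the inputs are independent points on the sphere: near-orthogonality (Lemma~\ref{lem:orthovec}) and concentration of Lipschitz functions on the sphere activate the full dimension $\ell d+\ell^2\beta$ and, by symmetry, kill the mean, giving $\exp\!\big(-c_2(\ell d+\ell^2\beta)\eta^2/(\zeta_0^4(\beta+2)^4)\big)$ for every $\eta>0$. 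A union bound over the $N^2$ pairs $(i,j)$ produces the $N^2$ prefactor in both parts.

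The main obstacle is the regularity step: controlling the forward and backward passes across depth on one event. The subtlety is that worst-case operator-norm bounds $\prod_p\|\vec{W}_p\|$ grow geometrically in $\beta$ and would render the result vacuous; one must instead use that the scaled Gaussian initialization keeps the propagated signal norm nearly constant per layer (so that $\|\nabla g\|=O(\sqrt{\beta})$, not exponential), while still paying only a polynomial-in-$\beta$ price in the Lipschitz constants. Getting these bounds tight enough to produce the stated exponents—and in particular cleanly separating the data-driven $\ell d$ term (present only in part~2) from the weight-driven $\ell^2\beta$ term—is where the real work lies.
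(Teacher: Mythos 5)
Your proposal follows essentially the same route as the paper's own proof: a high-probability operator-norm event over the $\beta$ random layers (supplying the $1-\beta\exp(-c_1\kappa^2\ell^2)$ term and reducing to the small-weights regime where Lemma~\ref{lem:lossPropNeural} applies), the layerwise rank-one trace decomposition of $\langle \nabla g_i, \nabla g_j\rangle$, a Lipschitz bound of order $\zeta_0^2(\beta+2)^2$ with respect to the weights (part 1) or weights and data (part 2), Gaussian/sphere concentration in effective dimension $\ell^2\beta$ versus $\ell d + \ell^2\beta$, and a union bound over the $N^2$ pairs. The one detail the paper makes precise that your sketch leaves loose is the origin of the constant $4$: it is the lower bound $\mathbb{E}_{\vec{W}}[h_{\vec{W}}(\vec{x}_i,\vec{x}_j)] \ge -4$ on the expectation over the random weights, obtained by iterating the expectation layer by layer so that the mean-zero Gaussian entries annihilate all cross terms and leave a single term bounded via $|\zeta_{\vec{x}_i}\zeta_{\vec{x}_j}| \le 4$ --- rather than a deterministic first-layer correlation --- which is exactly the non-concentrating mean that the data randomness renders non-negative in part 2.
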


Theorem \ref{thm:fixedData} shows that under popular Gaussian initializations used, training becomes harder as networks get deeper. The result however also shows a way forward: layer width improves the trainability of deep networks. 
Other related work supports this showing that when the layers are infinitely wide, the learning dynamics of gradient descent simplifies considerably \citep{jacot2018neural, lee2019wide}. \citet{hanin2018start} also suggest that the width should increase linearly with depth in a neural network to help dynamics at the beginning of training. In section \ref{sec:experiments} and appendix \ref{app:extra_results}, we show substantial empirical evidence that, given a sufficiently deep network, increasing the layer width often helps in lowering gradient confusion and speeding up convergence for a range of models.

\section{A more general result on the effect of depth}
\label{sec:general_depth}

%
%
While our results in section \ref{sec:initialization} hold at standard initialization schemes, in this section we derive a more general version of the result. In particular, we assume the setting where the data is drawn uniformly at random from a unit sphere and the weights lie in a ball around a local minimizer. Our results hold for both fully connected networks and convolutional networks with the square-loss and logistic-loss functions, and commonly-used non-linear activations such as sigmoid, tanh, softmax and ReLU.



We consider the same setup as in the previous section, and assume additionally that the data points $\{\vec{x}_i\}$ are drawn uniformly from the surface of a $d$-dimensional unit sphere. Additionally, instead of studying the network at initialization, we make the following assumption on the weights. 

			\begin{assumption}[Small Weights]
				\label{ass:small_weight}
					We assume that the operator norm of the weight matrices $\{\vec{W}_i\}_{i \in [\beta]_{0}}$ are bounded above by $1$, i.e., for every $i \in [\beta]_{0}$ we have $\|\vec{W}_i\| \leq 1$.
				\end{assumption}

%
   		
The operator norm of the weight matrices $\| \vW \|$ being close to 1 is important for the trainability of neural networks, as it ensures that the input signal is passed through the network without exploding or shrinking across layers \citep{glorot2010understanding}. Proving non-vacuous bounds in case of such blow-ups in magnitude of the signal or the gradient is not possible in general, and thus, we consider this restricted class of weights. Most standard neural networks are trained using \emph{weight decay} regularizers of the form $\sum_i \|W_i\|_F^2$. This biases the weights to be small when training neural networks in practice. See appendix \ref{app:small_weights} for further discussion on the small weights assumption.

	 We now present a more general version of theorem \ref{thm:fixedData}. 
	\begin{restatable}{theorem}{NeuralNet}
    	\label{thm:arbitraryNN}
    	Let $\vec{W}_0, \vec{W}_1, \ldots, \vec{W}_{\beta}$ satisfy assumption~\ref{ass:small_weight}. For some fixed constant $c >0$, the gradient confusion bound (equation \ref{confusion}) holds with probability at least 
	$$\textstyle 1- N^2 \exp\left( \frac{-c d \eta^2}{16 \zeta_0^4 (\beta+2)^4} \right).$$
    	\end{restatable}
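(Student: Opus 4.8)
The plan is to reduce the gradient-confusion bound to a concentration-of-measure statement for a Lipschitz function on the product of two spheres, and then to exploit the fact that independence of the data points forces the relevant mean to be nonnegative. First I would write each per-sample gradient in the factored form $\nabla f_i(\vec{w}) = q(\vec{x}_i)\,\nabla_{\vec{w}} g_{\vec{w}}(\vec{x}_i)$, where $q$ is the scalar outer-derivative of the loss: $q = g_{\vec{w}}(\vec{x}_i) - \cC(\vec{x}_i)$ for the square loss and $q = -\cC(\vec{x}_i)/(1+\exp(\cC(\vec{x}_i) g_{\vec{w}}(\vec{x}_i)))$ for the logistic loss. Using $|\cC| \le 1$, property (P1), and Assumption~\ref{ass:small_weight}, the scalar $q$ is bounded by an absolute constant, and via Lemma~\ref{lem:lossPropNeural} the map $\vec{x} \mapsto \nabla_{\vec{w}} g_{\vec{w}}(\vec{x})$ has norm controlled by $\zeta_0 = 2\sqrt{\beta}$. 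Thus $\langle \nabla f_i(\vec{w}), \nabla f_j(\vec{w})\rangle = F(\vec{x}_i, \vec{x}_j)$ is a fixed deterministic function of the two data points only.

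The key observation is that the data points are drawn independently, so $\nabla f_i$ and $\nabla f_j$ are i.i.d. random vectors and hence $\mathbb{E}[\langle \nabla f_i, \nabla f_j\rangle] = \langle \mathbb{E}[\nabla f], \mathbb{E}[\nabla f]\rangle = \|\mathbb{E}_{\vec{x}}[\nabla f_{\vec{x}}(\vec{w})]\|^2 \ge 0$. This nonnegativity is exactly what lets a pure concentration argument deliver a one-sided lower bound: it suffices to show that $F$ does not drop far below its mean.

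Next I would establish that $F(\vec{x}_i, \vec{x}_j)$ is Lipschitz on $S^{d-1} \times S^{d-1}$. Writing $\vec{a} = \nabla f_{\vec{x}_i}$, $\vec{b} = \nabla f_{\vec{x}_j}$ and using $|\langle \vec{a},\vec{b}\rangle - \langle \vec{a}', \vec{b}'\rangle| \le \|\vec{a}\|\,\|\vec{b}-\vec{b}'\| + \|\vec{b}'\|\,\|\vec{a}-\vec{a}'\|$, the joint Lipschitz constant is bounded by $\sqrt{2}\,B\Lambda$, where $B$ is a uniform bound on $\|\nabla f_{\vec{x}}\|$ and $\Lambda$ is the Lipschitz constant of $\vec{x} \mapsto \nabla f_{\vec{x}}$. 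Both quantities I would control by propagating an input perturbation through the forward and backward passes: each layer contributes factors bounded using $\|\vec{W}_p\| \le 1$ and the bounds $|\sigma'|,|\sigma''|\le 1$ from (P2), while Lemma~\ref{lem:lossPropNeural} packages the resulting polynomial-in-$\beta$ growth into $B \lesssim \zeta_0$ and $\Lambda \lesssim \zeta_0(\beta+2)^2$, giving $L \lesssim \zeta_0^2(\beta+2)^2$.

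Finally I would apply concentration of measure on the product sphere (L\'evy's lemma): for a Lipschitz $F$ with constant $L$, $\Pr[F < \mathbb{E}[F] - t] \le \exp(-cd t^2/L^2)$. Taking $t = \eta$ and using $\mathbb{E}[F] \ge 0$ yields $\Pr[\langle \nabla f_i, \nabla f_j\rangle < -\eta] \le \exp(-c d \eta^2/(16\zeta_0^4(\beta+2)^4))$ after substituting the Lipschitz bound, and a union bound over the fewer than $N^2$ pairs $(i,j)$ gives the stated probability. The main obstacle is the third step: obtaining a clean Lipschitz constant $\Lambda$ for the input-to-gradient map through $\beta$ composed layers, since differentiating the backpropagated gradient with respect to $\vec{x}$ produces cross terms involving $\sigma''$ and products of up to $\beta$ Jacobians; pinning down the precise polynomial dependence on $\beta$ (encoded in $\zeta_0$ and $(\beta+2)^2$) is where the careful layer-by-layer bookkeeping of Lemma~\ref{lem:lossPropNeural} is essential.
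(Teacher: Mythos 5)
Your proposal is correct and follows essentially the same route as the paper's proof: the same factorization $\nabla f_i = \zeta_{\vec{x}_i}(\vec{W})\,\nabla_{\vec{W}} g_{\vec{W}}(\vec{x}_i)$, the same non-negative-expectation argument from i.i.d.\ data ($\mathbb{E}[h] = \|\mathbb{E}[\nabla f]\|^2 \ge 0$), the same layer-by-layer Lipschitz bookkeeping yielding a constant of order $\zeta_0^2(\beta+2)^2$, and the same sphere-concentration-plus-union-bound conclusion (the paper's Lemma~\ref{lem:HDPBook} and Corollary~\ref{lem:HDPCorr} play the role of your L\'evy-type inequality on the product sphere). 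The only cosmetic difference is that you package the Lipschitz estimate as a product $B\Lambda$ via the inner-product inequality, while the paper differentiates the trace expression for $h_{\vec{W}}$ directly and runs the induction $\|\nabla_{\vec{x}_i}\vec{g}_p(\vec{x}_i)\| \le \beta-p+1$; these are the same computation.
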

	Theorem \ref{thm:arbitraryNN} shows that (for fixed dimension $d$ and number of samples $N$) when the  depth $\beta$ decreases, the probability that the gradient confusion bound in equation \ref{confusion} holds increases, and vice versa. Thus, our results indicate that in the general case when the weights are small, increasing the network depth will typically lead to slower model training.
	
	Note that on assuming $\|\vW \| \le 1$ for each weight matrix $\vW$, the dependence of gradient confusion on the width goes away. To see why this, consider an example where each weight matrix in the neural network has exactly one non-zero element, which is set to 1. The operator norm of each such weight matrix is 1, but the forward or backward propagated signals would not depend on the width.
	
	Note that the convergence rate results of SGD in section \ref{sec:convergence} 
	assume that the gradient confusion bound holds at every point along the path of SGD. On the other hand, theorem \ref{thm:arbitraryNN} shows concentration bounds for the gradient confusion at a fixed weight $\vW$. Thus, to make the above result more relevant for the convergence of SGD on neural networks, we now make the concentration bound in theorem \ref{thm:arbitraryNN} \emph{uniform over all weights inside a ball $\mathcal{B}_r$ of radius $r$}. 
    	
    	\begin{restatable}{corollary}{NeuralNetUniform}
    		\label{thm:uniformNN}	
    	Select a point $\vec{W} = (\vec{W}_0, \vec{W}_1, \ldots, \vec{W}_\beta)$, satisfying assumption \ref{ass:small_weight}. Consider a ball $\mathcal{B}_r$ centered at $\vec{W}$ of radius $r > 0$. If the data $\{\vxi\}_{i \in [N]}$ are sampled uniformly from a unit sphere, then the gradient confusion bound in equation \ref{confusion} holds uniformly at all points $\vec{W}' \in \mathcal{B}_r$ with probability at least
    	\begin{align*}
			& \textstyle 1- N^2 \exp\left(-\frac{cd\eta^2}{64\zeta_0^4 (\beta+2)^4} \right), \hspace{19mm}\text{ if } r\le \eta/4\zeta_0^2, \\ 
&  \textstyle 1- N^2 \exp\left(-\frac{cd\eta^2}{64\zeta_0^4 (\beta+2)^4} + \frac{8d\zeta_0^2 r}{\eta}\right), \quad\quad\text{ otherwise}.
\end{align*}
    \end{restatable}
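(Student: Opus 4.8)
The plan is to upgrade the pointwise concentration bound of Theorem~\ref{thm:arbitraryNN} to a statement that holds simultaneously over the whole ball $\mathcal{B}_r$ by combining it with two ingredients: (i) a deterministic Lipschitz estimate for the map $\vec{W}' \mapsto \langle \nabla f_i(\vec{W}'), \nabla f_j(\vec{W}')\rangle$, and (ii) a standard $\epsilon$-net (covering) argument. The two interact as follows: at the points of a net I would invoke Theorem~\ref{thm:arbitraryNN} with the sharper confusion level $\eta/2$ (under a union bound over the net), and then the Lipschitz estimate lets me fill in the gaps between net points at the expense of the remaining $\eta/2$ of slack, so that the bound $\langle \nabla f_i, \nabla f_j\rangle \ge -\eta$ of equation~\eqref{confusion} holds at every $\vec{W}' \in \mathcal{B}_r$. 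Replacing $\eta$ by $\eta/2$ in the exponent of Theorem~\ref{thm:arbitraryNN} is precisely what turns the constant $16$ into $64$ in the denominator.

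First I would prove the deterministic Lipschitz bound. Using assumption~\ref{ass:small_weight} ($\|\vec{W}_i\| \le 1$) together with the activation properties (P1)--(P2), I would bound the gradient norm $\|\nabla f_i(\vec{W}')\| \lesssim \zeta_0$ and, propagating the chain rule (backpropagation) through the $\beta$ layers, control the variation of each gradient with respect to the weights. Since $\langle \nabla f_i, \nabla f_j\rangle$ is a product of two such factors, the product rule should yield a Lipschitz constant of order $2\zeta_0^2$, i.e. $|\langle \nabla f_i(\vec{W}'), \nabla f_j(\vec{W}')\rangle - \langle \nabla f_i(\vec{W}''), \nabla f_j(\vec{W}'')\rangle| \le 2\zeta_0^2 \|\vec{W}' - \vec{W}''\|$ for $\vec{W}', \vec{W}'' \in \mathcal{B}_r$. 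This is the origin of the threshold $r \le \eta/(4\zeta_0^2)$: over a ball of this radius the total variation of the inner product is at most $2\zeta_0^2 r \le \eta/2$.

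The probabilistic step then splits into the two advertised cases. When $r \le \eta/(4\zeta_0^2)$, a single evaluation at the center $\vec{W}$ suffices: applying Theorem~\ref{thm:arbitraryNN} with level $\eta/2$ gives $\langle \nabla f_i(\vec{W}), \nabla f_j(\vec{W})\rangle \ge -\eta/2$ for all $i \neq j$ with probability at least $1 - N^2 \exp\!\left(-\frac{cd\eta^2}{64\zeta_0^4(\beta+2)^4}\right)$, and the Lipschitz bound then guarantees $\langle \nabla f_i(\vec{W}'), \nabla f_j(\vec{W}')\rangle \ge -\eta/2 - 2\zeta_0^2 r \ge -\eta$ throughout $\mathcal{B}_r$. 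When $r > \eta/(4\zeta_0^2)$, I would instead take a minimal $\epsilon$-net $\mathcal{T}$ of $\mathcal{B}_r$ with resolution $\epsilon = \eta/(4\zeta_0^2)$, so that the inner product varies by at most $\eta/2$ inside each cell; the covering number obeys $\log|\mathcal{T}| \le d\log(1 + 2r/\epsilon) \le 2dr/\epsilon = 8d\zeta_0^2 r/\eta$, with $d$ the relevant ambient dimension. A union bound of Theorem~\ref{thm:arbitraryNN} at level $\eta/2$ over the net, followed by the Lipschitz extension to all of $\mathcal{B}_r$, yields the failure probability $N^2 \exp\!\left(-\frac{cd\eta^2}{64\zeta_0^4(\beta+2)^4} + \frac{8d\zeta_0^2 r}{\eta}\right)$, matching the second case of the corollary.

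The main obstacle I anticipate is the deterministic Lipschitz/gradient-norm estimate of the second paragraph, and in particular making it genuinely uniform over $\mathcal{B}_r$ rather than just at the center. Assumption~\ref{ass:small_weight} controls $\|\vec{W}_i\|$ only at $\vec{W}$, whereas a point $\vec{W}' \in \mathcal{B}_r$ can have operator norms as large as $1 + r$; I therefore need the layerwise bounds on $\|\nabla f_i\|$ and on its weight-derivative to degrade gracefully under this $O(r)$ inflation, and to verify that the $\sqrt{\beta}$ and $(\beta+2)$ factors in $\zeta_0$ and in the exponent indeed emerge from summing the per-layer contributions along the backpropagation recursion. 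Once this estimate is in hand with constant $2\zeta_0^2$, the covering and union-bound steps are routine.
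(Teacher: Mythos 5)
Your proposal follows essentially the same route as the paper's own proof: the paper likewise establishes the Lipschitz constant $2\zeta_0^2$ for $\vec{W} \mapsto \langle \nabla f_i(\vec{W}), \nabla f_j(\vec{W})\rangle$ (via a product-of-Lipschitz-gradients lemma combined with Lemma~\ref{lem:lossPropNeural}), builds an $\epsilon$-net of $\mathcal{B}_r$ with $\epsilon = \eta/4\zeta_0^2$, applies Theorem~\ref{thm:arbitraryNN} at level $\eta/2$ (which is exactly where $16$ becomes $64$) with a union bound over a net of size at most $(8\zeta_0^2 r/\eta + 1)^d$, and handles $r \le \eta/4\zeta_0^2$ as the case where the net reduces to a single point. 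The obstacle you flag at the end --- that assumption~\ref{ass:small_weight} constrains the operator norms only at the center $\vec{W}$, so the gradient bounds underlying $\zeta_0$ must be argued to degrade gracefully over all of $\mathcal{B}_r$ --- is a genuine subtlety that the paper's own proof silently glosses over rather than resolves.
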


Corollary \ref{thm:uniformNN} shows that the probability that the gradient confusion bound holds decreases with increasing depth, for all weights in a ball around the minimizer.\footnote{The above results automatically hold for convolutional networks, since a convolution operation on $\vx$ can be represented as a matrix multiplication $\vec{U}  \vec{x}$ for an appropriate Toeplitz matrix $\vec{U}$.}
This explains why, in the general case, training very deep models might always be hard. 
This raises the question why most deep neural networks used in practice are so efficiently trained using SGD. While careful Gaussian initialization strategies prevent vanishing or exploding gradients, these strategies still suffer from high gradient confusion for very deep networks unless the width is also increased with the depth, as we show in section \ref{sec:initialization}. Practitioners over the years, however, have achieved state-of-the-art results by making networks deeper, without necessarily making networks wider.
Thus, in section \ref{sec:experiments}, we empirically study how popular techniques used in these models like skip connections and batch normalization affect gradient confusion. We find that these techniques drastically lower gradient confusion, making deep networks significantly easier to train.
Furthermore, in the next section, we show how deep linear nets are trainable when used with orthogonal initialization techniques, indicating a way forward for training deeper models.

\section{Gradient confusion is independent of depth for orthogonal initializations}
    \label{sec:orthInit}
    In this section, we show that for deep linear neural networks, gradient confusion is independent of depth when the weight matrices are initialized as orthogonal matrices.\footnote{An orthogonal matrix $\vec A$ satisfies $\vec{A}^T \cdot \vec{A} = \vec{A} \cdot \vec{A}^T = \vec{I}$.} Consider the following linear neural network:
    \begin{equation}
        \label{eq:deepLinNet}
        g_{\vec{W}}(\vec{x}) := \gamma \vec{W}_{\beta} \cdot \vec{W}_{\beta-1} \cdot \ldots \cdot \vec{W}_1 \cdot \vec{x},
    \end{equation}
    where the rescaling parameter $\gamma = \frac{1}{\sqrt{2 \beta}}$, and assume we use the squared loss function. Then we have the following.
    \begin{restatable}{theorem}{OrthInit}
    	\label{thm:OrthInit}
    	Let $\{ \vec{W}_i \}_{i \in [\beta]}$ be arbitrary orthogonal matrices that satisfy assumption~\ref{ass:small_weight}. Let the dataset $\{ \vec{x}_i \} _{i \in [N]}$ be such that each $\vec{x}_i$ is an i.i.d. sample from the surface of $d$-dimensional unit sphere. Consider the linear neural network in equation \ref{eq:deepLinNet} that minimizes the empirical square loss function. For some fixed constant $c >0$, the gradient confusion bound (equation \ref{confusion}) holds with probability at least $$\textstyle 1- N^2 \exp\left( -c d \eta^2 \right).$$
    	\end{restatable}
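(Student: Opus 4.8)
The plan is to exploit the exact inner-product-preserving property of orthogonal matrices to show that every layer contributes the same quantity to the gradient confusion, and that the rescaling $\gamma = 1/\sqrt{2\beta}$ cancels the resulting factor of $\beta$ exactly. First I would compute the gradient of each $f_i$ with respect to a single weight matrix $\vec{W}_p$. Writing the residual $r_i := \cC(\vec{x}_i) - g_{\vec{W}}(\vec{x}_i)$, the forward map $\vec{A}_p := \vec{W}_{p-1}\cdots\vec{W}_1$, and the backward map $\vec{B}_p := \vec{W}_{\beta}\cdots\vec{W}_{p+1}$, the network output factors as $g_{\vec{W}}(\vec{x}_i) = \gamma\,\vec{B}_p\vec{W}_p\vec{A}_p\vec{x}_i$, so the chain rule gives the outer-product form $\nabla_{\vec{W}_p} f_i = -\gamma\,(\vec{B}_p^\top r_i)(\vec{A}_p\vec{x}_i)^\top$, a product of a backward term and a forward term. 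Since the full gradient is taken over the tuple $\vec{W}=(\vec{W}_1,\ldots,\vec{W}_\beta)$, the confusion inner product decomposes as $\la\nabla f_i,\nabla f_j\ra = \sum_{p=1}^{\beta}\la\nabla_{\vec{W}_p}f_i,\nabla_{\vec{W}_p}f_j\ra$, and each summand splits into a forward factor $(\vec{A}_p\vec{x}_i)^\top(\vec{A}_p\vec{x}_j)$ times a backward factor $(\vec{B}_p^\top r_i)^\top(\vec{B}_p^\top r_j)$.

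The crux is the double use of orthogonality. Each $\vec{A}_p$ and $\vec{B}_p$ is a product of orthogonal matrices, hence orthogonal, so $\vec{A}_p^\top\vec{A}_p = \vec{B}_p\vec{B}_p^\top = \vec{I}$. Therefore the forward factor collapses to $(\vec{A}_p\vec{x}_i)^\top(\vec{A}_p\vec{x}_j) = \vec{x}_i^\top\vec{x}_j$ and the backward factor to $(\vec{B}_p^\top r_i)^\top(\vec{B}_p^\top r_j) = r_i^\top r_j$, both \emph{independent of $p$}. Hence $\la\nabla_{\vec{W}_p}f_i,\nabla_{\vec{W}_p}f_j\ra = \gamma^2\,(r_i^\top r_j)(\vec{x}_i^\top\vec{x}_j)$ for every layer, and summing over the $\beta$ layers together with $\gamma^2\beta = 1/2$ gives
\[ \la\nabla f_i,\nabla f_j\ra \;=\; \gamma^2\beta\,(r_i^\top r_j)(\vec{x}_i^\top\vec{x}_j) \;=\; \half\,(r_i^\top r_j)(\vec{x}_i^\top\vec{x}_j), \]
which is manifestly independent of depth.

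It then remains to control the magnitude probabilistically. The residuals are bounded by a constant $B$: orthogonal maps preserve norm, so $\|g_{\vec{W}}(\vec{x}_i)\| = \gamma\|\vec{x}_i\| = \gamma \le 1$, and the labels are bounded by assumption, so $\|r_i\|\le B$ and $|r_i^\top r_j|\le B^2$. Thus $|\la\nabla f_i,\nabla f_j\ra| \le \half B^2\,|\vec{x}_i^\top\vec{x}_j|$. I would then invoke the near-orthogonality of random data (Lemma \ref{lem:orthovec}) with threshold $\nu = 2\eta/B^2$ to bound $\max_{i\neq j}|\vec{x}_i^\top\vec{x}_j|$; on the corresponding event every pairwise inner product is at most $\eta$ in absolute value, so in particular $\la\nabla f_i,\nabla f_j\ra \ge -\eta$ and the gradient confusion bound holds. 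Absorbing $\sqrt{\pi/8}$, the factor $B$, and $(d-1)/2 \ge c'd$ into a single constant $c>0$ yields probability at least $1 - N^2\exp(-cd\eta^2)$, exactly as claimed, with no trace of $\beta$.

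The main obstacle is conceptual rather than computational: recognizing that orthogonality preserves \emph{both} the forward and the backward pairwise inner products exactly, which is what forces every layer's contribution to be identical and makes the total scale only linearly in $\beta$ before being cancelled by $\gamma^2$. This exact preservation is special to orthogonal initialization; in the Gaussian setting of Theorem \ref{thm:fixedData} one must instead track the concentration of operator norms of random matrices across layers, and it is precisely this multiplicative accumulation that introduces the depth dependence there. Here the only probabilistic ingredient is Lemma \ref{lem:orthovec} on the data vectors, and the secondary point requiring care is merely verifying that $B$ is a genuine constant independent of $d$ and $\beta$, which follows from norm-preservation of the orthogonal product and the bounded-label assumption.
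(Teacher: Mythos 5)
Your proof is correct, but it follows a genuinely different route from the paper's. You exploit the exact algebra of the orthogonal--linear setting: each layer's gradient is an outer product of a backward and a forward factor, and orthogonality collapses both factors so that every layer contributes the identical quantity $\gamma^2 (r_i^\top r_j)(\vec{x}_i^\top \vec{x}_j)$, giving the closed form $\langle \nabla f_i, \nabla f_j\rangle = \tfrac{1}{2}(r_i^\top r_j)(\vec{x}_i^\top\vec{x}_j)$ after summing over the $\beta$ layers; the only probabilistic input is then the spherical-cap bound of Lemma \ref{lem:orthovec} applied to $\max_{i\neq j}|\vec{x}_i^\top\vec{x}_j|$, with the residuals bounded deterministically. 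The paper instead runs the same template it uses for Theorems \ref{thm:arbitraryNN} and \ref{thm:fixedData}: it shows the gradient inner product $h_{\vec{W}}(\vec{x}_i,\vec{x}_j)$ has non-negative expectation over the i.i.d.\ data (equation \ref{eq:nonNegativeIID}), bounds its Lipschitz constant with respect to $(\vec{x}_i,\vec{x}_j)$ by a depth-independent constant (this is where orthogonality and $2\gamma^2\beta=1$ enter), and then invokes concentration of Lipschitz functions on the sphere (Corollary \ref{lem:HDPCorr}) together with a union bound. Your route is more elementary and yields strictly more in this special setting: an exact, manifestly depth-independent expression for the confusion, and a two-sided bound $|\langle\nabla f_i,\nabla f_j\rangle|\le\eta$, rather than only a one-sided deviation below a non-negative mean. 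The paper's route buys generality: it does not require the exact collapse, so essentially the same argument carries over to settings (non-linear activations, Gaussian weights) where no closed form exists, which is why that machinery recurs throughout the appendix. Your constant bookkeeping also checks out: $|r_i|\le 1+\gamma\le 2$, so taking $\nu=\eta/2$ in Lemma \ref{lem:orthovec} and absorbing $\sqrt{\pi/8}$ and $(d-1)/2\ge d/4$ (for $d\ge 2$) into $c$ gives exactly the claimed $1-N^2\exp(-cd\eta^2)$.
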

    	From Theorem~\ref{thm:OrthInit}, we see that the probability does not depend on the depth $\beta$ or maximum width $\ell$. Thus, trainability does not get worse with depth when using orthogonal initializations. This result matches previous theoretical and empirical results showing the efficiency of orthogonal initialization techniques for training very deep linear or tanh networks \citep{saxe2013exact, schoenholz2016deep, xiao2018dynamical}. However, orthogonal initializations are not compatable with non-linear activation functions like sigmoids or ReLUs, which limit their use in practice. Nonetheless, this result suggests a promising direction in developing techniques for training deeper models.
    	
    

\begin{figure*}[t]
\centering
\subfigure[]{\includegraphics[width=0.32\textwidth]{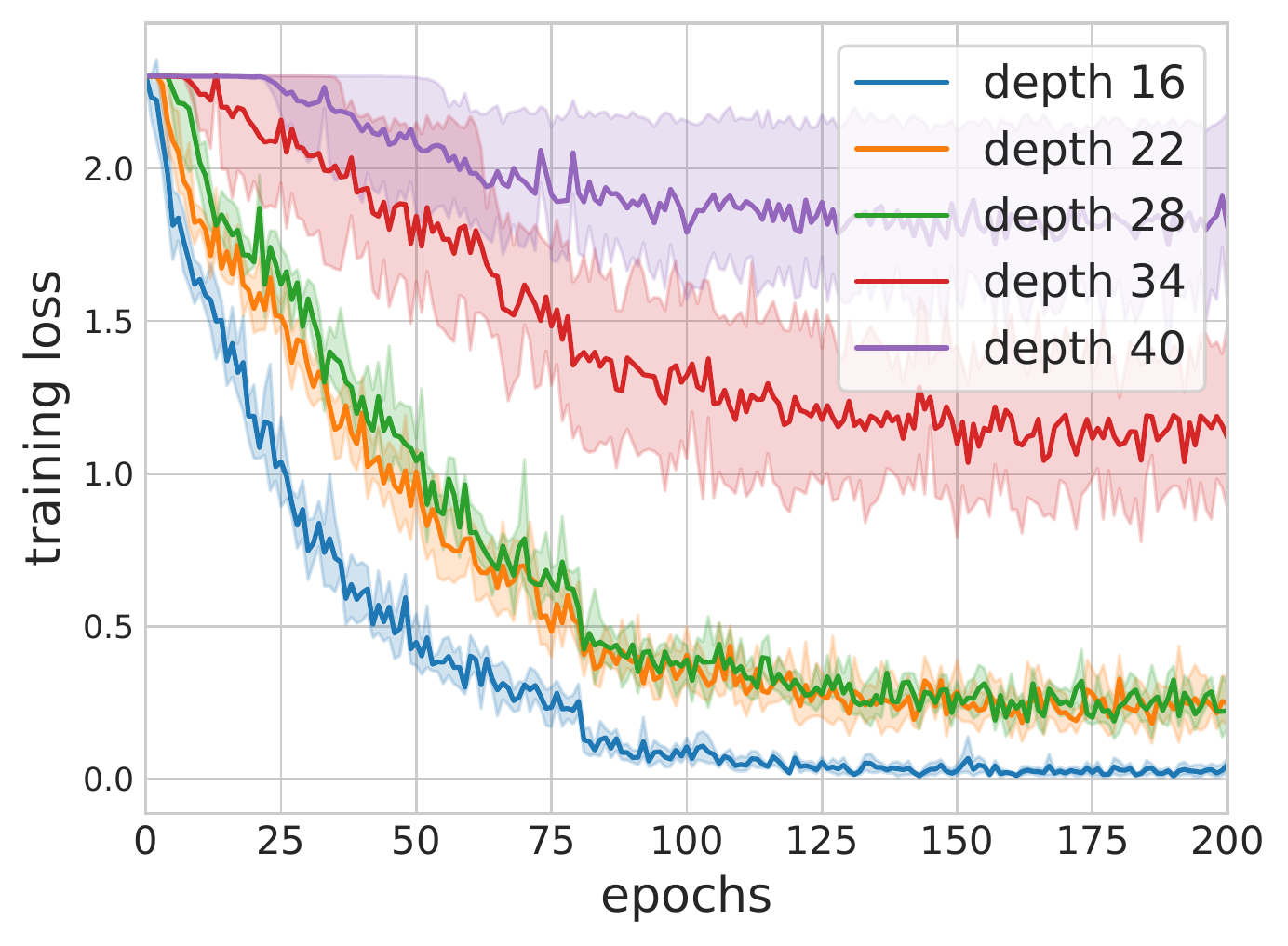}}
\subfigure[]{\includegraphics[width=0.32\textwidth]{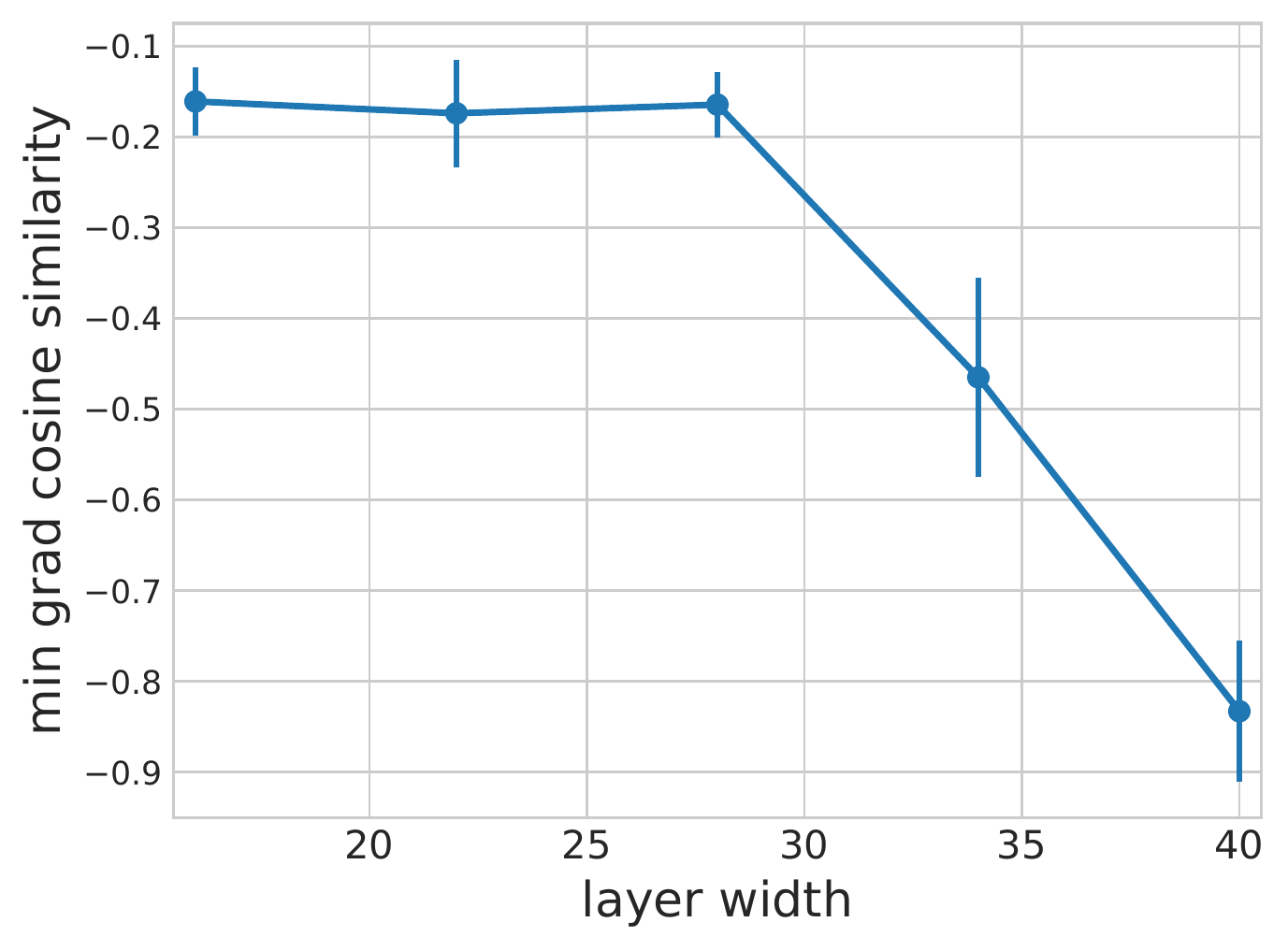}}
\subfigure[]{\includegraphics[width=0.32\textwidth]{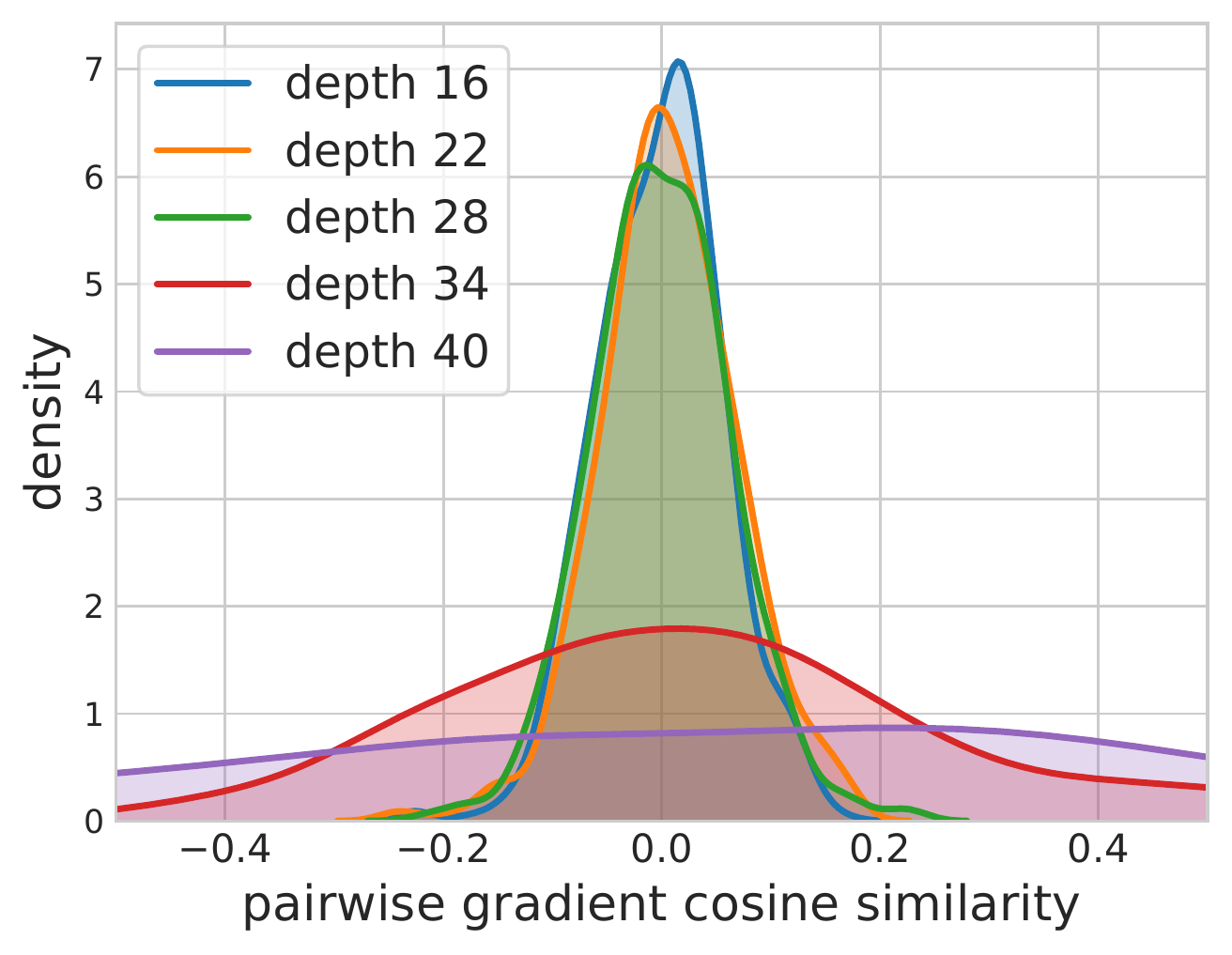}}
      \caption{\small{The effect of network depth with CNN-$\beta$-2 on CIFAR-10 for depths $\beta = $ 16, 22, 28, 34 and 40. Plots show the (a) convergence curves for SGD, (b) minimum of pairwise gradient cosine similarities at the end of training, and the (c) kernel density estimate of the pairwise gradient cosine similarities at the end of training (over all independent runs).
       }}
       \label{fig:c10_cnn_depth}
\end{figure*}

\section{Experimental results}
\label{sec:experiments}

To test our theoretical results and to probe why standard neural networks are efficiently trained with SGD, we now present experimental results showing the effect of the neural network architecture on the convergence of SGD and gradient confusion. It is worth noting that theorems \ref{thm:cosine} and \ref{thm:nonconvex} indicate that 
we would expect the effect of gradient confusion to be most prominent closer to the end of training.

We performed experiments on wide residual networks (WRNs) \citep{zagoruyko2016wide}, convolutional networks (CNNs) and multi-layer perceptrons (MLPs) for image classification tasks on CIFAR-10, CIFAR-100 and MNIST. We present results for CNNs on CIFAR-10 in this section, and present all other results in appendix \ref{app:extra_results}. We use CNN-$\beta$-$\ell$ to denote WRNs that have no skip connections or batch normalization, with a depth $\beta$ and width factor $\ell$.\footnote{The width factor denotes the number of filters relative to the original ResNet model \citep{zagoruyko2016wide}.} We turned off dropout and weight decay for all our experiments. We used SGD as the optimizer without any momentum. Following \citet{zagoruyko2016wide}, we ran all experiments for 200 epochs with minibatches of size 128, and reduced the initial learning rate by a factor of 10 at epochs 80 and 160. We used the MSRA initializer \citep{he2015delving} for the weights as is standard for this model, and used the same preprocessing steps for the CIFAR-10 images as described in \citet{zagoruyko2016wide}. We ran each experiment 5 times, and we show the standard deviation across runs in our plots. We tuned the optimal initial learning rate for each model over a logarithmically-spaced grid and selected the run that achieved the lowest training loss value. To measure gradient confusion, at the end of every training epoch, we sampled 100 pairs of mini-batches each of size 128 (the same size as the training batch). We calculated gradients on each mini-batch, and then computed pairwise cosine similarities. 
See appendix \ref{sec:supp_exp} for more details on the experimental setup and architectures used.

\begin{figure*}[t]
\centering
\subfigure[]{\includegraphics[width=0.32\textwidth]{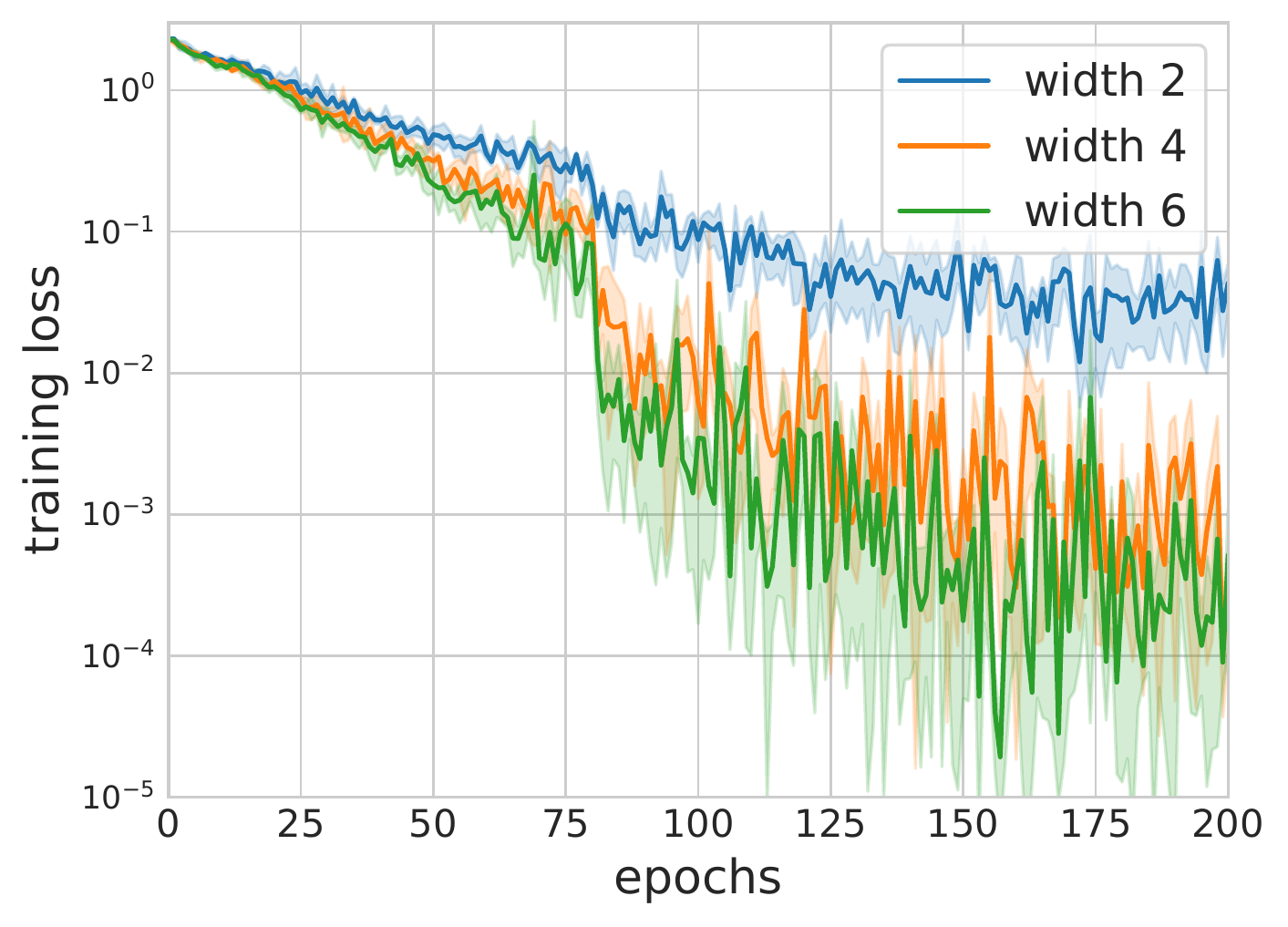}}
\subfigure[]{\includegraphics[width=0.32\textwidth]{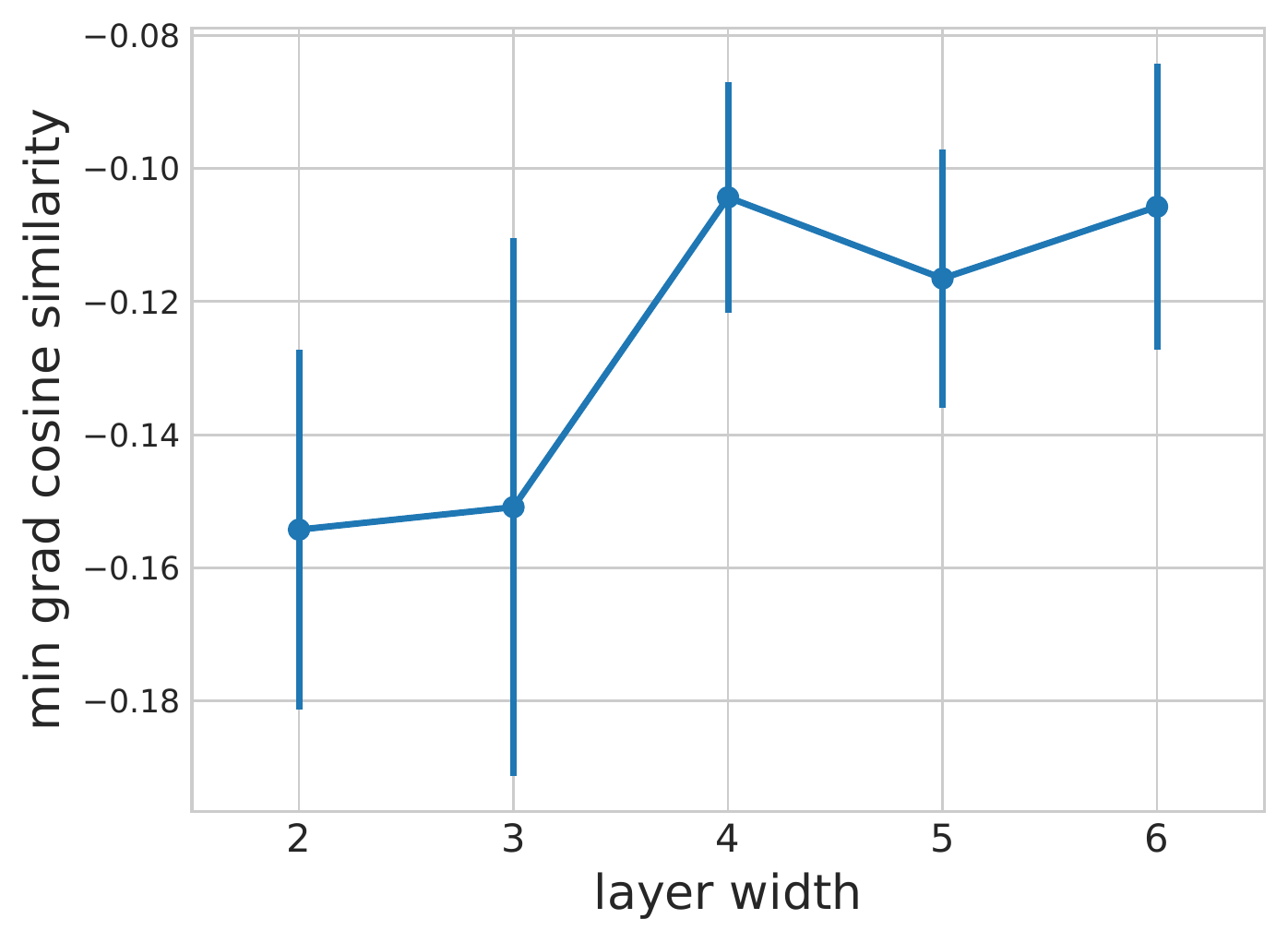}}
\subfigure[]{\includegraphics[width=0.32\textwidth]{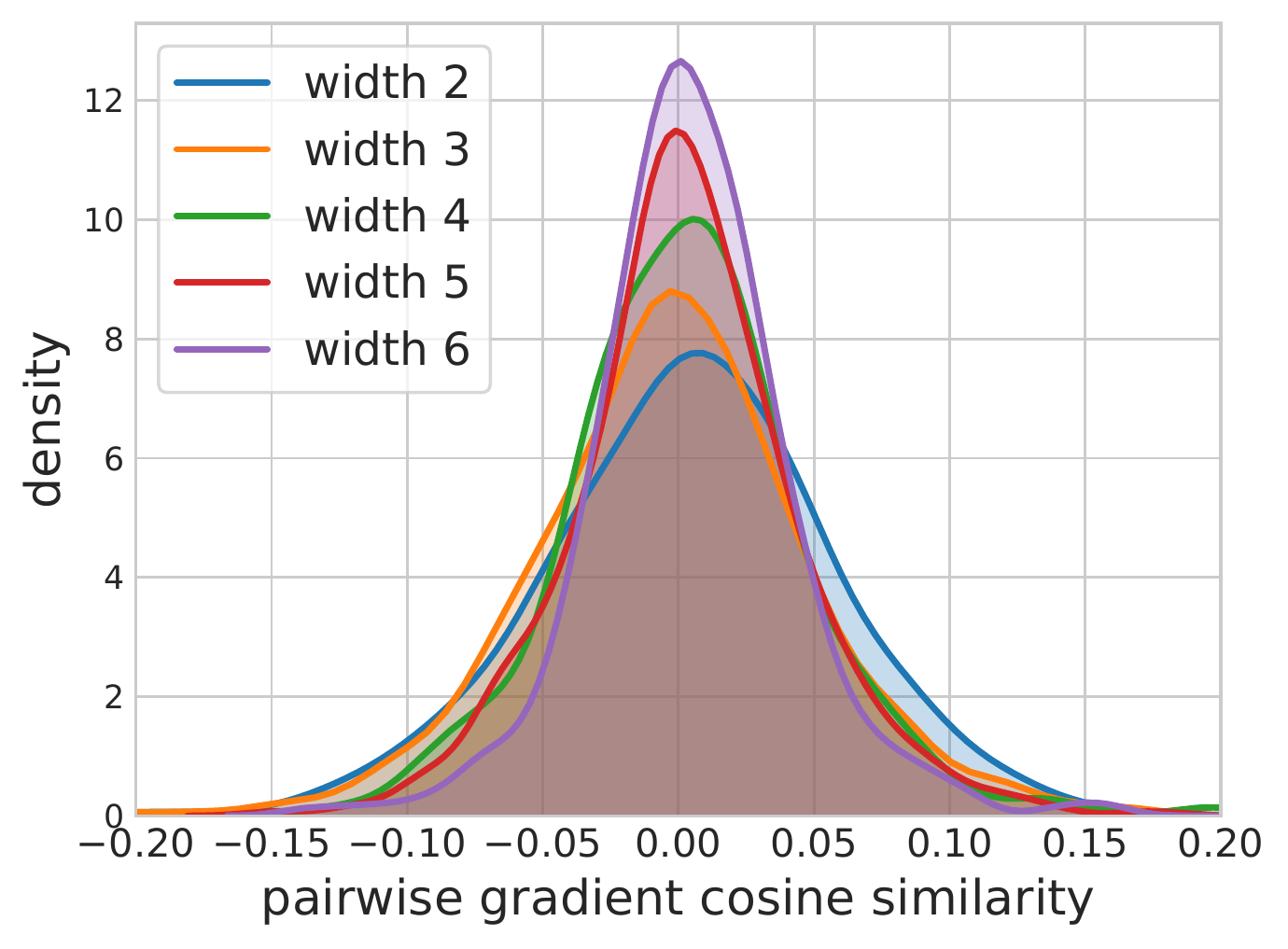}}
\caption{The effect of width with CNN-16-$\ell$ on CIFAR-10 for width factors $\ell = $ 2, 3, 4, 5 and 6. Plots show the (a) convergence curves of SGD (for cleaner figures, we plot results for width factors 2, 4 and 6 here), (b) minimum of pairwise gradient cosine similarities at the end of training, and the (c) kernel density estimate of the pairwise gradient cosine similarities at the end of training (over all independent runs).}
\label{fig:c10_cnn_width}
\end{figure*}
      
\begin{figure*}[t]
\centering
\subfigure[]{\includegraphics[width=0.32\textwidth]{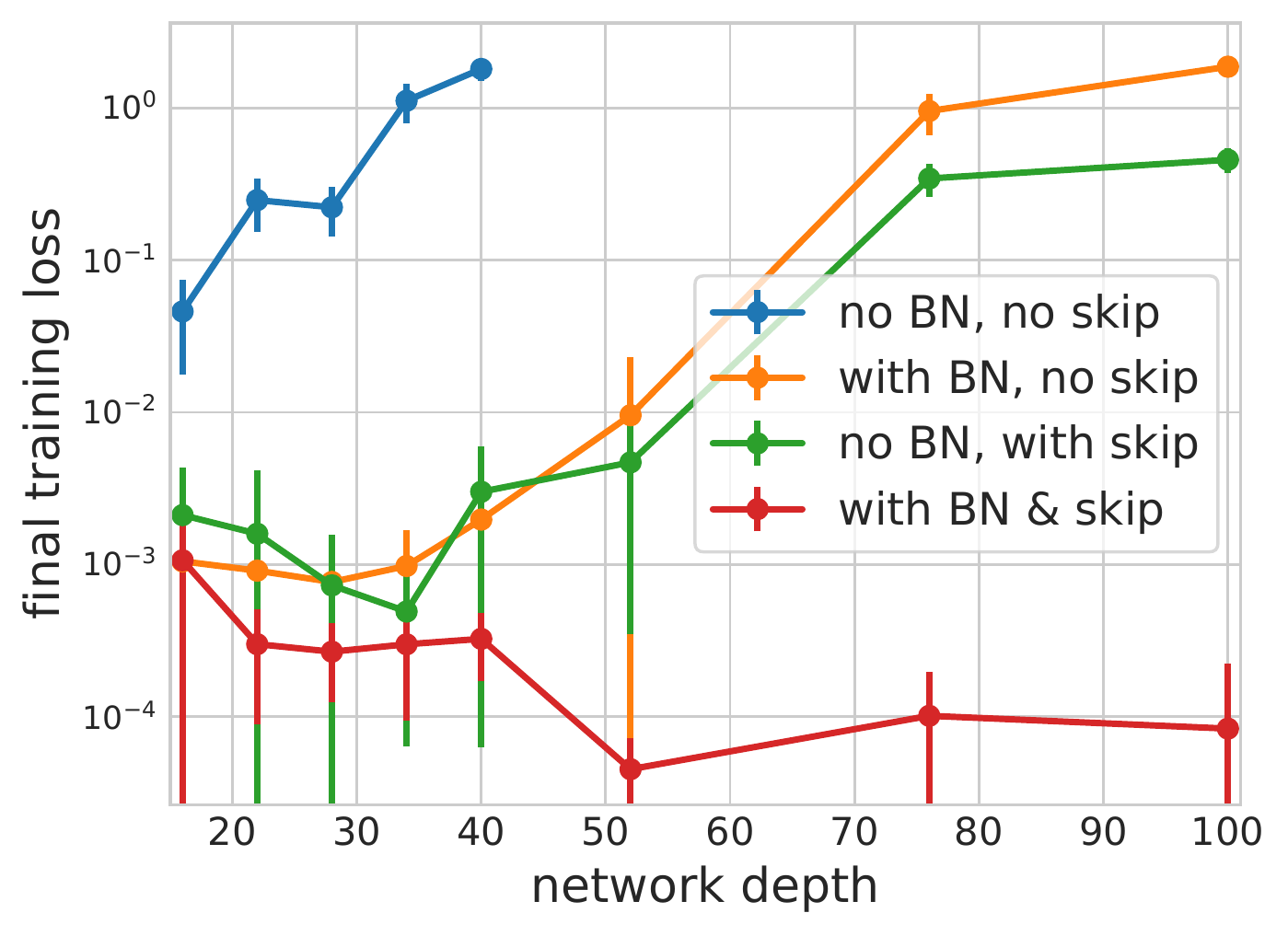}}
\subfigure[]{\includegraphics[width=0.32\textwidth]{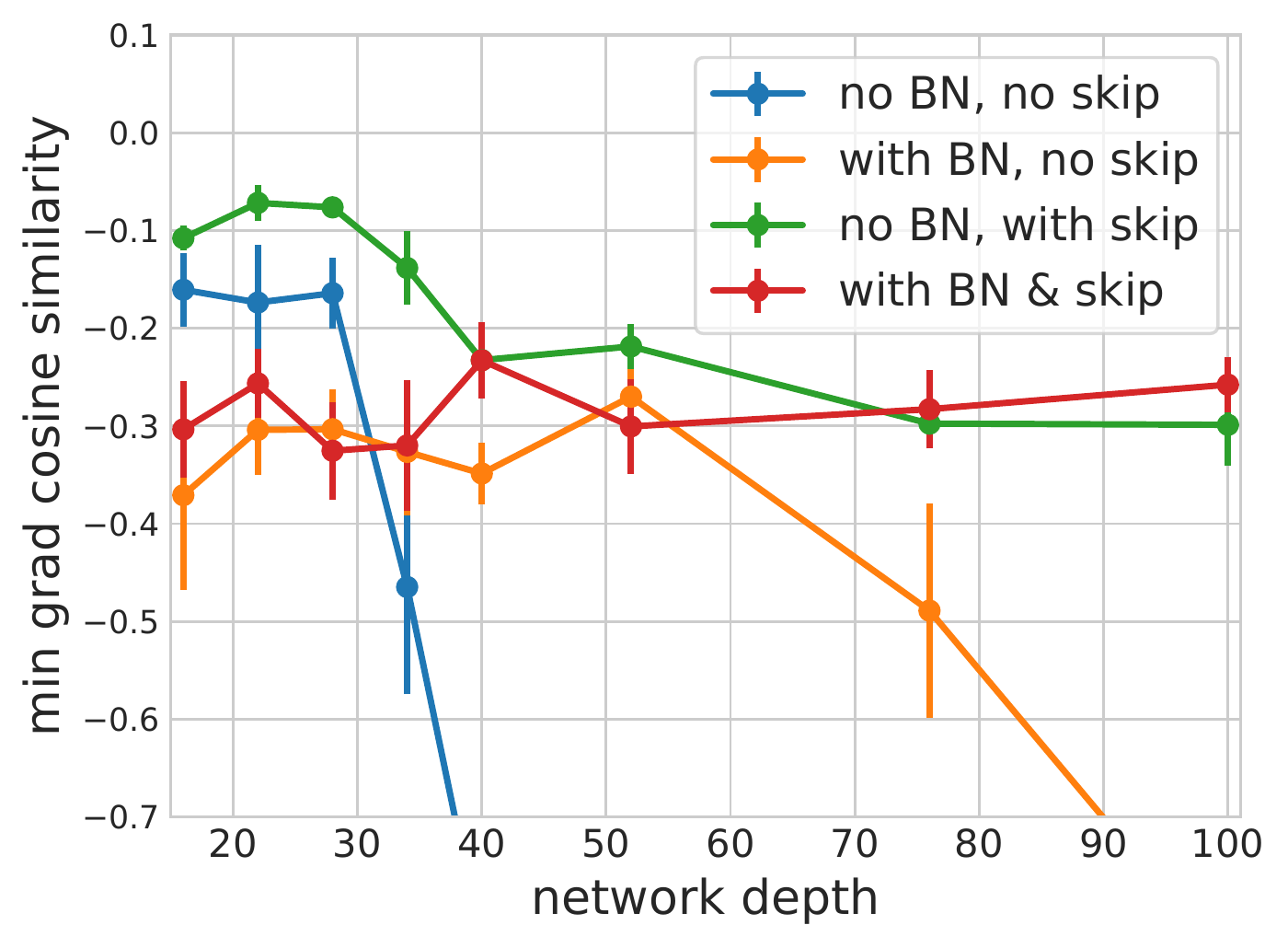}}
\subfigure[]{\includegraphics[width=0.32\textwidth]{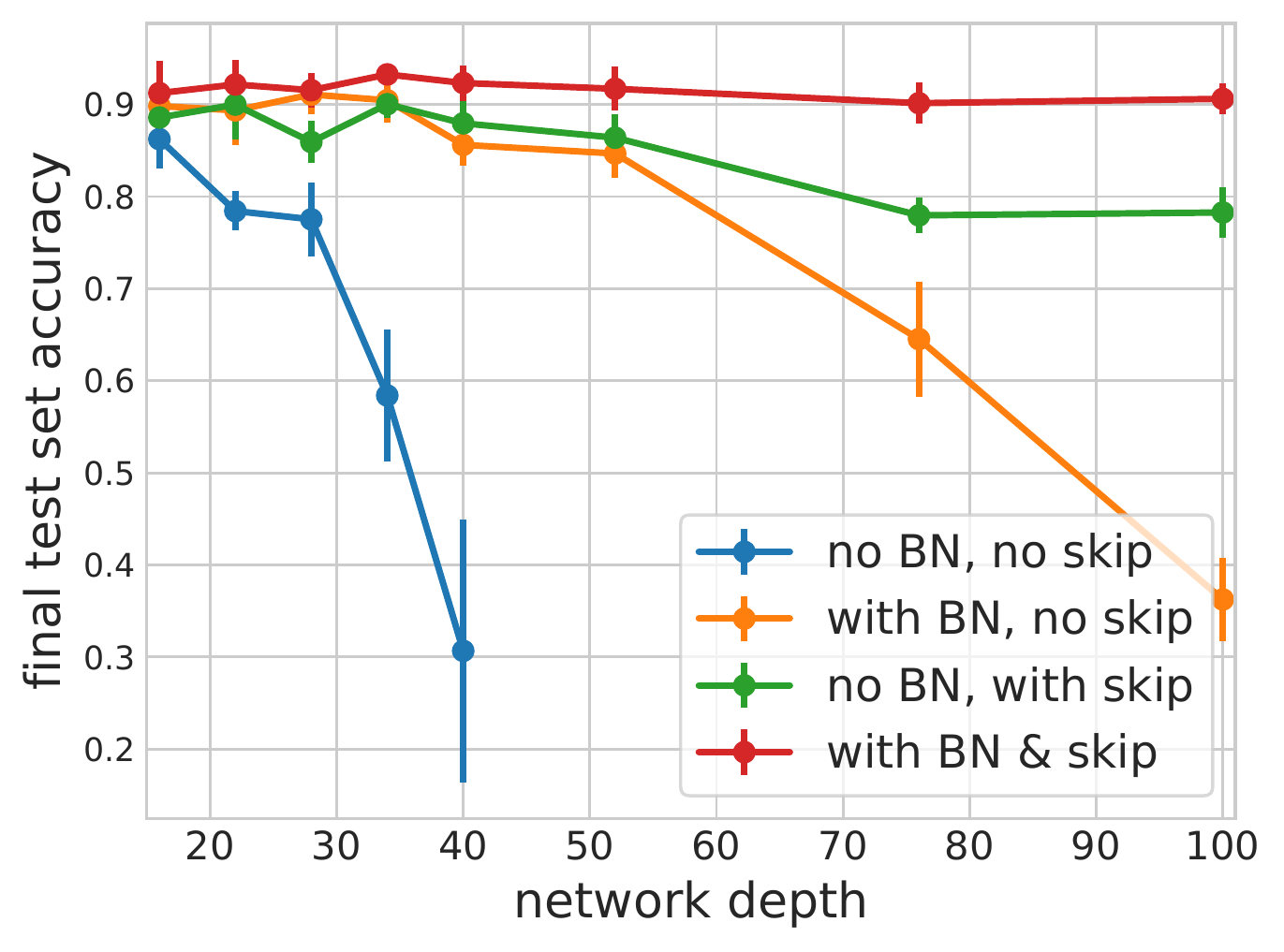}}
\caption{The effect of adding skip connections and batch normalization to CNN-$\beta$-2 on CIFAR-10 for depths $\beta = $ 16, 22, 28, 34, 40, 52, 76 and 100. Plots show the (a) optimal training losses, (b) minimum pairwise gradient cosine similarities, and the (c) test set accuracies at the end of training.}
\label{fig:c10_cnn_bnskip}
\end{figure*}


\textbf{Effect of depth. }
To test our theoretical results, we consider CNNs with a fixed width factor of 2 and varying network depth. From figure~\ref{fig:c10_cnn_depth}, we see that our theoretical results are backed by the experiments: increasing depth slows down convergence, and increases gradient confusion. We also notice that with increasing depth, the density of pairwise gradient cosine similarities concentrates less sharply around 0, which makes the network harder to train. 

\textbf{Effect of width. }
We now consider CNNs with a fixed depth of 16 and varying width factors. 
From figure~\ref{fig:c10_cnn_width}, we see that increasing width results in faster convergence and lower gradient confusion. We further see that gradient cosine similarities concentrate around 0 with growing width, indicating that SGD decouples across the training samples with growing width.
Note that the smallest network considered (CNN-16-2) is still over-parameterized and achieves a high level of performance (see appendix \ref{app:additional_plots}). 


\textbf{Effect of batch normalization and skip connections. }
Almost all state-of-the-art neural networks currently contain both skip connections and normalization layers. To help understand why such neural networks are so efficiently trained using SGD with constant learning rates, we test the effect of adding skip connections and batch normalization to CNNs of fixed width and varying depth. Figure~\ref{fig:c10_cnn_bnskip} shows that adding skip connections or batch normalization individually help in training deeper models, but these models still suffer from worsening results and increasing gradient confusion as the network gets deeper. When these techniques are used together, the model has relatively low gradient confusion even for very deep networks, significantly improving trainability of deep models. Note that our observations are consistent with prior work \citep{de2020batch, yang2019mean}.

\section{Alternate definitions of gradient confusion}
\label{sec:altdef}

Note that the gradient confusion bound $\eta$ in equation \ref{confusion} is defined for the worst-case gradient inner product. However, all the results in this paper can be trivially extended to using a bound on the average gradient inner product of the form: $$\textstyle \sum_{i, j = 1}^N \langle \nabla f_i (\vec{w}) , \nabla f_j (\vec{w}) \rangle / N^2  \ge - \eta.$$ In this case, all theoretical results would remain the same up to constants.
We can also define a normalized variant of the gradient confusion condition: $$ \langle \nabla f_i (\vec{w}) , \nabla f_j (\vec{w}) \rangle / (\| \nabla f_i (\vec{w}) \| \| \nabla f_j (\vec{w}) \|)  \ge - \eta.$$
This condition inherently makes an additional assumption that the norm of the stochastic gradients, $\| \nabla f_i (\vec{w}) \|$, is bounded, and thus the gradient variance is also bounded (see discussion in section \ref{sec:related}).
Thus, while all our theoretical results would qualitatively remain the same under this  condition, we can prove tighter versions of our current results.

Finally, note that gradient confusion condition in equation \ref{confusion} is applicable even when the stochastic gradients are averaged over minibatches of size $B$. The variance of the gradient inner product scales down as $1/B^2$ in this case, and thus $\eta$ is expected to decrease as $B$ grows.

\section{Related work}
\label{sec:related}

The gradient confusion bound and our theoretical results have interesting connections to prior work. In this section, we briefly discuss some of these connections.

\textbf{Connections to the gradient variance}:
If we assume bounded gradient variance $\expect_i \| \nabla f_i(\vec{w}) - \nabla F(\vec{w}) \|^2 \le \sigma^2$, we can bound the gradient confusion parameter $\eta$ in terms of other quantities. For example, suppose the true gradient $\nabla F(\vec{w}) = \nabla f_1(\vec{w})/2 + \nabla f_2(\vec{w})/2$. Then we can write: $| \langle \nabla f_1(\vec{w}), \nabla f_2(\vec{w}) \rangle | \leq \sigma^2 + \| \nabla F(\vec{w}) \|^2$.
However, in general one cannot bound the gradient variance in terms of the gradient confusion parameter.  As a counter-example, consider a problem with the following distribution on the gradients:  $\frac{1}{1-p}$ samples with gradient $\frac{1}{\epsilon}$ and $\frac{1}{p}$ samples with gradient $\epsilon$, where $p = \epsilon \to 0$. In this case, the gradients are positive, so gradient confusion $\eta = 0$. The mean of the gradients is given by $1+\epsilon (1-\epsilon)$, which remains bounded as $\epsilon\to0$. On the other hand, the variance (and thus the squared norm of the stochastic gradients) is unbounded ($O(1/\epsilon)$ as $\epsilon \rightarrow 0$). 
A consequence of this is that in theorems \ref{thm:cosine} and \ref{thm:nonconvex}, the "noise term" (i.e., the second term in the RHS of the convergence bounds) does not depend on the learning rate in the general case. If gradients have unbounded variance, lowering the learning rate does not reduce the variance of the SGD updates, and thus does not reduce the noise term. 



\textbf{Connections to gradient diversity:}
Gradient diversity \citep{yin2017gradient} also measures the degree to which individual gradients at different data samples are different from each other. However, the gradient diversity measure gets larger as the individual gradients become orthogonal to each other, and further increases as the gradients start pointing in opposite directions. 
On the other hand, gradient confusion between two individual gradients is zero unless the inner product between them is negative. 
As we show in this paper, this has important implications when we study the convergence of SGD in the over-parameterized setting: increased width makes gradients more orthogonal to each other improving trainability, while increased depth result in gradients pointing in opposite directions making networks harder to train.
Thus, we view our papers to be complementary, providing insights about different issues (large batch distributed training vs.~small batch convergence).

\textbf{Related work on the impact of network architecture:}
%
\citet{balduzzi2017shattered} studied neural networks with ReLU activations at Gaussian initializations, and showed that gradients become increasingly negatively correlated with depth.
\citet{hanin2018neural} showed that the variance of gradients in fully connected networks with ReLU activations is exponential in the sum of the reciprocals of the hidden layer widths at Gaussian initializations. 
In a follow-up work, \citet{hanin2018start} showed that this sum of the reciprocals of the hidden layer widths determines the variance of the sizes of the activations at each layer. When this sum of reciprocals is too large, early training dynamics are very slow, suggesting the difficulties of starting training on deeper networks, as well as the benefits of increased width.

\textbf{Other work on SGD convergence:}
%
There has recently been a lot of interest in analyzing conditions under which SGD converges to global minimizers of over-parameterized linear and non-linear neural networks. \citet{arora2018convergence} shows SGD converges linearly to global minimizers for linear neural networks under certain conditions. \citet{du2018gradient, allen2018convergence, zou2018stochastic, brutzkus2017sgd} also show convergence to global minimizers of SGD for non-linear networks. 
This paper complements these recent results by studying how low gradient confusion contributes to SGD's success on over-parameterized neural networks used in practice.

\section{Discussion}
\label{sec:discussion}

In this paper, we study how neural network architecture affects the trainability of networks and the dynamics of SGD. To rigorously analyze this, we introduce a concept called gradient confusion, and show that when gradient confusion is low, SGD has fast convergence. We show at standard Gaussian initializations, increasing layer width leads to lower gradient confusion, making the model easier to train. In contrast, increasing depth results in higher gradient confusion, making models harder to train. These results indicate that increasing the layer width with the network depth is important to maintain trainability of the neural network. This is supported by other recent work that suggest that the width should increase linearly with depth in a Gaussian-initialized neural network to help dynamics early in training \citep{hanin2018neural, hanin2018start}.

Many previous results have shown how deeper models are more efficient at modeling higher complexity function classes than wider models, and thus depth is essential for the success of neural networks \citep{eldan2016power, telgarsky2016benefits}. Indeed, practitioners over the years have achieved state-of-the-art results on various tasks by making networks deeper, without necessarily making networks wider. We thus study techniques that enable us to train deep models without requiring us to increase the width with depth. Most state-of-the-art neural networks currently contain both skip connections and normalization layers. We thus, empirically study the effect of introducing batch normalization and skip connections to a neural network. We show that the combination of batch normalization and skip connections lower gradient confusion and help train very deep models, explaining why many neural networks used in practice are so efficiently trained. Furthermore, we show how orthogonal initialization techniques provide a promising direction for improving the trainability of very deep networks.


Our results provide a number of important insights that can be used for neural network model design. We demonstrate that the gradient confusion condition could be useful as a measure of trainability of networks, and thus could potentially be used to develop algorithms for more efficient training. 
Additionally, the correlation between gradient confusion and the test set accuracies shown in appendix \ref{app:extra_results} suggest that an interesting topic for future work would be to investigate the connection between gradient confusion and generalization \citep{fort2019stiffness}. 
Our results also suggest the importance of further work on orthogonal initialization schemes for neural networks with non-linear activations that make training very deep models possible.

\section*{Acknowledgements}
The authors thank Brendan O’Donoghue, Aleksandar Botev, James Martens, Sudha Rao, and Samuel L.~Smith for helpful discussions and for reviewing earlier versions of this manuscript. This paper was supported by the ONR MURI program, AFOSR MURI Program, and the National Science Foundation DMS directorate.

\nocite{lecun2012efficient, nesterov2018lectures, martens2016second, sagun2017empirical, cooper2018loss, chaudhari2016entropy, wu2017towards, ghorbani2019investigation, neyshabur2018towards, dziugaite2017computing, nagarajan2019generalization, zou2018stochastic, allen2018convergence, du2018gradient, oymak2018overparameterized, vershynin2016high, tao2012topics, boucheron2013concentration, sedghi2018singular, milman1986asymptotic}

\bibliographystyle{icml2020}
\bibliography{references}
\newpage
\appendix

\onecolumn

\section*{Appendix}
We first briefly outline the different sections in the appendix.
\begin{itemize}
\item In appendix \ref{app:extra_results}, we provide details of our experimental setup, and provide additional empirical results on fully connected networks, convolutional networks and residual networks with the MNIST, CIFAR-10 and CIFAR-100 datasets.

\item In appendix \ref{app:orthovec}, we state and prove a lemma on the near orthogonality of random vectors, which we refer to in the main text. This result is often attributed to \citet{milman1986asymptotic}.

\item In appendix \ref{app:hessians}, we provide some intuition on why many standard over-parameterized neural networks with low-rank Hessians might have low gradient confusion for a large set of weights near the minimizer.

\item In appendix \ref{app:missing_proofs}, we provide the proofs of the theorems presented in the main section. In appendix \ref{appsec:rate}, we provide proofs of theorems \ref{thm:cosine} and \ref{thm:nonconvex}. In appendix \ref{appsec:helperlemma}, we provide the proof of lemma \ref{lem:lossPropNeural}, which we refer to in the main text. In appendix \ref{appsec:nnconc}, we provide proofs of theorem \ref{thm:arbitraryNN} and corollary \ref{thm:uniformNN}. In appendix \ref{appsec:randomw}, we provide the proof of theorem \ref{thm:fixedData}. In appendix \ref{appsec:orthoinit}, we provide the proof of theorem~\ref{thm:OrthInit}.

\item In appendix \ref{app:technical_lemmas}, we briefly describe a few lemmas that we require in our analysis.

\item In appendix \ref{app:small_weights}, we discuss the small weights assumption (assumption \ref{ass:small_weight}), which is required for theorem \ref{thm:arbitraryNN}, corollary \ref{thm:uniformNN} and theorem \ref{thm:OrthInit} in the main text.
\end{itemize}

\section{Additional experimental results}
\label{app:extra_results}

In this section, we present more details about our experimental setup, as well as, additional experimental results on a range of models (MLPs, CNNs and Wide ResNets) and a range of datasets (MNIST, CIFAR-10, CIFAR-100).

\subsection{MLPs on MNIST}

To further test the main claims in the paper, we performed additional experiments on an image classification problem on the MNIST dataset using fully connected neural networks. We iterated over neural networks of varying depth and width, and considered both the identity activation function (i.e., linear neural networks) and the tanh activation function. We also considered two different weight initializations that are popularly used and appropriate for these activation functions:
\begin{itemize}
\item The Glorot normal initializer \citep{glorot2010understanding} with weights initialized by sampling from the distribution $\mathcal{N}\big(0, 2 / (\text{fan-in} + \text{fan-out})\big)$, where $\text{fan-in}$ denotes the number of input units in the weight matrix, and $\text{fan-out}$ denotes the number of output units in the weight matrix.
\item The LeCun normal initializer \citep{lecun2012efficient} with weights initialized by sampling from the distribution $\mathcal{N}\big(0, 1 / \text{fan-in}\big)$.
\end{itemize}
We considered the simplified case where all hidden layers have the same width $\ell$. Thus, the first weight matrix $\vec{W}_0 \in \mathbb{R}^{\ell \times d}$, where $d = 784$ for the $28 \times 28$-sized images of MNIST; all intermediate weight matrices $\{ \vec{W}_p \}_{p \in [\beta-1]} \in \mathbb{R}^{\ell \times \ell}$; and the final layer $\vec{W}_\beta \in \mathbb{R}^{10 \times \ell}$ for the 10 image classes in MNIST. We added biases to each layer, which we initialized to 0. We used softmax cross entropy as the loss function. We use MLP-$\beta$-$\ell$ to denote this fully connected network of depth $\beta$ and width $\ell$. We used the standard train-valid-test splits of 40000-10000-10000 for MNIST.

This relatively simple model gave us the ability to iterate over a large number of combinations of network architectures of varying width and depth, and different activation functions and weight initializations. Linear neural networks are an efficient way to directly understand the effect of changing depth and width without increasing model complexity over linear regression. Thus, we considered both linear and non-linear neural networks in our experiments.

We used SGD with constant learning rates for training with a mini-batch size of 128 and trained each model for 40000 iterations (more than 100 epochs). The constant learning rate $\alpha$ was tuned over a logarithmically-spaced grid: $$\alpha \in \{10^0, 10^{-1}, 10^{-2}, 10^{-3}, 10^{-4}, 10^{-5}, 10^{-6}\}.$$ We ran each experiment 10 times (making sure at least 8 of them ran till completion), and picked the learning rate that achieved the lowest training loss value on average at the end of training. Our grid search was such that the optimal learning rate never occurred at one of the extreme values tested.

To measure gradient confusion at the end training, we sampled 1000 pairs of mini-batches each of size 128 (the same size as the training batch size). We calculated gradients on each of these pairs of mini-batches, and then calculated the cosine similarity between them. To measure the worse-case gradient confusion, we computed the lowest gradient cosine similarity among all pairs. We explored the effect of changing depth and changing width on the different activation functions and weight initializations. We plot the final training loss achieved for each model and the minimum gradient cosine similarities calculated over the 1000 pairs of gradients at the end of training. For each point, we plot both the mean and the standard deviation over the 10 independent runs.

\addtocontents{toc}{\protect\setcounter{tocdepth}{1}}


\textbf{The effect of depth. }
We first present results showing the effect of network depth. We considered a fixed width of $\ell = 100$, and varied the depth of the neural network, on the log scale, as: $$\beta \in \{3, 10, 30, 100, 300, 1000\}.$$ Figure \ref{fig:depth_mnist} shows results on neural networks with identity and tanh activation functions for the two weight initializations considered (Glorot normal and LeCun normal).
Similar to the experimental results in section \ref{sec:experiments}, and matching our theoretical results in sections \ref{sec:initialization} and \ref{sec:general_depth}, we notice the consistent trend of gradient confusion increasing with increasing depth. This makes the networks harder to train with increasing depth, and this is evidenced by an increase in the final training loss value. By depth $\beta = 1000$, the increased gradient confusion effectively makes the network untrainable when using tanh non-linearities. 


\textbf{The effect of width. }
We explored the effect of width by varying the width of the neural network while keeping the depth fixed at $\beta = 300$. We chose a very deep model, which is essentially untrainable for small widths (with standard initialization techniques) and helps better illustrate the effects of increasing width. We varied the width of the network, again on the log scale, as: 
$$\ell \in \{10, 30, 100, 300, 1000\}.$$ Crucially, note that the smallest network considered here, MLP-300-10, still has more than 50000 parameters (i.e., more than the number of training samples), and the network with width $\ell = 30$ has almost three times the number of parameters as the high-performing MLP-3-100 network considered in the previous section. Figure \ref{fig:width_mnist} show results on linear neural networks and neural networks with tanh activations for both the Glorot normal and LeCun normal initializations.
As in the experimental results of section \ref{sec:experiments}, we see the consistent trend of gradient confusion decreasing with increasing width. Thus, wider networks become easier to train and improve the final training loss value. We further see that when the width is too small ($\ell = 30$), the gradient confusion becomes drastically high and the network becomes completely untrainable. 

\begin{figure*}[ht]
\centering
\subfigure[Linear NN, Glorot init]{\includegraphics[width=0.24\textwidth]{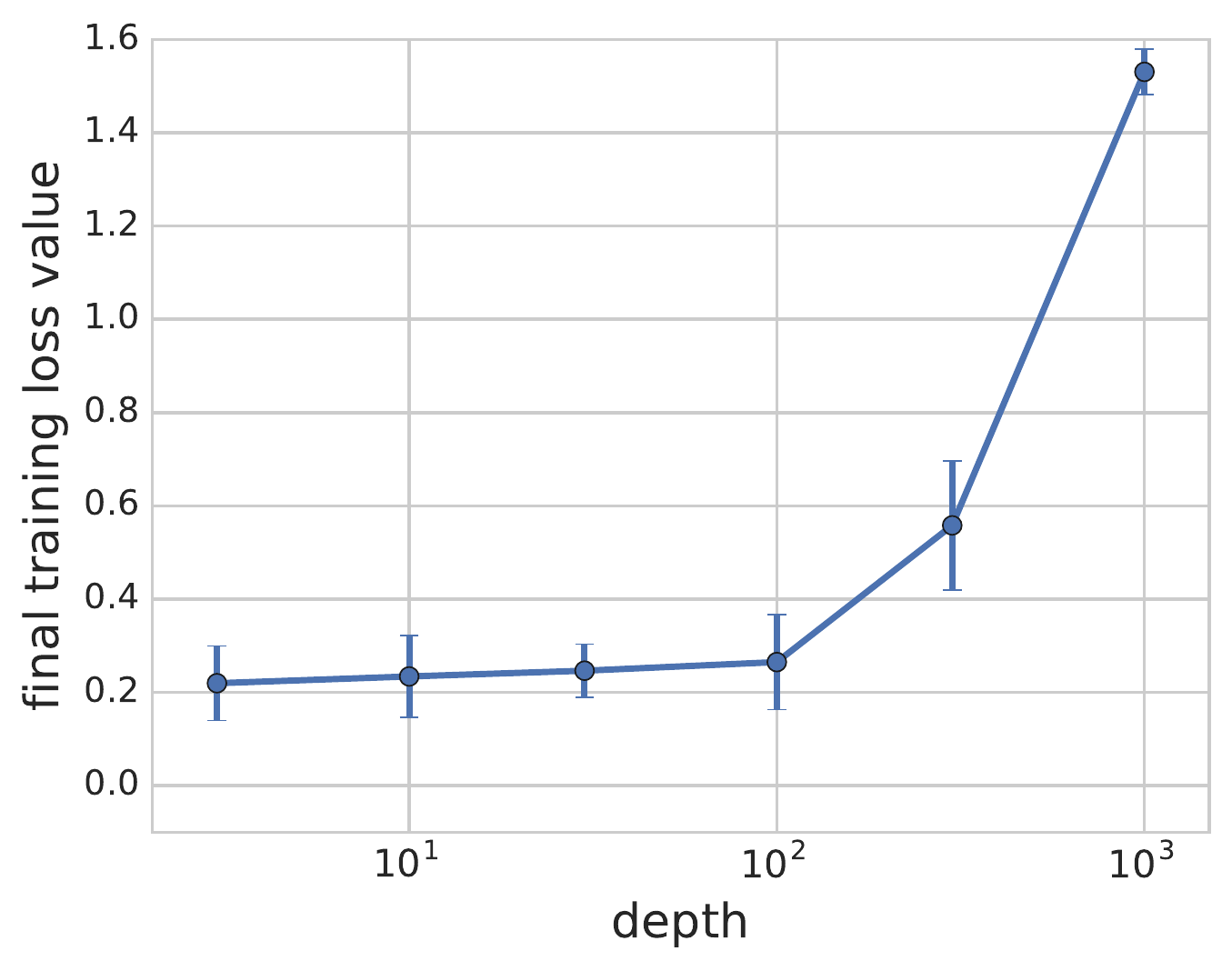}}
\subfigure[Linear NN, Glorot init]{\includegraphics[width=0.24\textwidth]{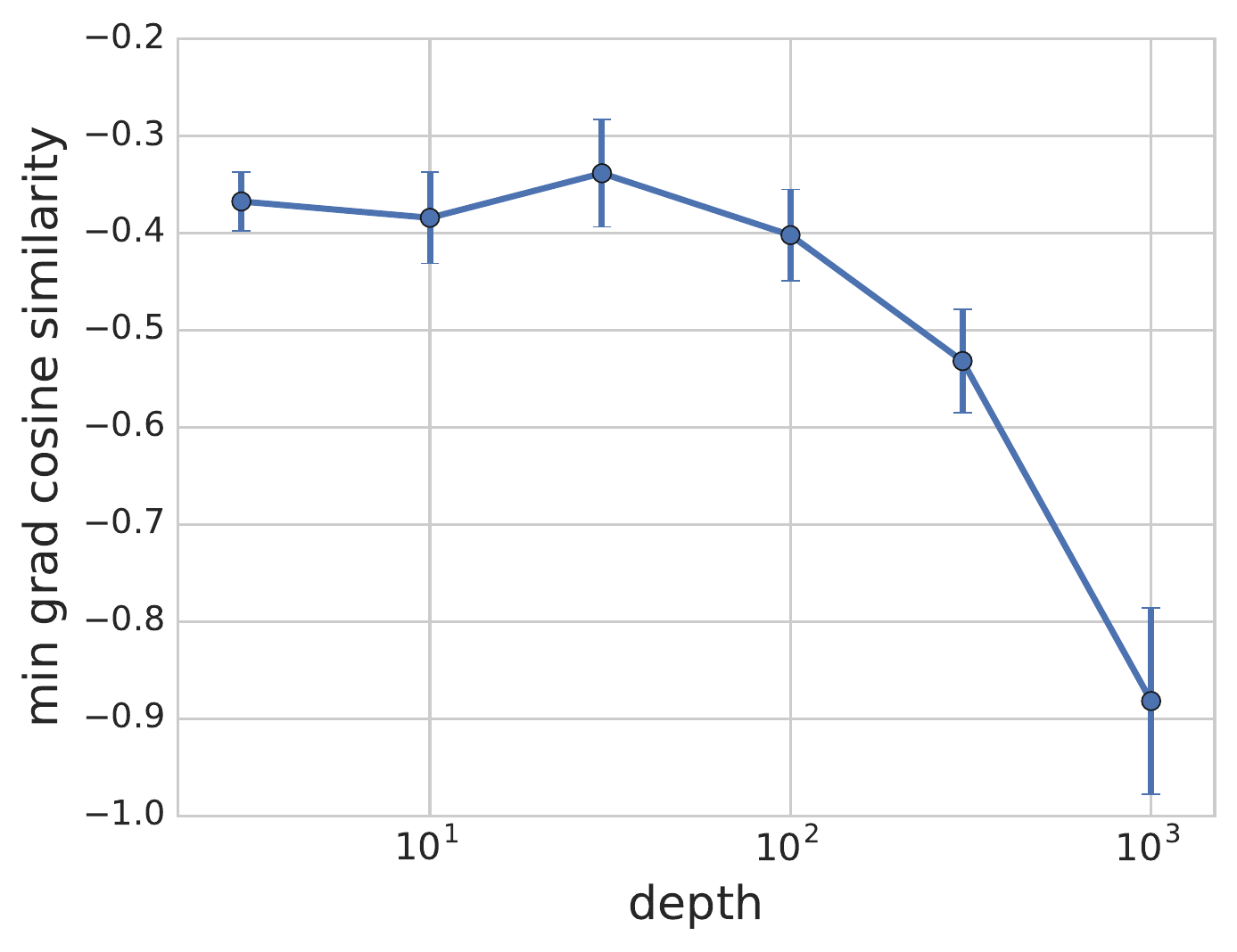}}
\subfigure[Linear NN, LeCun init]{\includegraphics[width=0.24\textwidth]{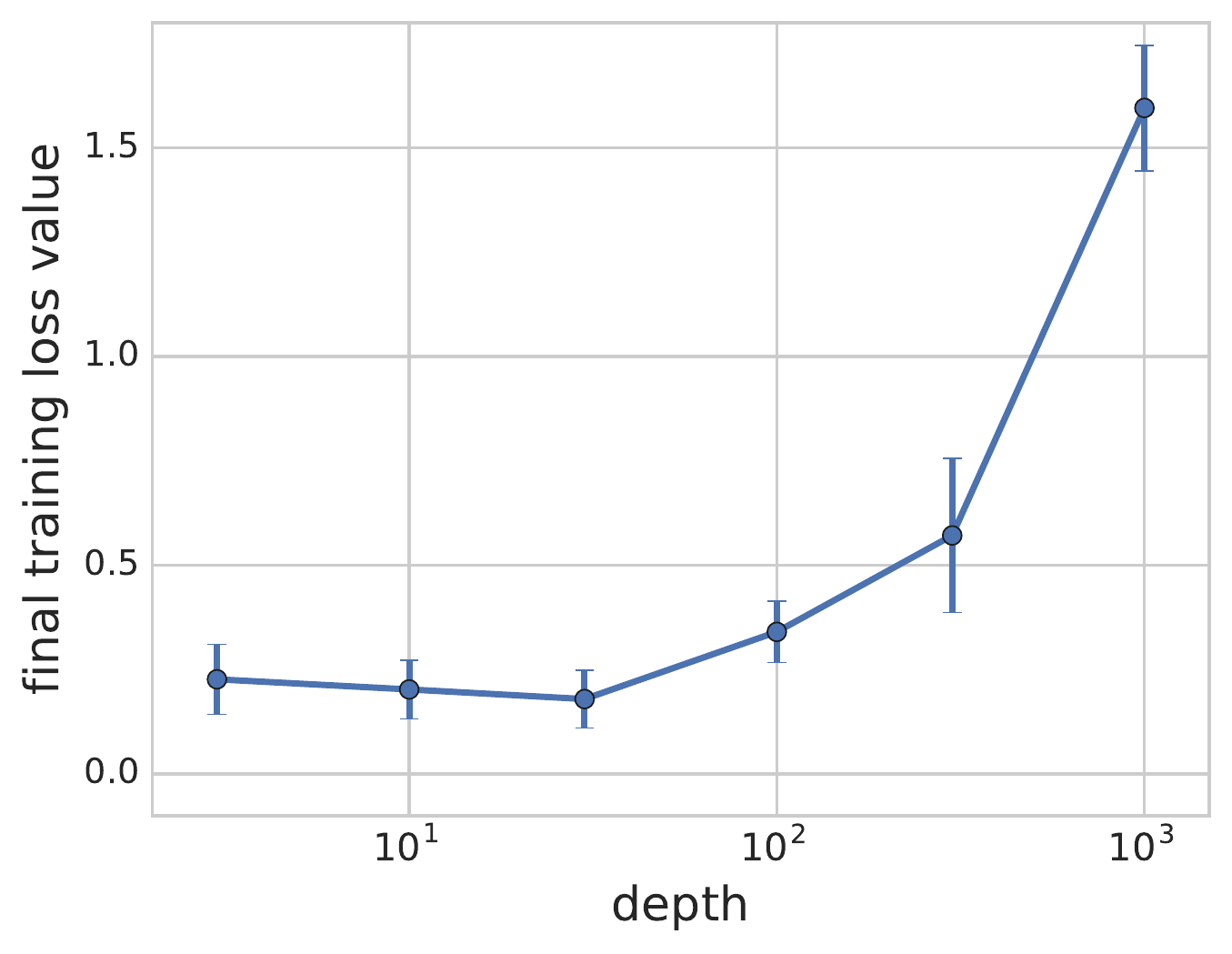}}
\subfigure[Linear NN, LeCun init]{\includegraphics[width=0.24\textwidth]{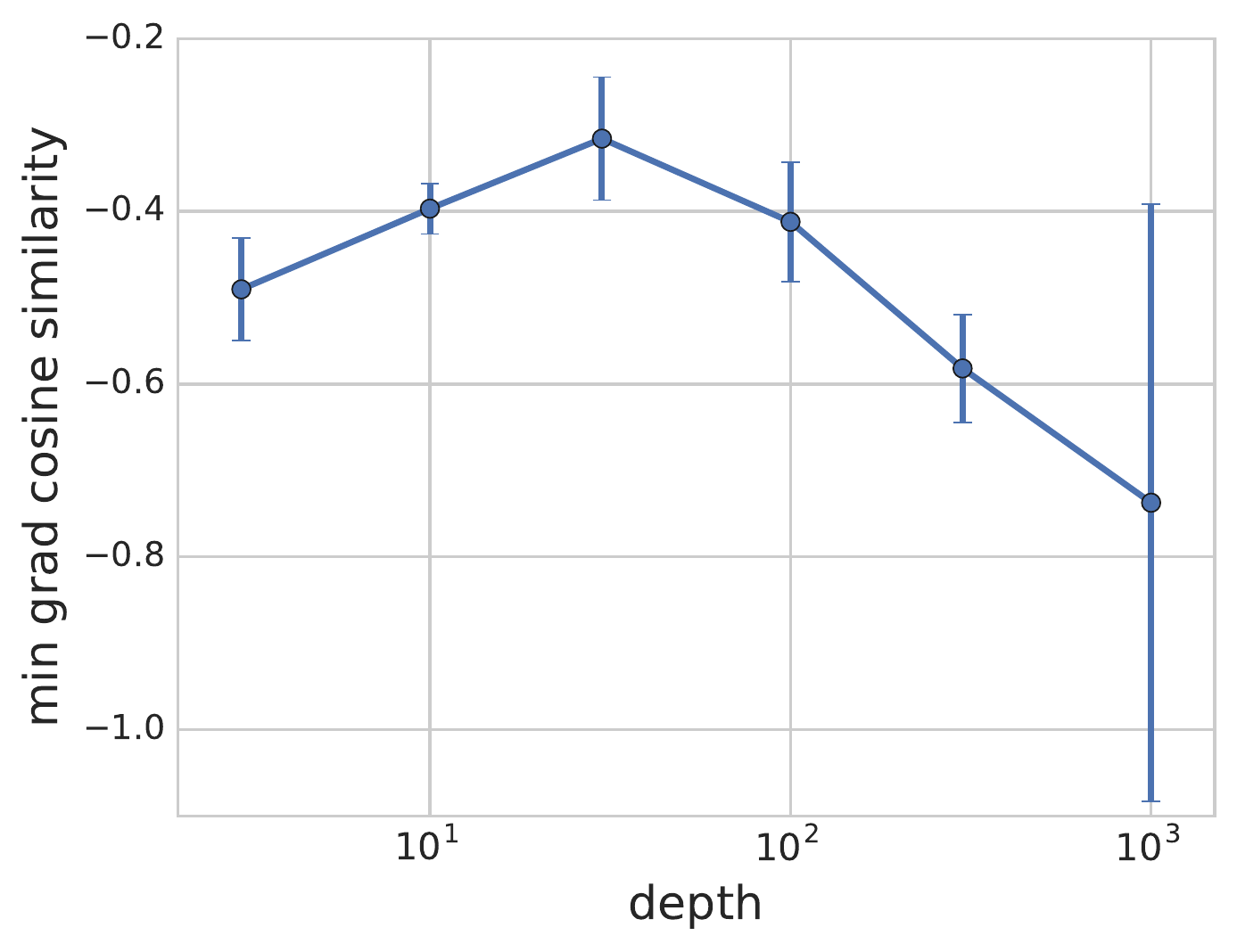}}
\subfigure[Tanh NN, Glorot init]{\includegraphics[width=0.24\textwidth]{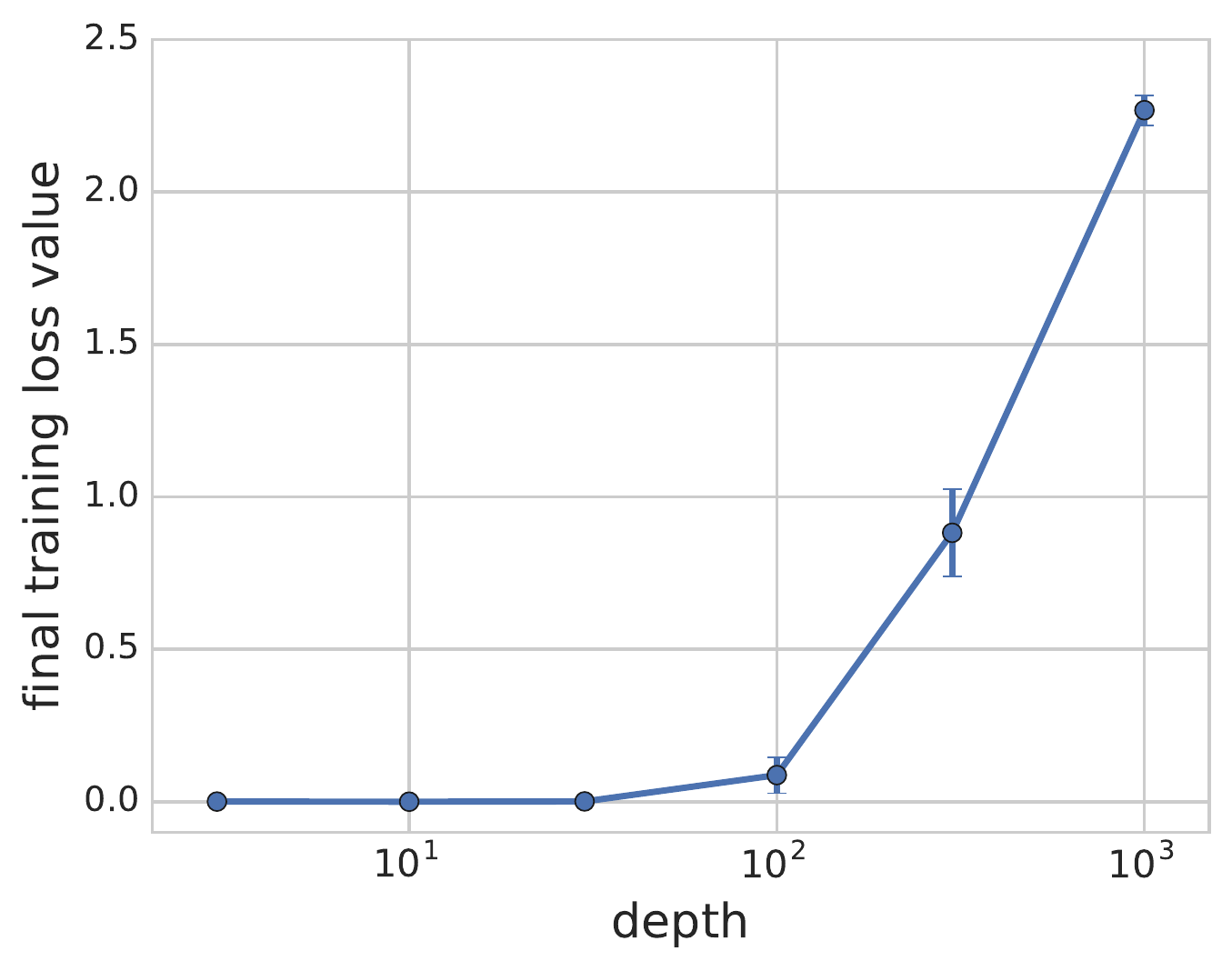}}
\subfigure[Tanh NN, Glorot init]{\includegraphics[width=0.24\textwidth]{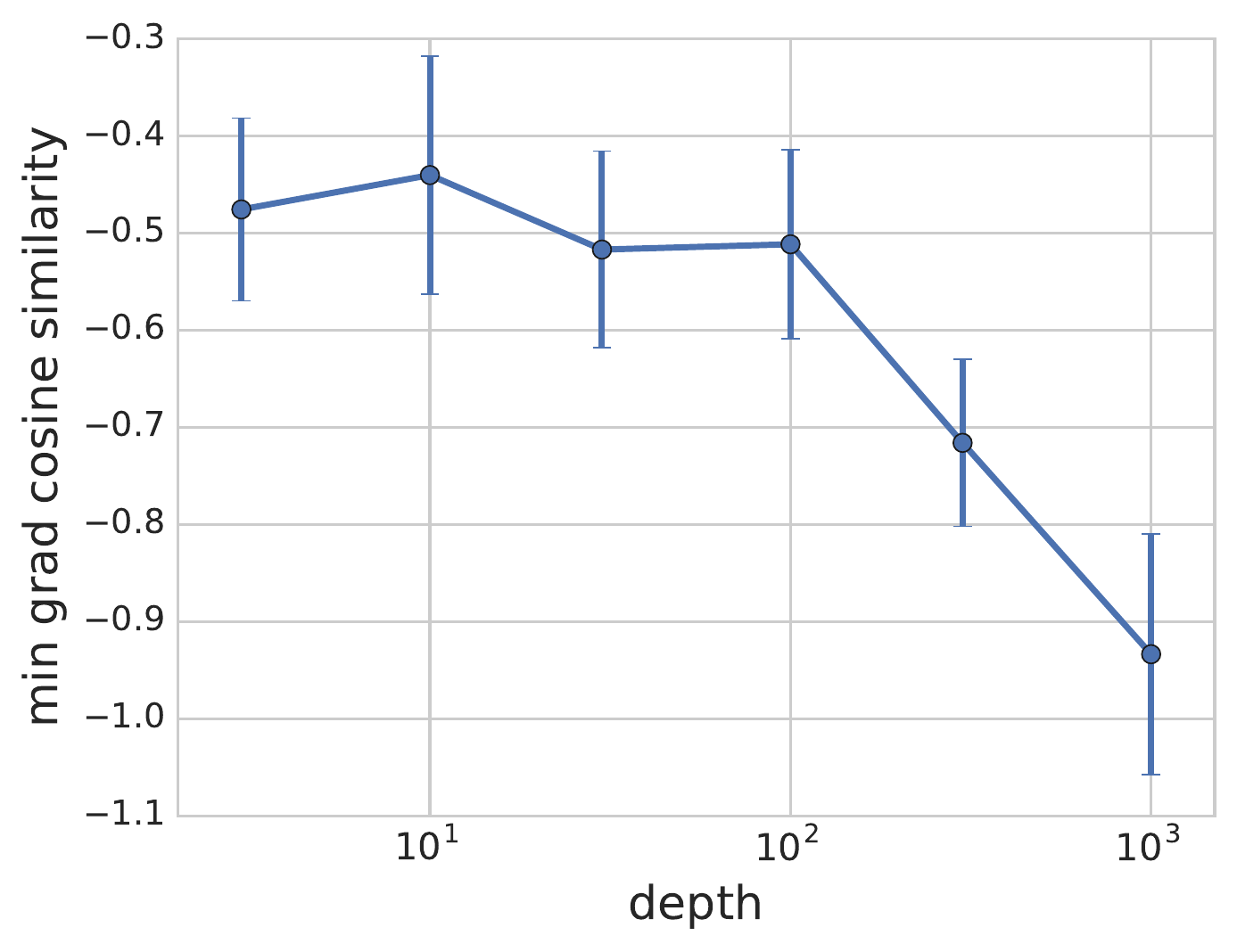}}
\subfigure[Tanh NN, LeCun init]{\includegraphics[width=0.24\textwidth]{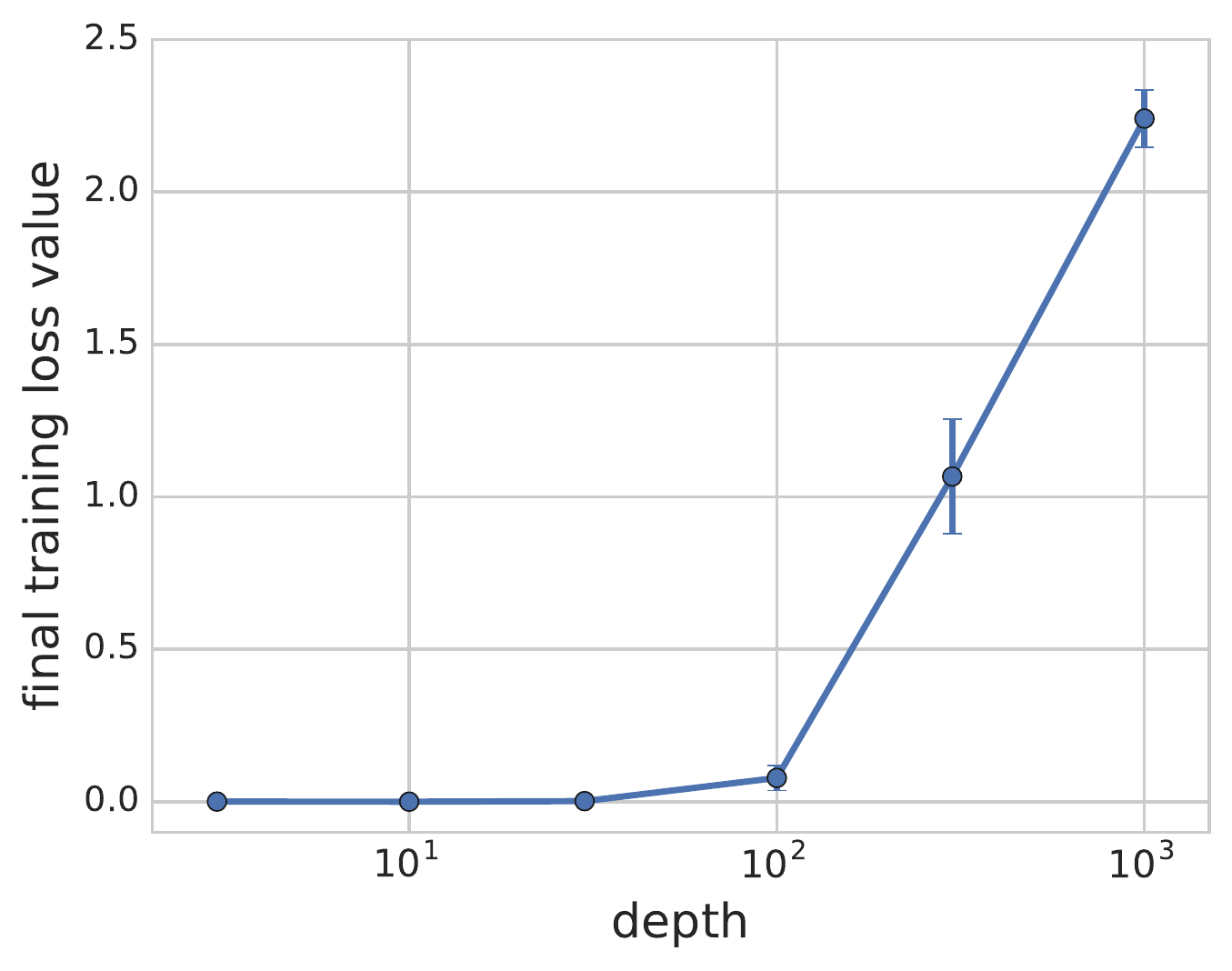}}
\subfigure[Tanh NN, LeCun init]{\includegraphics[width=0.24\textwidth]{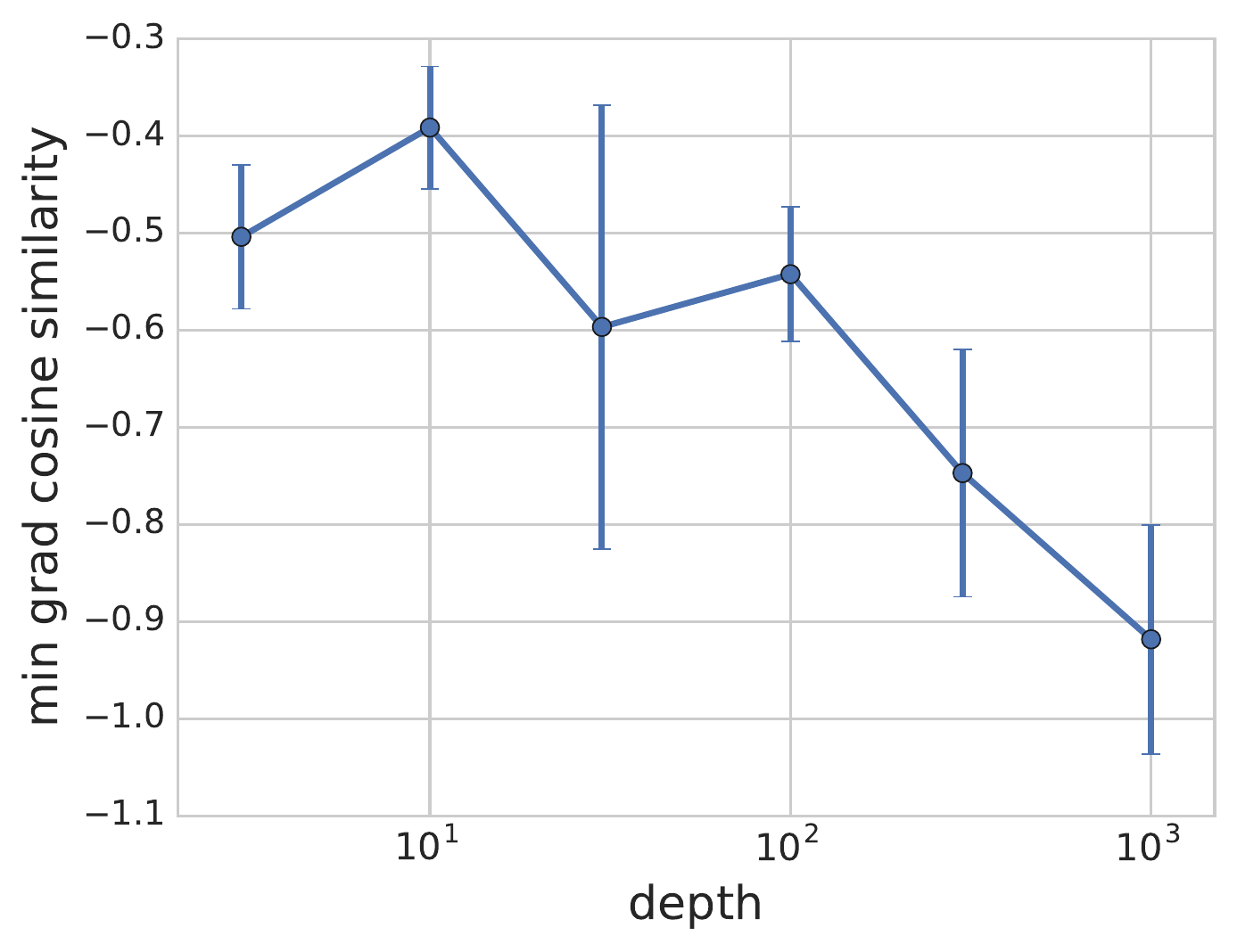}}
\caption{\small{Effect of varying depth on MLP-$\beta$-100.}}
\label{fig:depth_mnist}
\end{figure*}

\begin{figure*}[!h]
\centering
\subfigure[Linear NN, Glorot init]{\includegraphics[width=0.24\textwidth]{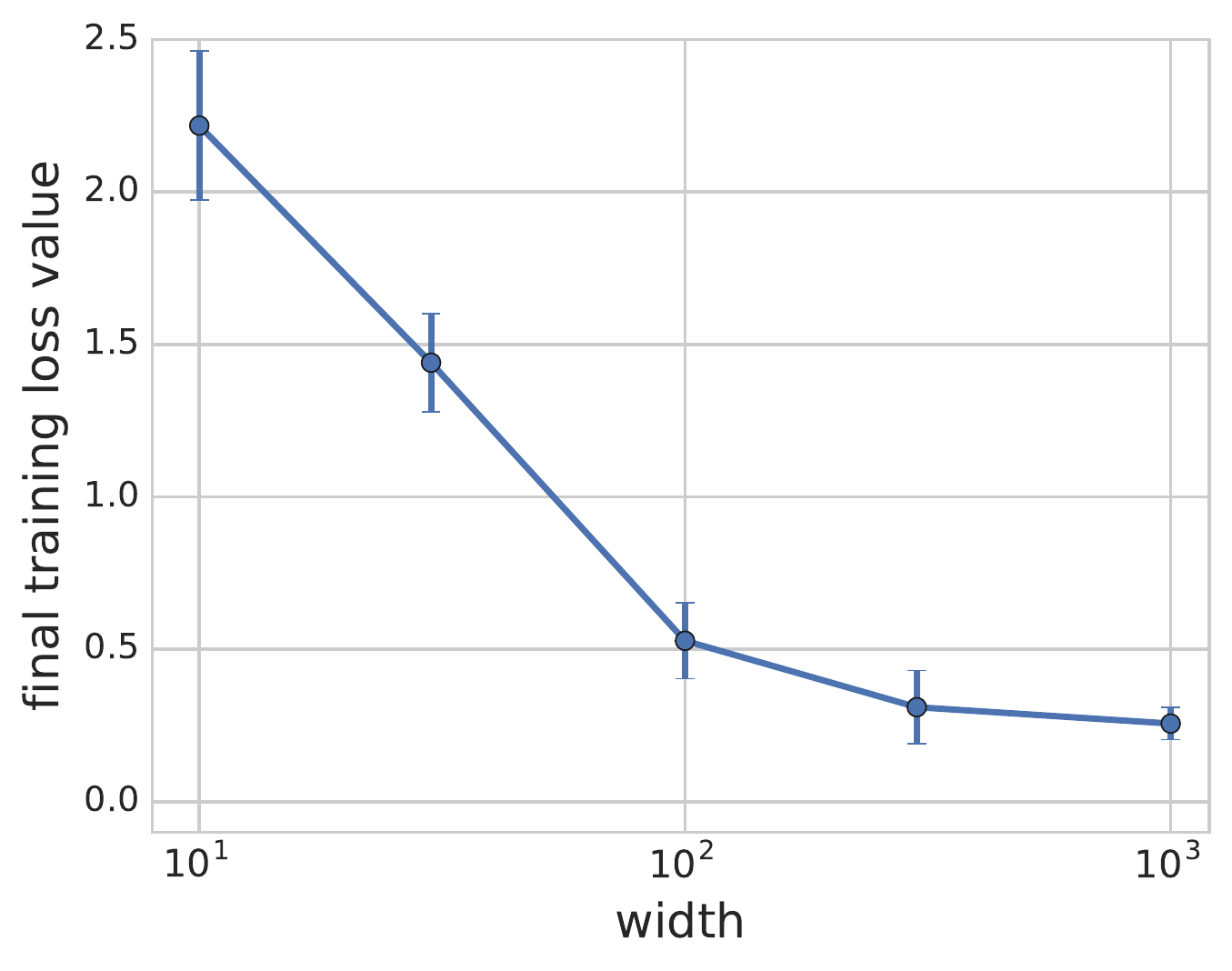}}
\subfigure[Linear NN, Glorot init]{\includegraphics[width=0.24\textwidth]{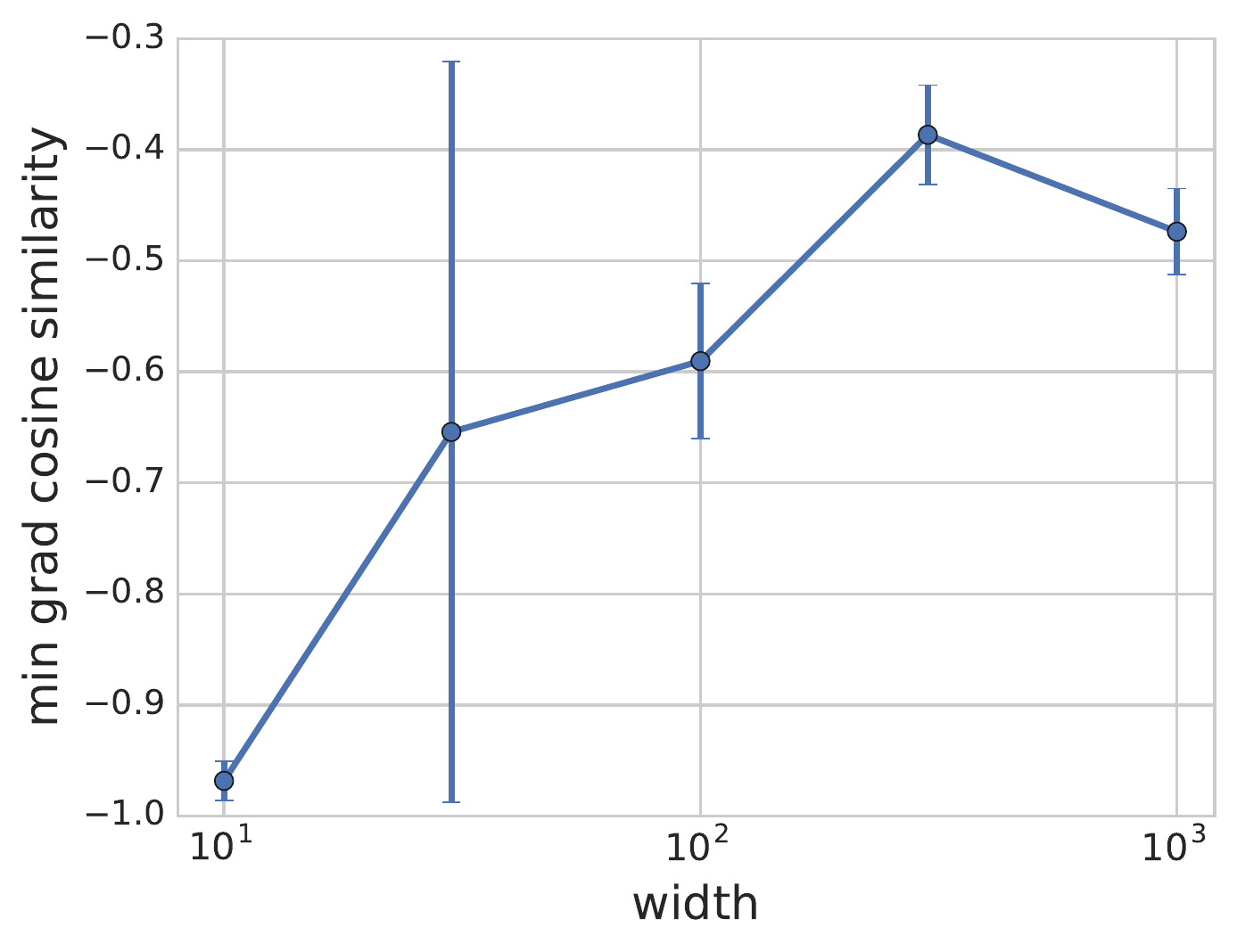}}
\subfigure[Linear NN, LeCun init]{\includegraphics[width=0.24\textwidth]{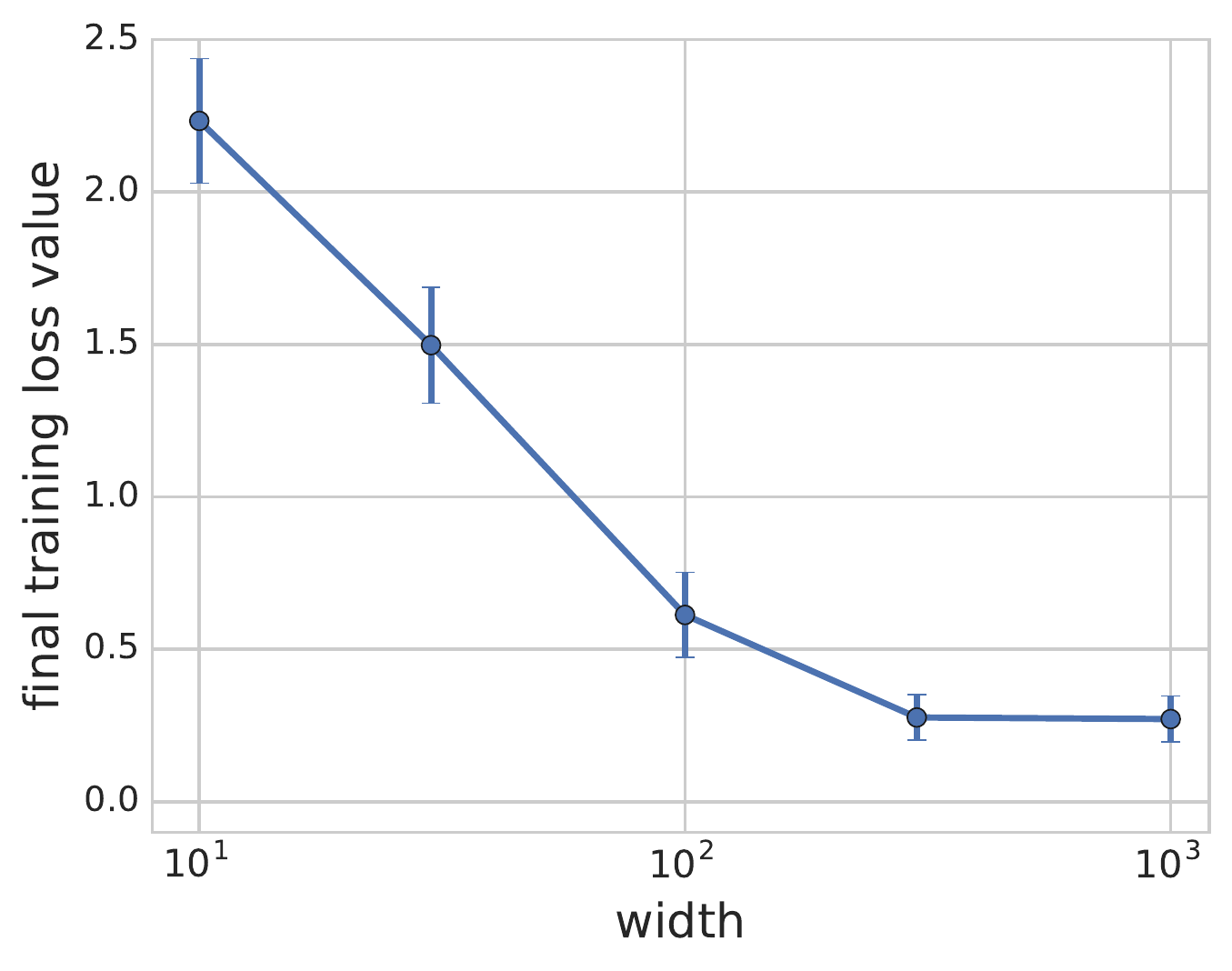}}
\subfigure[Linear NN, LeCun init]{\includegraphics[width=0.24\textwidth]{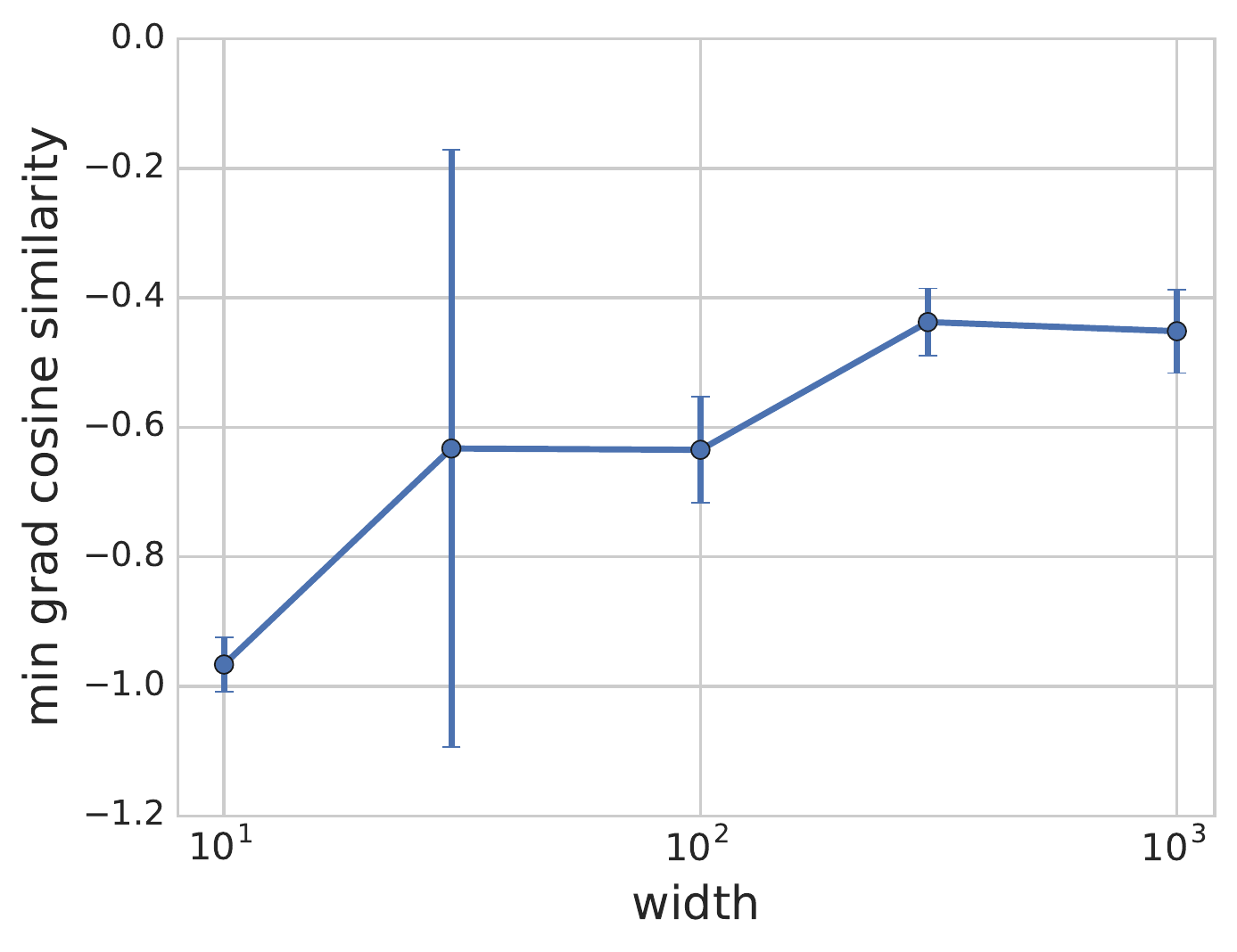}}
\subfigure[Tanh NN, Glorot init]{\includegraphics[width=0.24\textwidth]{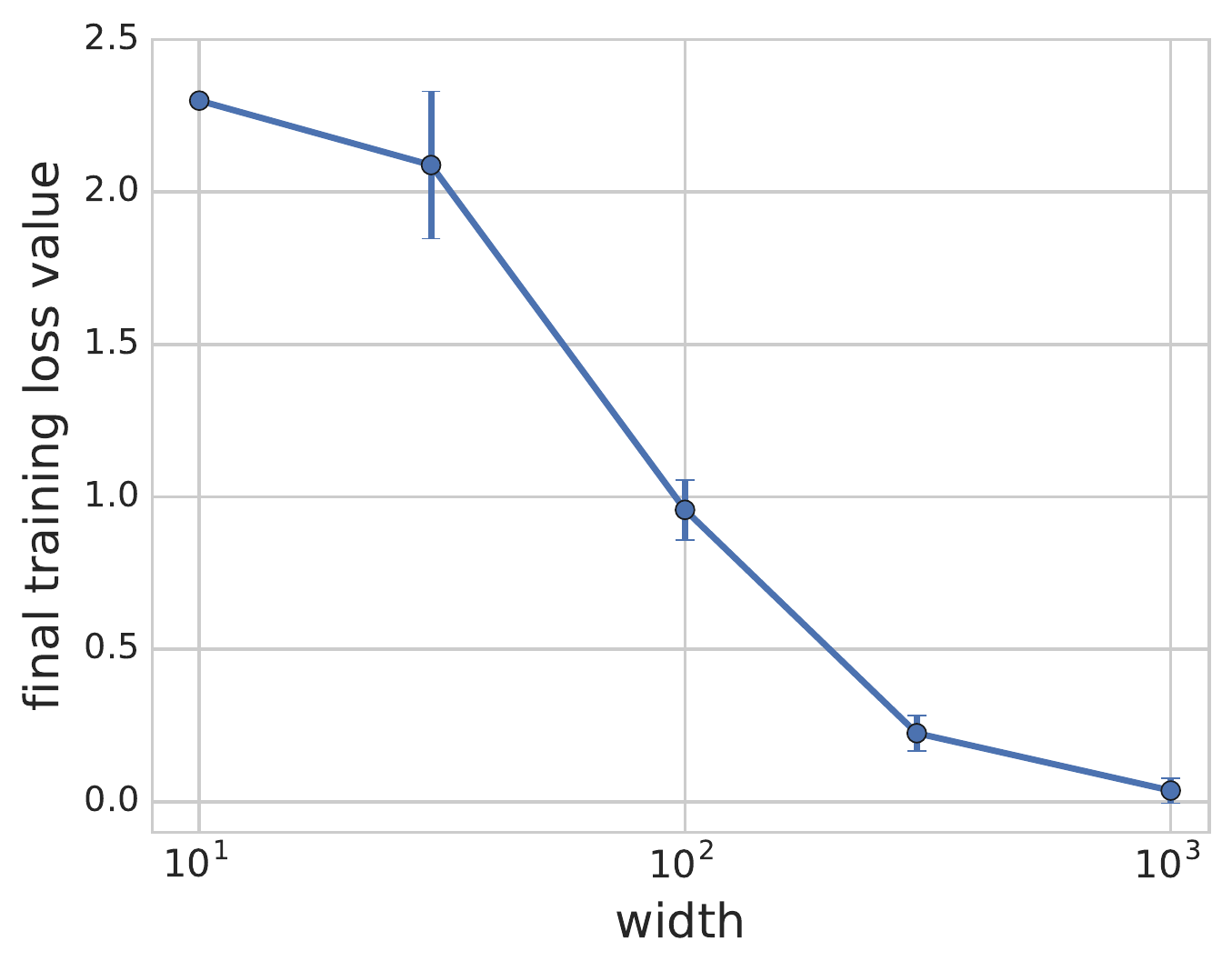}}
\subfigure[Tanh NN, Glorot init]{\includegraphics[width=0.24\textwidth]{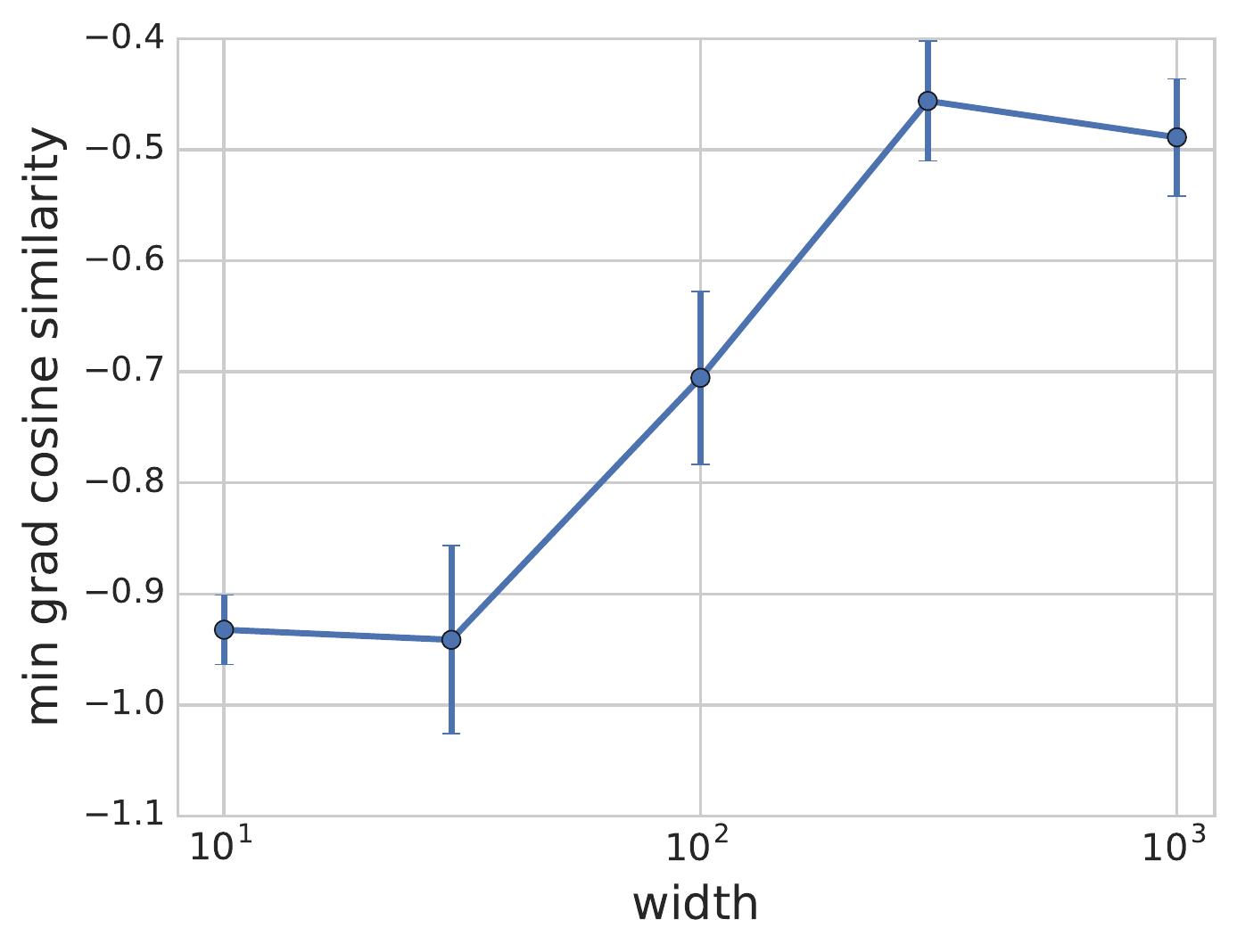}}
\subfigure[Tanh NN, LeCun init]{\includegraphics[width=0.24\textwidth]{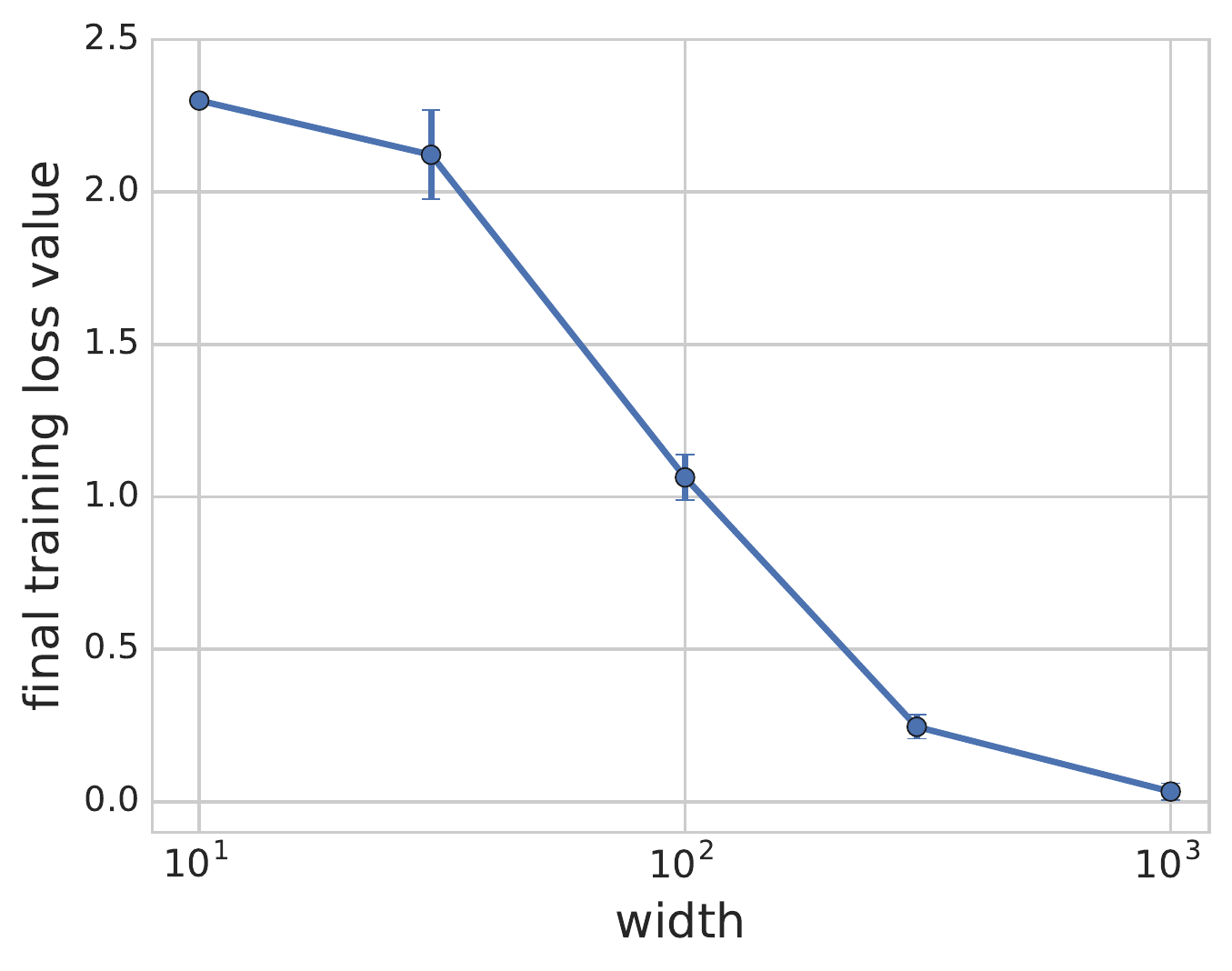}}
\subfigure[Tanh NN, LeCun init]{\includegraphics[width=0.24\textwidth]{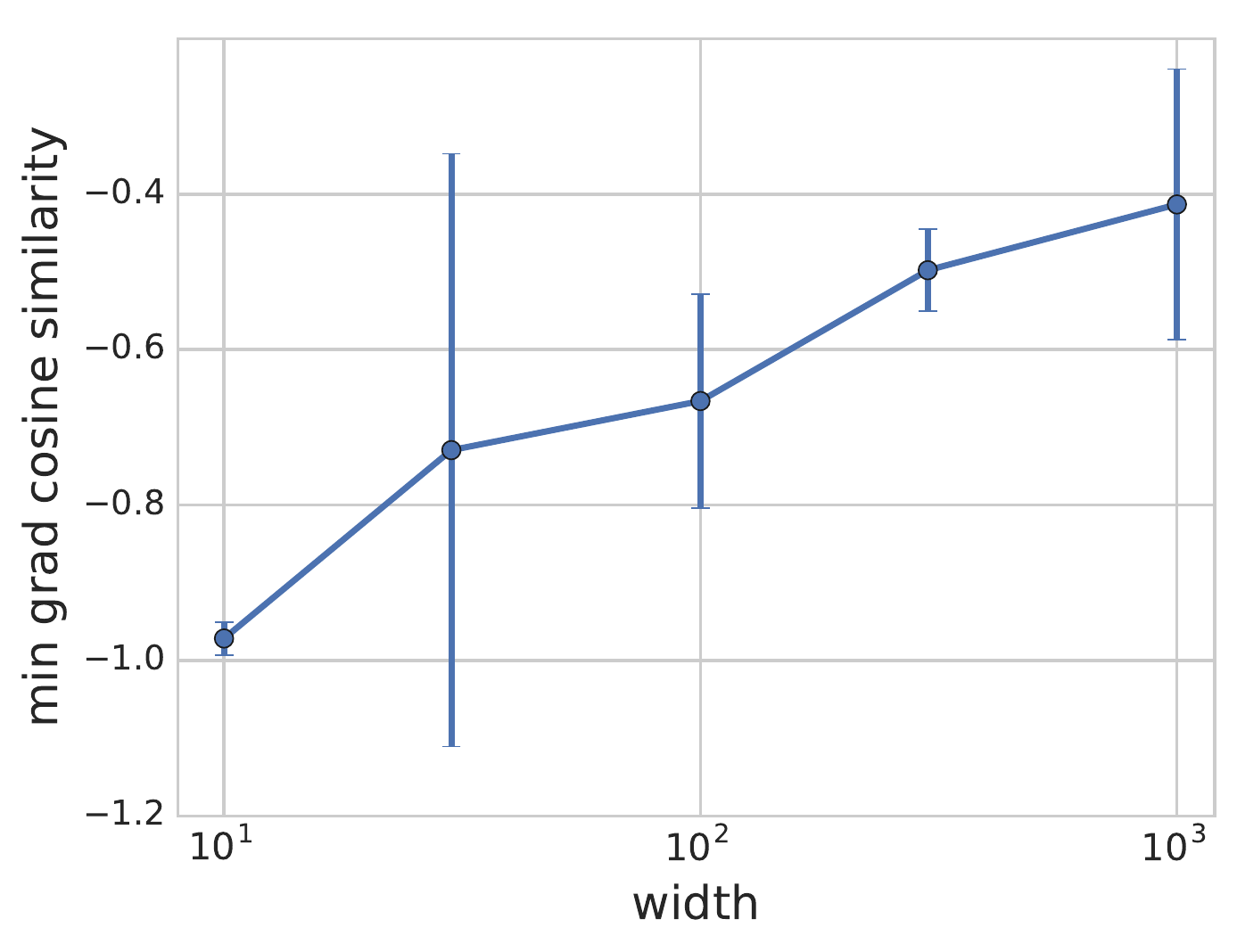}}
\caption{\small{Effect of varying width on MLP-300-$\ell$.}}
\label{fig:width_mnist}
\end{figure*}

\subsection{Additional experimental details for CNNs and WRNs}
\label{sec:supp_exp}

In this section, we review the details of our setup for the image classification experiments on CNNs and WRNs on the CIFAR-10 and CIFAR-100 datasets.

\subsubsection*{Wide residual networks}
The Wide ResNet (WRN) architecture \citep{zagoruyko2016wide} for CIFAR datasets is a stack of three groups of residual blocks. There is a downsampling layer between two blocks, and the number of channels (width of a convolutional layer) is doubled after downsampling.  In the three groups, the width of convolutional layers is $\{16\ell, 32\ell, 64\ell\}$, respectively. 
Each group contains $\beta_r$ residual blocks, and each residual block contains two $3 \times 3$ convolutional layers equipped with ReLU activation, batch normalization and dropout. 
There is a $3 \times 3$ convolutional layer with $16$ channels before the three groups of residual blocks.
And there is a global average pooling, a fully-connected layer and a softmax layer after the three groups. 
The depth of WRN is $\beta = 6 \beta_r + 4$.

For our experiments, we turned off dropout. Unless otherwise specified, we also turned off batch normalization. We added biases to the convolutional layers when not using batch normalization to maintain model expressivity. We used the MSRA initializer \citep{he2015delving} for the weights as is standard for this model, and used the same preprocessing steps for the CIFAR images as described in \citet{zagoruyko2016wide}. This preprocessing step involves normalizing the images and doing data augmentation \citep{zagoruyko2016wide}. We denote this network as WRN-$\beta$-$\ell$, where $\beta$ represents the depth and $\ell$ represents the width factor of the network. 

To study the effect of depth, we considered WRNs with width factor $\ell = 2$ and depth varying as:
$$\beta \in \{16, 22, 28, 34, 40, 52, 76, 100\}.$$
For cleaner figures, we sometimes plot a subset of these results: $\beta \in \{16, 28, 40, 52, 76, 100\}.$ 
To study the effect of width, we considered WRNs with depth $\beta = 16$ and width factor varying as:
$$\ell \in \{2, 3, 4, 5, 6\}.$$

\subsubsection*{Convolutional neural networks}
The WRN architecture contains skip connections that, as we show, help in training deep networks. To consider VGG-like convolutional networks, we consider a family of networks where we remove the skip connections from WRNs. Following the WRN convention, we denote these networks as CNN-$\beta$-$\ell$, where $\beta$ denotes the depth and $\ell$ denotes the width factor.

To study the effect of depth, we considered CNNs with width factor $\ell = 2$ and depth varying as:
$$\beta \in \{16, 22, 28, 34, 40\}.$$
To study the effect of width, we considered CNNs with depth $\beta = 16$ and width factor varying as:
$$\ell \in \{2, 3, 4, 5, 6\}.$$

\subsubsection*{Hyperparameter tuning and other details}
We used SGD as the optimizer without any momentum. Following \citet{zagoruyko2016wide}, we ran all experiments for 200 epochs with minibatches of size 128, and reduced the initial learning rate by a factor of 10 at epochs 80 and 160. We turned off weight decay for all our experiments.

We ran each individual experiment 5 times. We ignored any runs that were unable to decrease the loss from its initial value. We also made sure at least 4 out of the 5 independent runs ran till completion. When the learning rate is close to the threshold at which training is still possible, some runs may converge, while others may fail to converge. Thus, these checks ensure that we pick a learning rate that converges reliably in most cases on each problem. We show the standard deviation across runs in our plots. 

We tuned the optimal initial learning rate for each model over a logarithmically-spaced grid:
$$\alpha \in \{10^1, 3\times10^0, 10^0, 3\times10^{-1}, 10^{-1}, 3\times10^{-2}, 10^{-2}, 3\times10^{-3}, 10^{-3}, 3\times10^{-4}, 10^{-4}, 3\times10^{-5}\},$$
and selected the run that achieved the lowest final training loss value (averaged over the independent runs). Our grid search was such that the optimal learning rate never occurred at one of the extreme values tested. We used the standard train-valid-test splits of 40000-10000-10000 for CIFAR-10 and CIFAR-100.

To measure gradient confusion, at the end of every training epoch, we sampled 100 pairs of mini-batches each of size 128 (the same size as the training batch size). We calculated gradients on each mini-batch, and then computed pairwise cosine similarities. To measure the worse-case gradient confusion, we computed the lowest gradient cosine similarity among all pairs.
We also show the kernel density estimation of the pairwise gradient cosine similarities of the 100 minibatches sampled at the end of training (after 200 epochs), to see the concentration of the distribution. To do this, we combine together the 100 samples for each independent run and then perform kernel density estimation with a gaussian kernel on this data.

\subsection{Additional plots for CIFAR-10 on CNNs}
\label{app:additional_plots}

In section \ref{sec:experiments}, we showed results for image classification using CNNs on CIFAR-10. In this section, we show some additional plots for this experiment. 
Figure \ref{fig:c10_cnn_additional_depth} shows the effect of changing the depth, while figure \ref{fig:c10_cnn_additional_width} shows the effect of changing the width factor of the CNN.
We see that the final training loss and test set accuracy values show the same trends as in section \ref{sec:experiments}: deeper networks are harder to train, while wider networks are easier to train. As mentioned previously, theorems \ref{thm:cosine} and \ref{thm:nonconvex} indicate that we would expect the effect of gradient confusion to be more prominent near the end of training. From the plots we see that deeper networks have higher gradient confusion close to minimum, while wider networks have lower gradient confusion close to the minimum.

\begin{figure*}[t]
\centering
\subfigure[]{\includegraphics[width=0.32\textwidth]{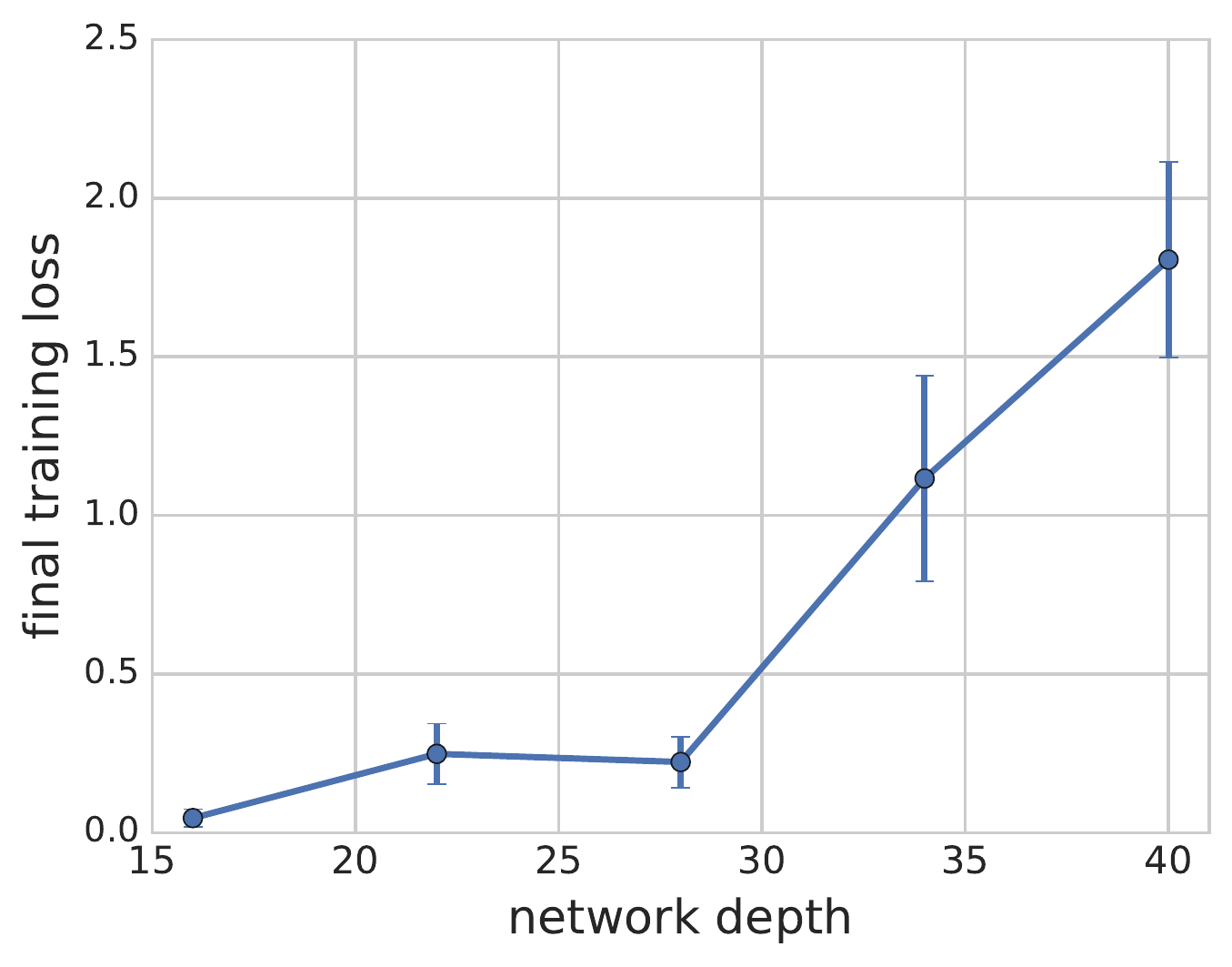}}
\subfigure[]{\includegraphics[width=0.32\textwidth]{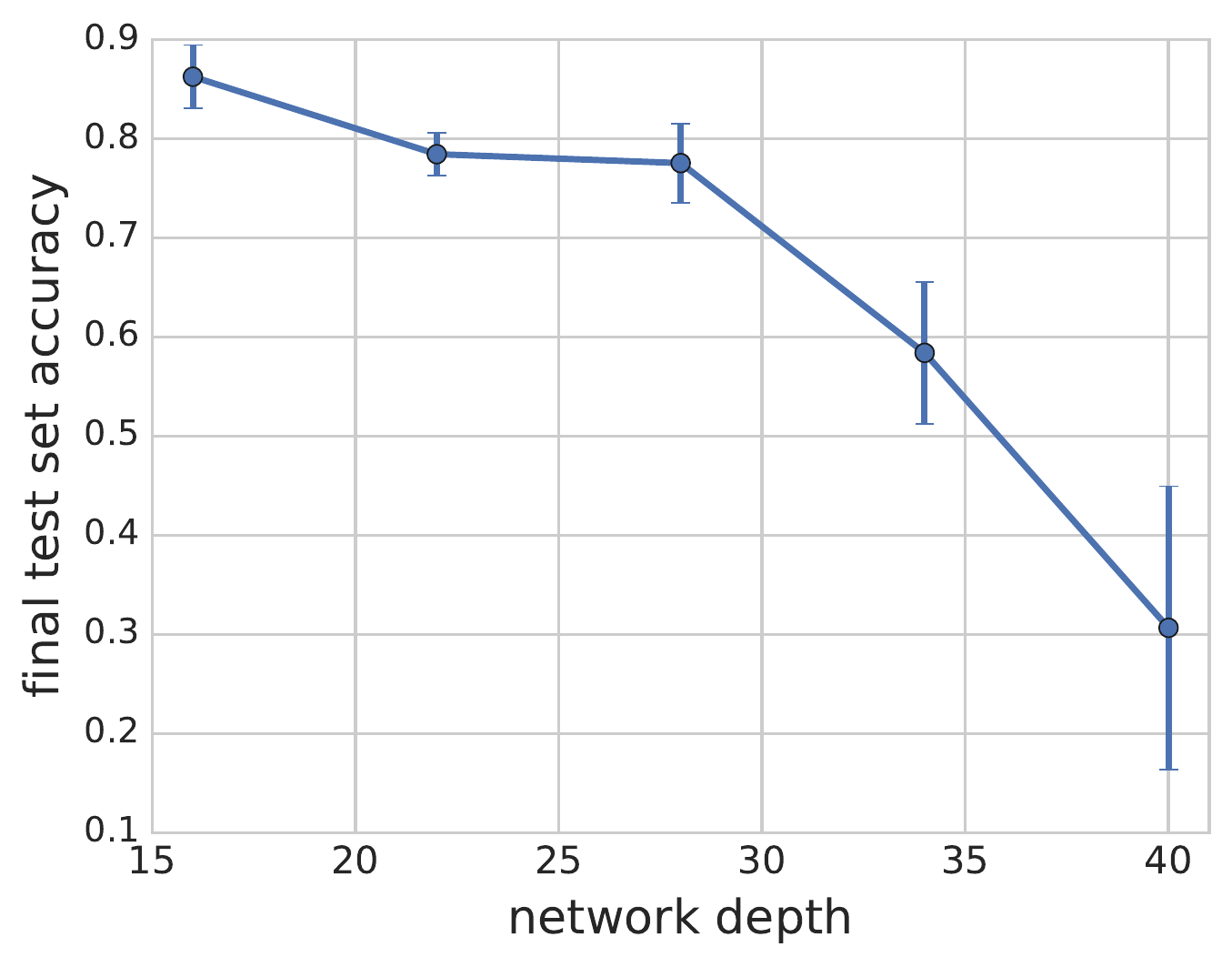}}
\subfigure[]{\includegraphics[width=0.32\textwidth]{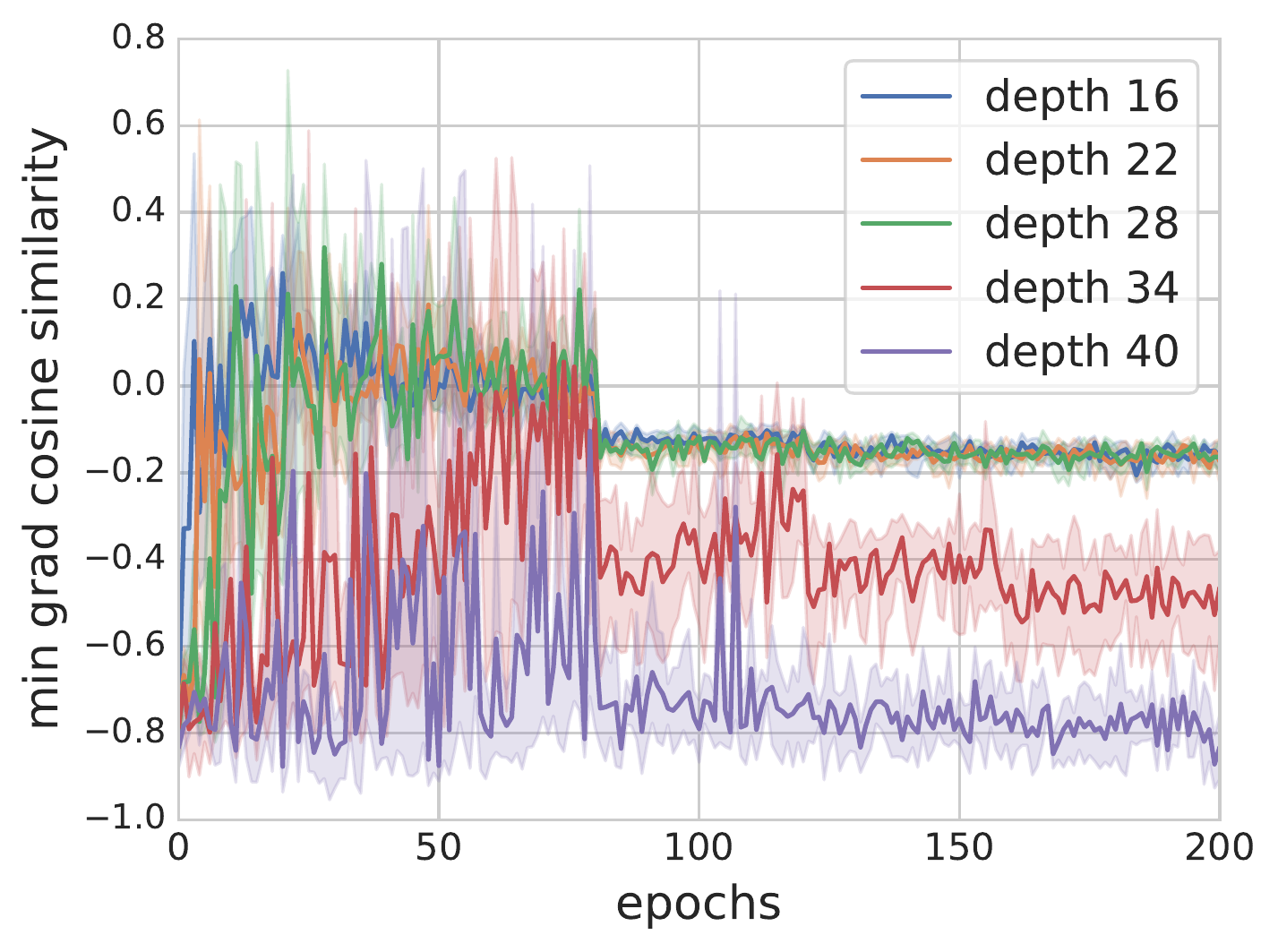}}
      \caption{The effect of network depth with CNN-$\beta$-2 on CIFAR-10. The plots show the
(a) final training loss values at the end of training, (b) final test set accuracy values at the end of training, and (c) the minimum of pairwise gradient cosine similarities during training.}
\label{fig:c10_cnn_additional_depth}
\end{figure*}

\begin{figure*}[t]
\centering
\subfigure[]{\includegraphics[width=0.32\textwidth]{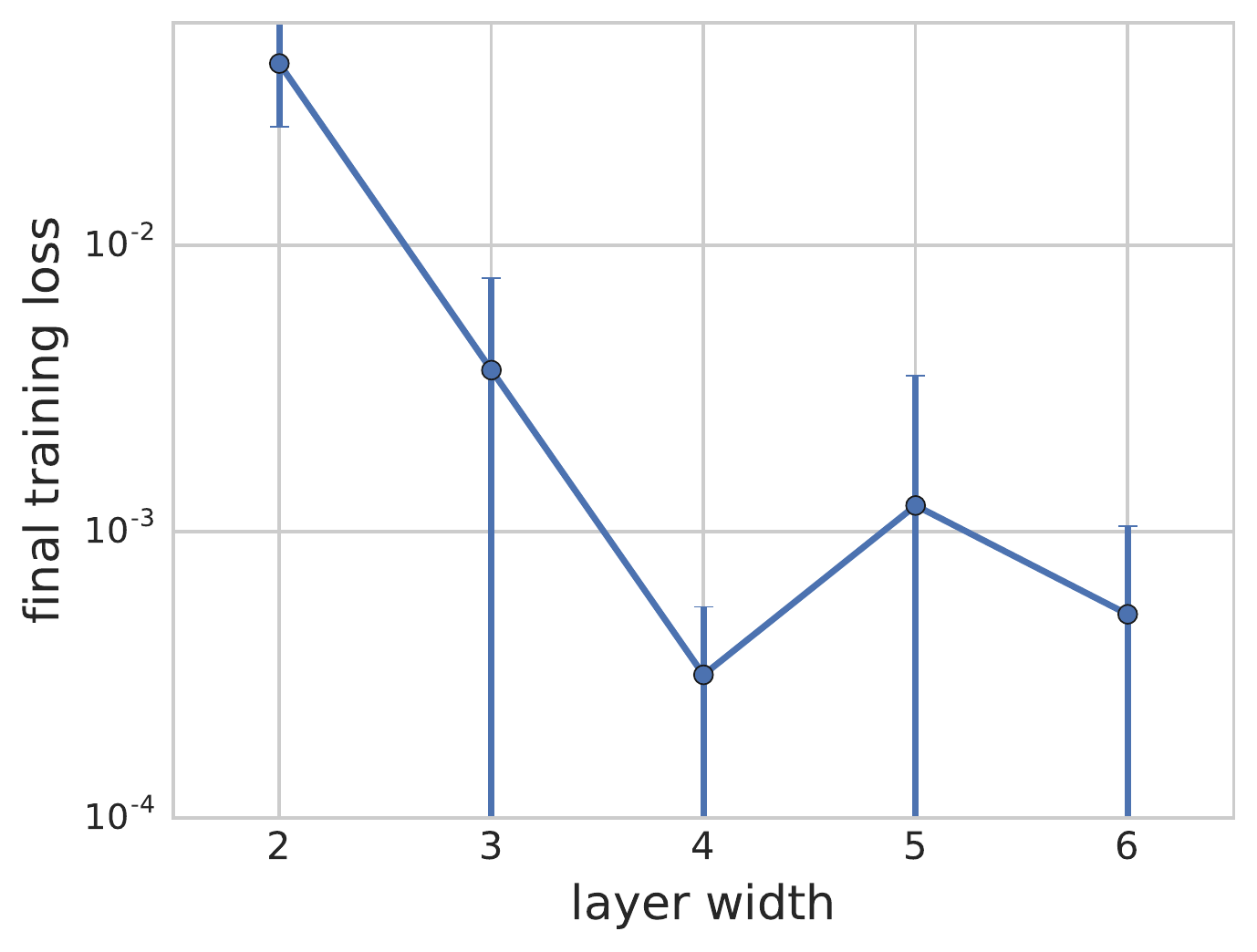}}
\subfigure[]{\includegraphics[width=0.32\textwidth]{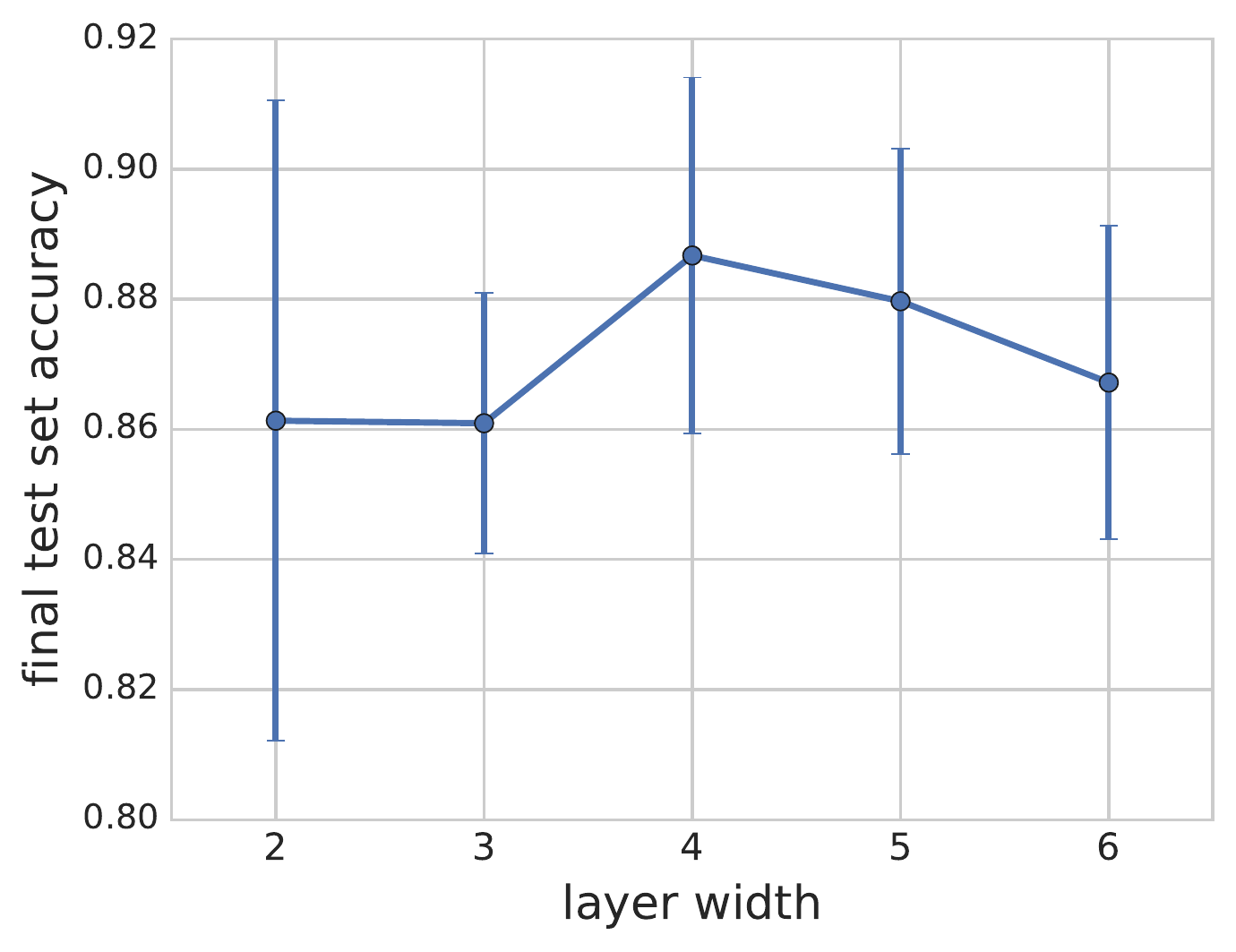}}
\subfigure[]{\includegraphics[width=0.32\textwidth]{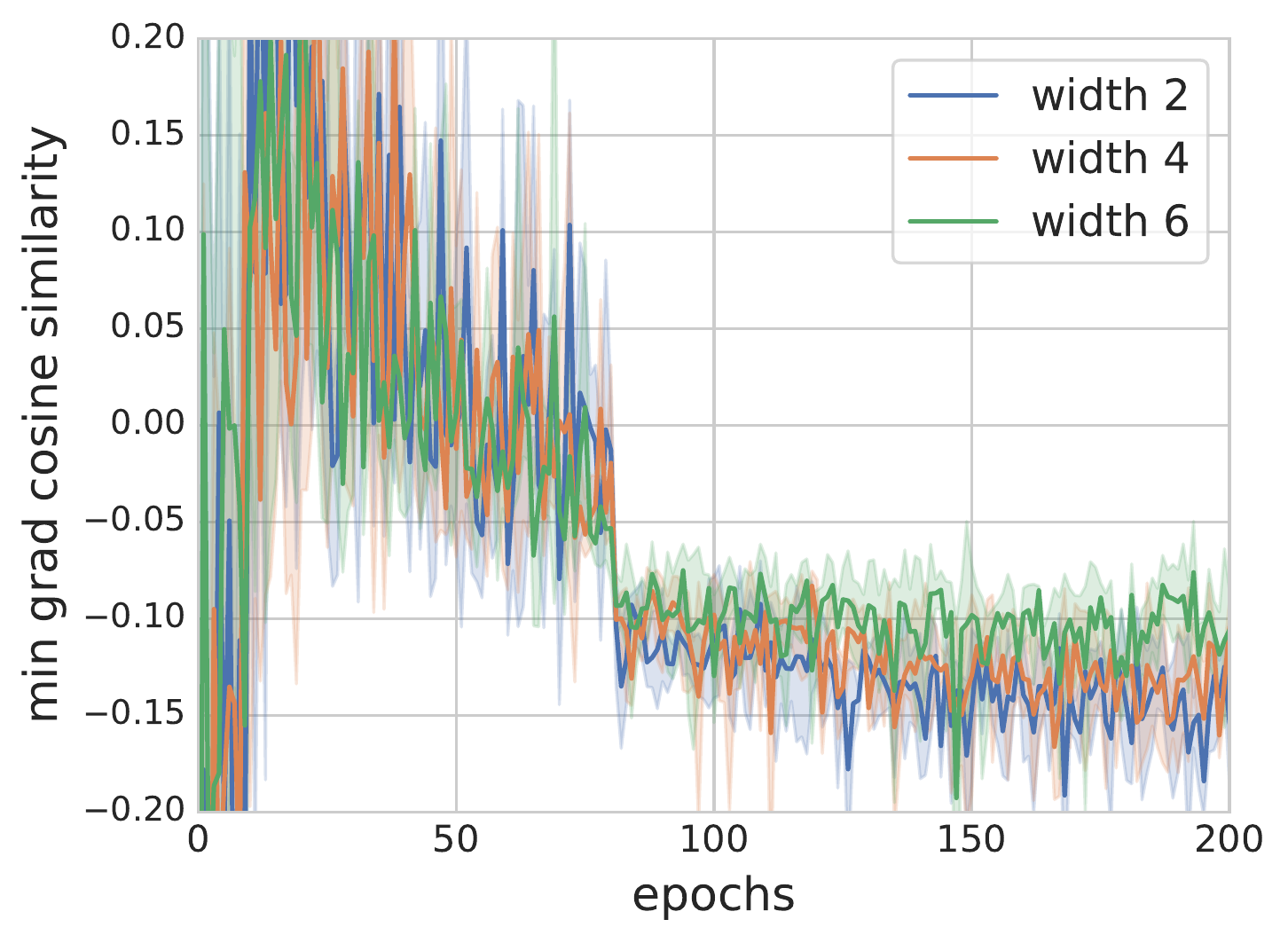}}
\caption{The effect of width with CNN-16-$\ell$ on CIFAR-10. The plots show
the (a) final training loss values at the end of training, (b) final test set accuracy values at the end of training, and the (c) minimum of pairwise gradient cosine similarities during training.}
\label{fig:c10_cnn_additional_width}
\end{figure*}

\subsection{CIFAR-100 on CNNs}
We now consider image classifications tasks with CNNs on the CIFAR-100 dataset. Figure \ref{fig:c100_cnn_depth} shows the effect of varying depth, while figure \ref{fig:c100_cnn_width} shows the effect of varying width. We notice the same trends as in our results with CNNs on CIFAR-10. Interestingly, from the width results in figure \ref{fig:c100_cnn_width}, we see that while there is no perceptible change to the minimum pairwise gradient cosine similarity, the distribution still sharply concentrates around 0 with increasing width. Thus more gradients become orthogonal to each other with increasing width, implying that SGD on very wide networks becomes closer to decoupling over the data samples.

\begin{figure*}[t]
\centering
\subfigure[]{\includegraphics[width=0.32\textwidth]{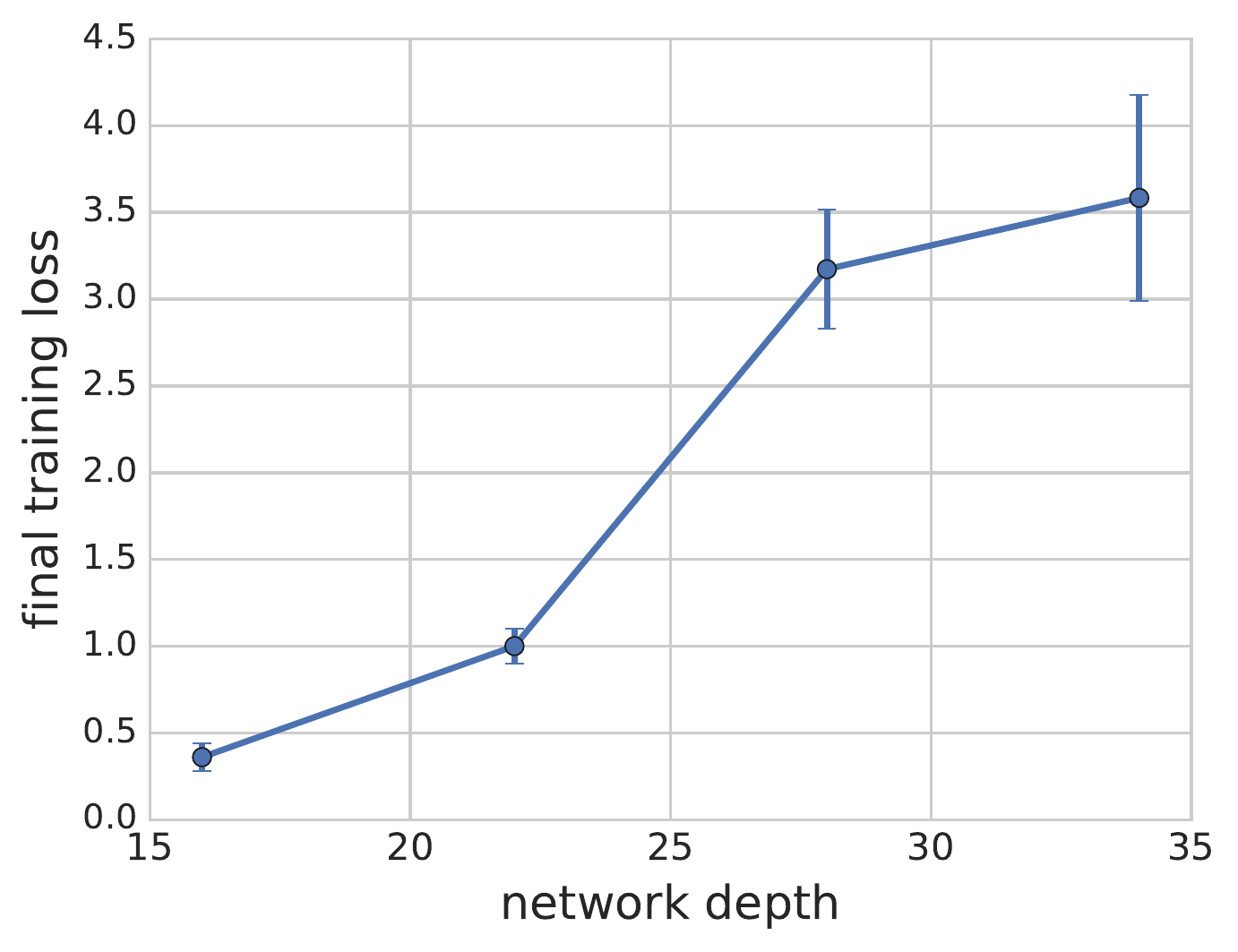}}
\subfigure[]{\includegraphics[width=0.32\textwidth]{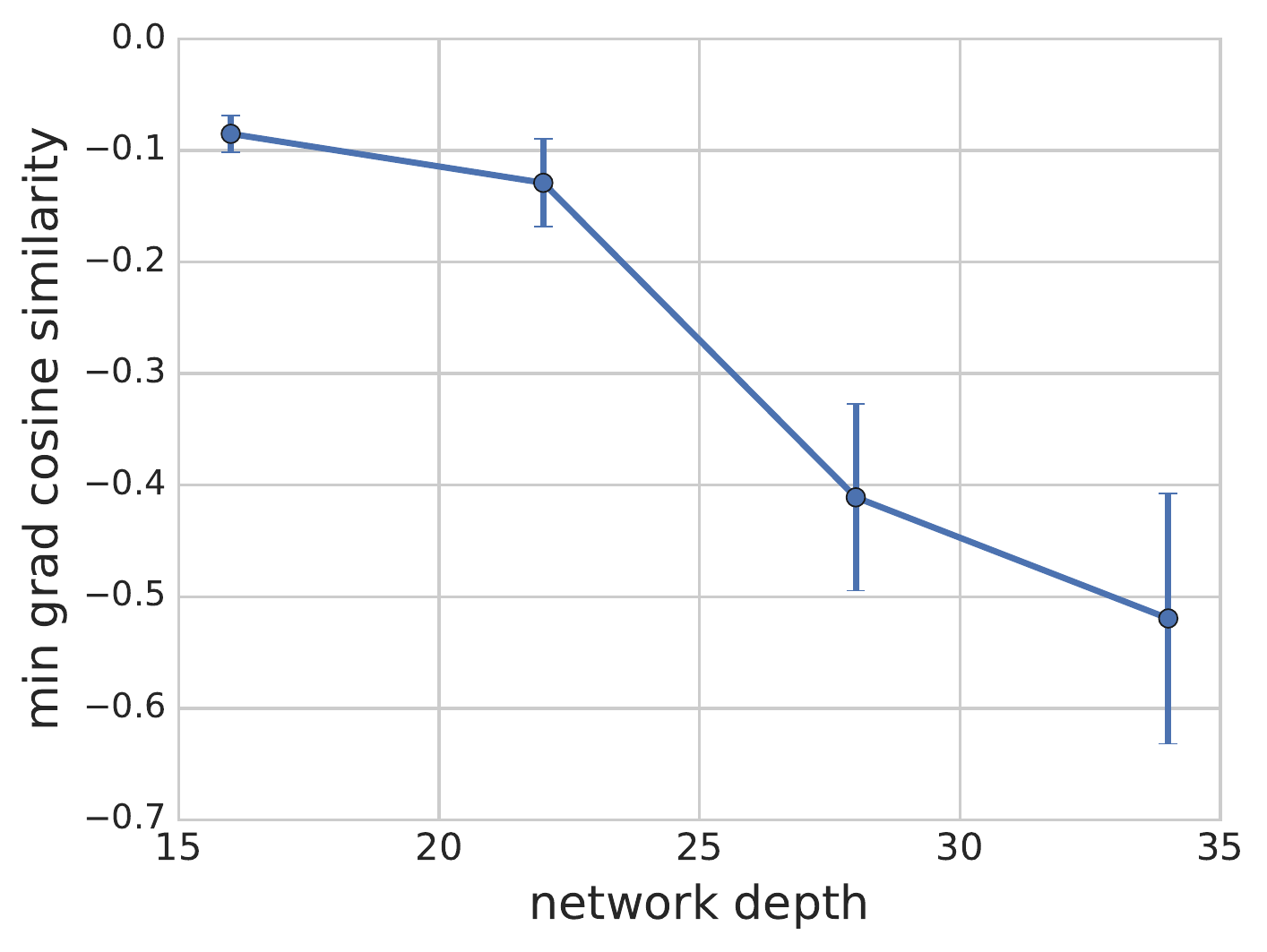}}
\subfigure[]{\includegraphics[width=0.32\textwidth]{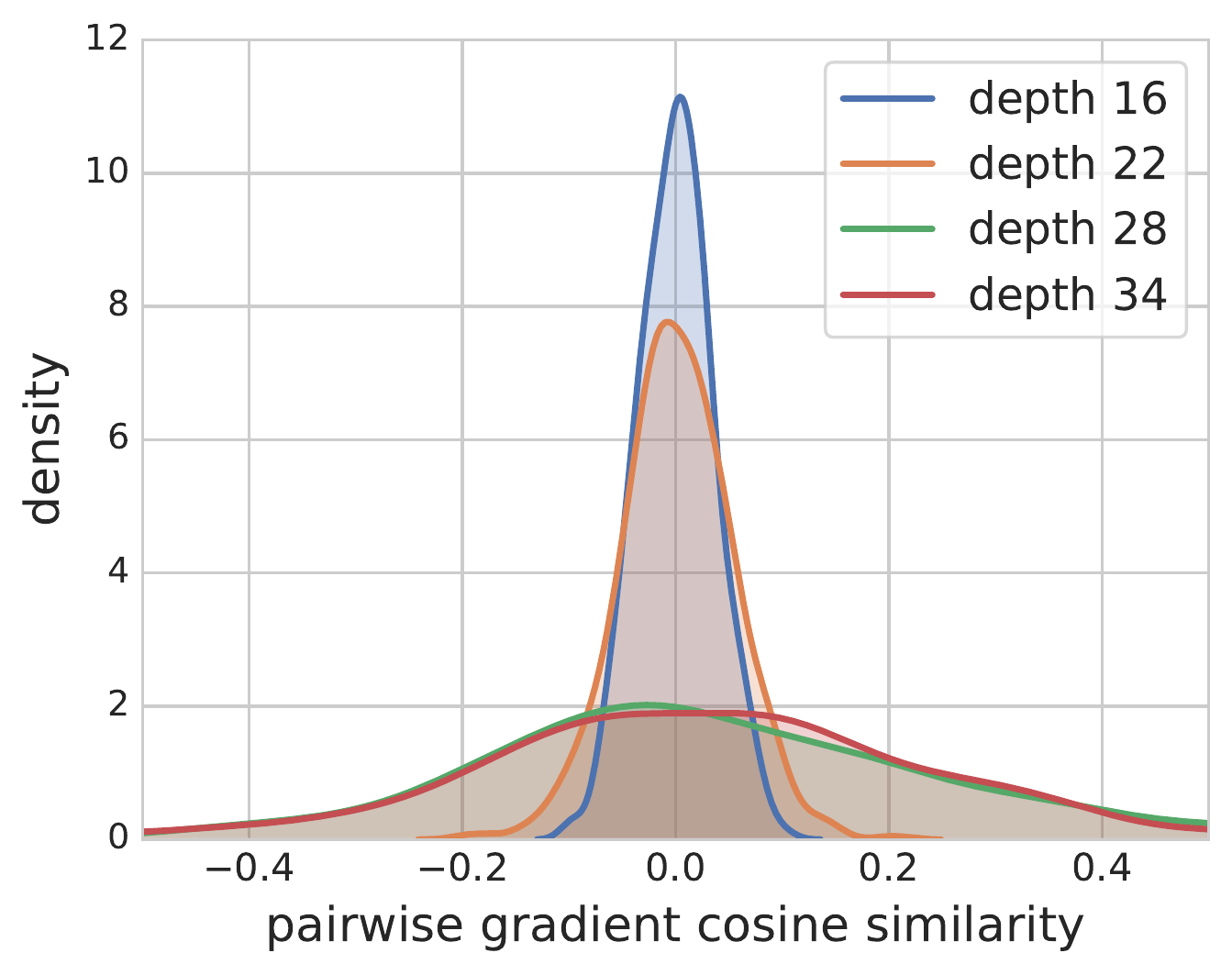}}
\caption{The effect of network depth with CNN-$\beta$-2 on CIFAR-100. The plots show the (a) training loss values at the end of training, (b) minimum of pairwise gradient cosine similarities at the end of training, and the (c) kernel density estimate of the pairwise gradient cosine similarities at the end of training.}
\label{fig:c100_cnn_depth}
\end{figure*}

\begin{figure*}[t]
\centering
\subfigure[]{\includegraphics[width=0.32\textwidth]{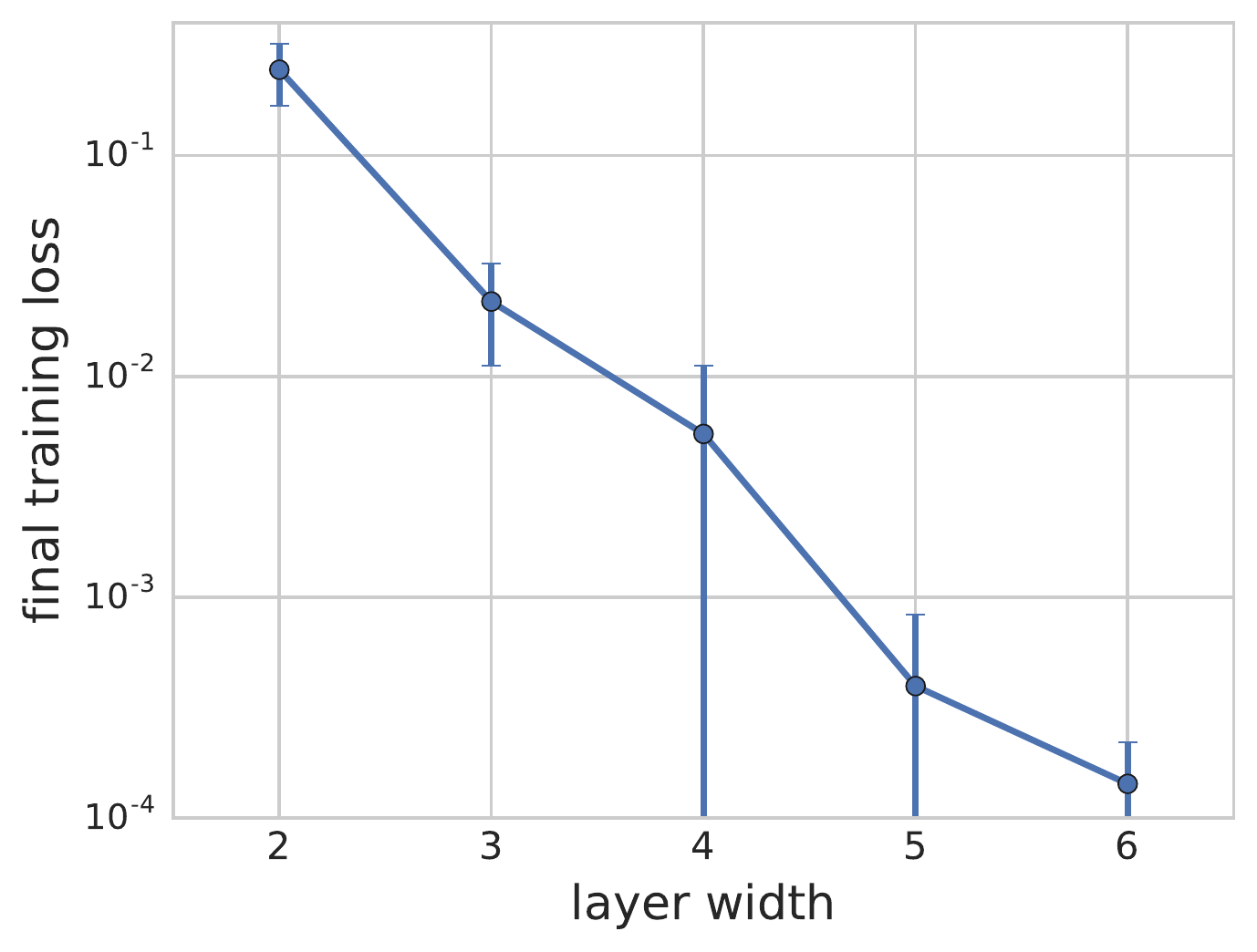}}
\subfigure[]{\includegraphics[width=0.32\textwidth]{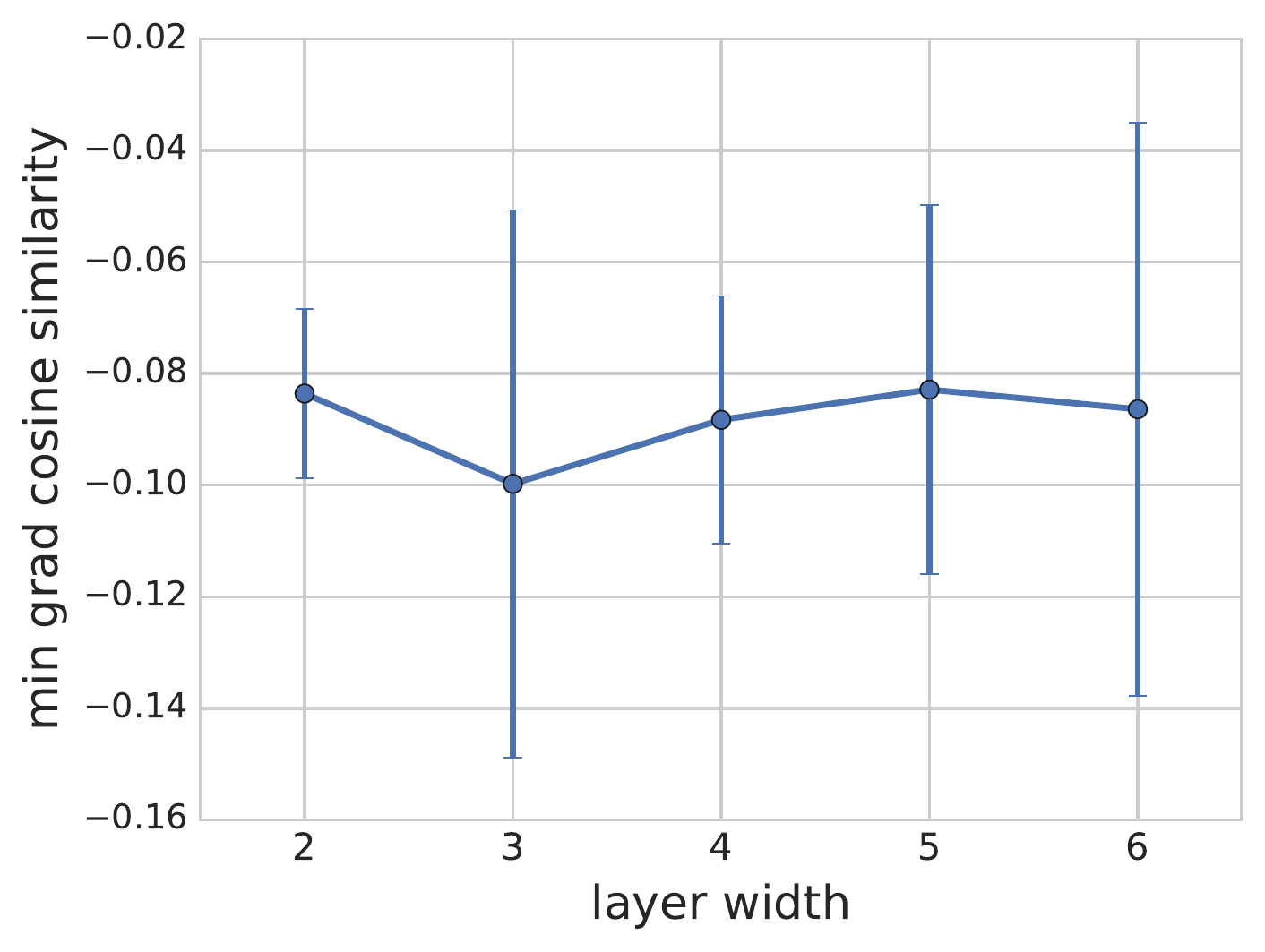}}
\subfigure[]{\includegraphics[width=0.32\textwidth]{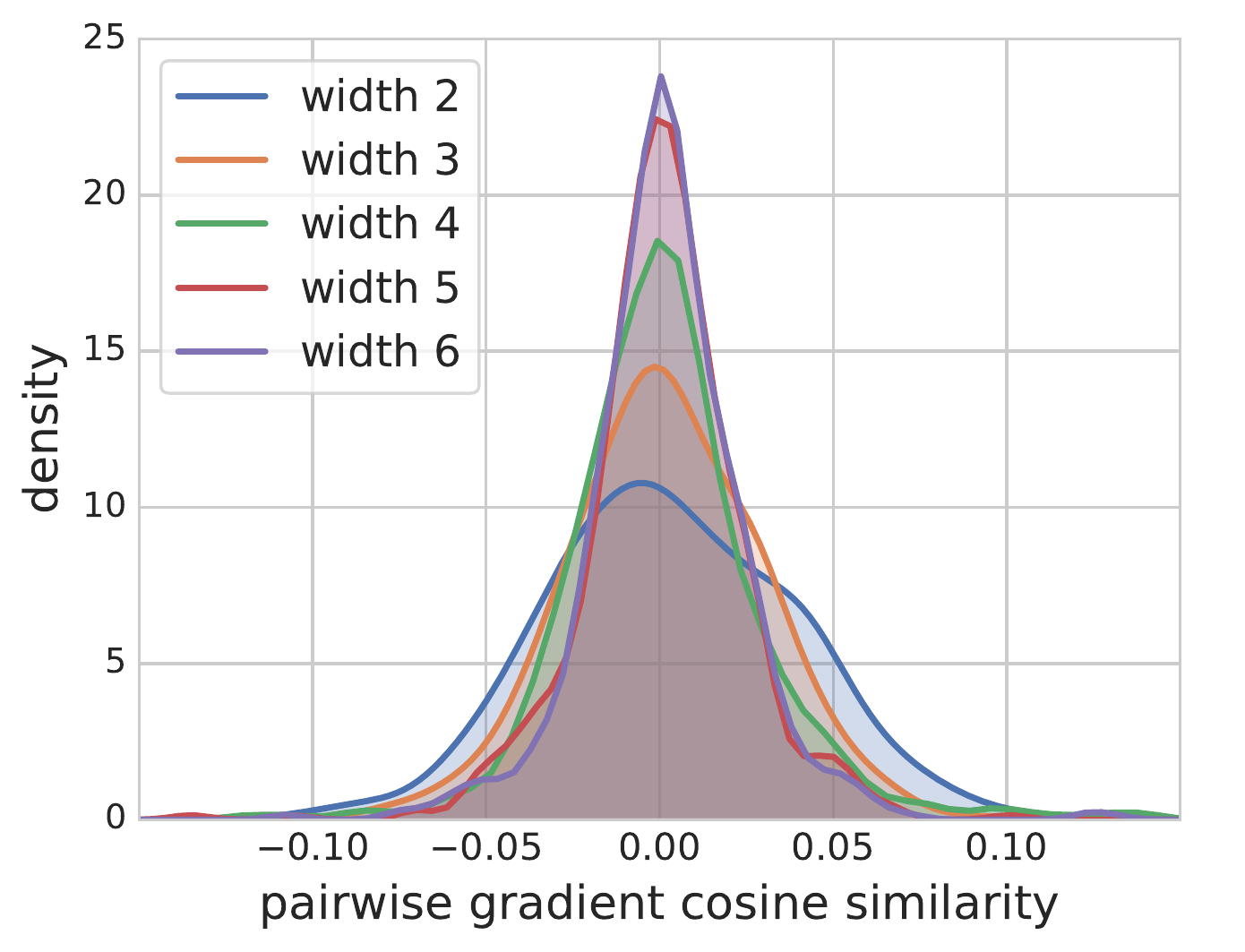}}
\caption{The effect of width with CNN-16-$\ell$ on CIFAR-100. The plots show the (a) training loss values at the end of training, (b) minimum of pairwise gradient cosine similarities at the end of training, and the (c) kernel density estimate of the pairwise gradient cosine similarities at the end of training.}
\label{fig:c100_cnn_width}
\end{figure*}

\subsection{Image classification with WRNs on CIFAR-10 and CIFAR-100}

We now show results for image classification problems using wide residual networks (WRNs) on CIFAR-10 and CIFAR-100. The WRNs we consider do not have any batch normalization. Later we show results on the effect of adding batch normalization to these networks.

Figures \ref{fig:c10_wrs_depth} and \ref{fig:c100_wrs_depth} show results on the effect of depth using WRNs on CIFAR-10 and CIFAR-100 respectively. We again see the consistent trend of deeper networks having higher gradient confusion, making them harder to train. We further see that increasing depth results in the pairwise gradient cosine similarities concentrating less around 0.

Figures \ref{fig:c10_wrs_width} and \ref{fig:c100_wrs_width} show results on the effect of width using WRNs on CIFAR-10 and CIFAR-100 respectively. We see that increasing width typically lowers gradient confusion and helps the network achieve lower loss values. The pairwise gradient cosine similarities also typically concentrate around 0 with higher width. We also notice from these figures that in some cases, increasing width might lead to diminishing returns, i.e., the benefits of increased width diminish after a certain point, as one would expect.

\begin{figure*}[t]
\centering
\subfigure[]{\includegraphics[width=0.32\textwidth]{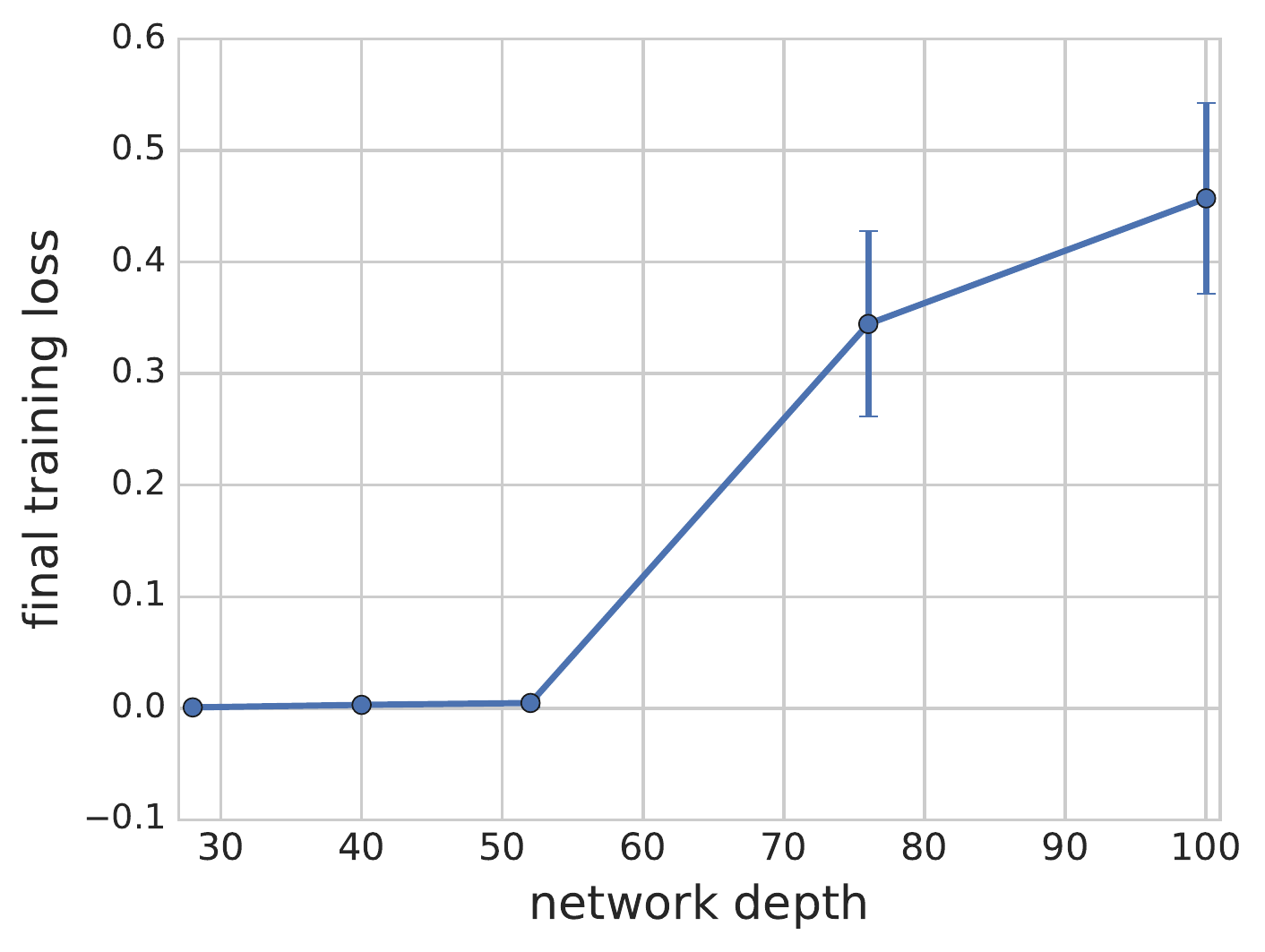}}
\subfigure[]{\includegraphics[width=0.32\textwidth]{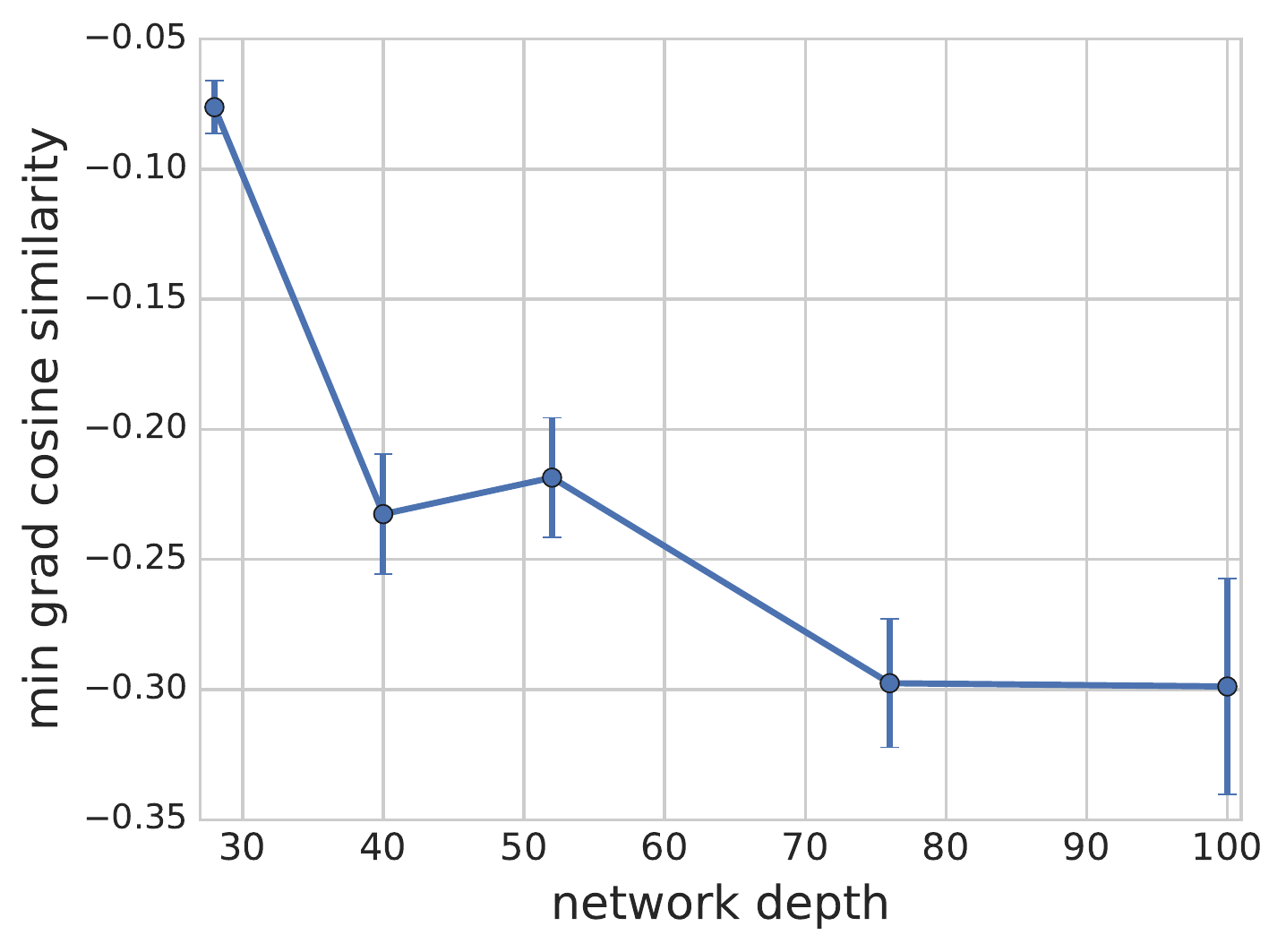}}
\subfigure[]{\includegraphics[width=0.32\textwidth]{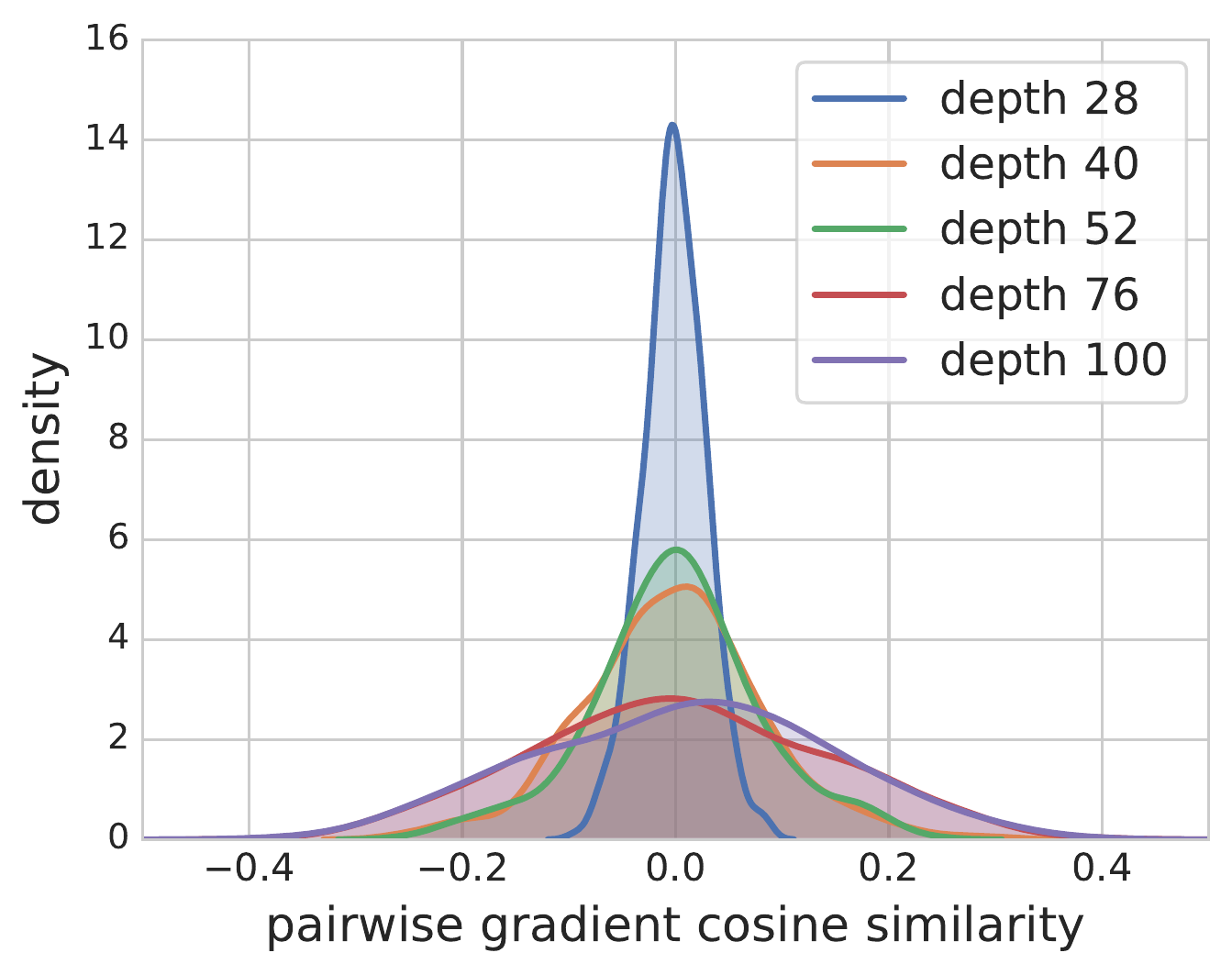}}
\caption{The effect of depth with WRN-$\beta$-2 (no batch normalization) on CIFAR-10. The plots show the (a) training loss values at the end of training, (b) minimum of pairwise gradient cosine similarities at the end of training, and the (c) kernel density estimate of the pairwise gradient cosine similarities at the end of training.}
\label{fig:c10_wrs_depth}
\end{figure*}

\begin{figure*}[t]
\centering
\subfigure[]{\includegraphics[width=0.32\textwidth]{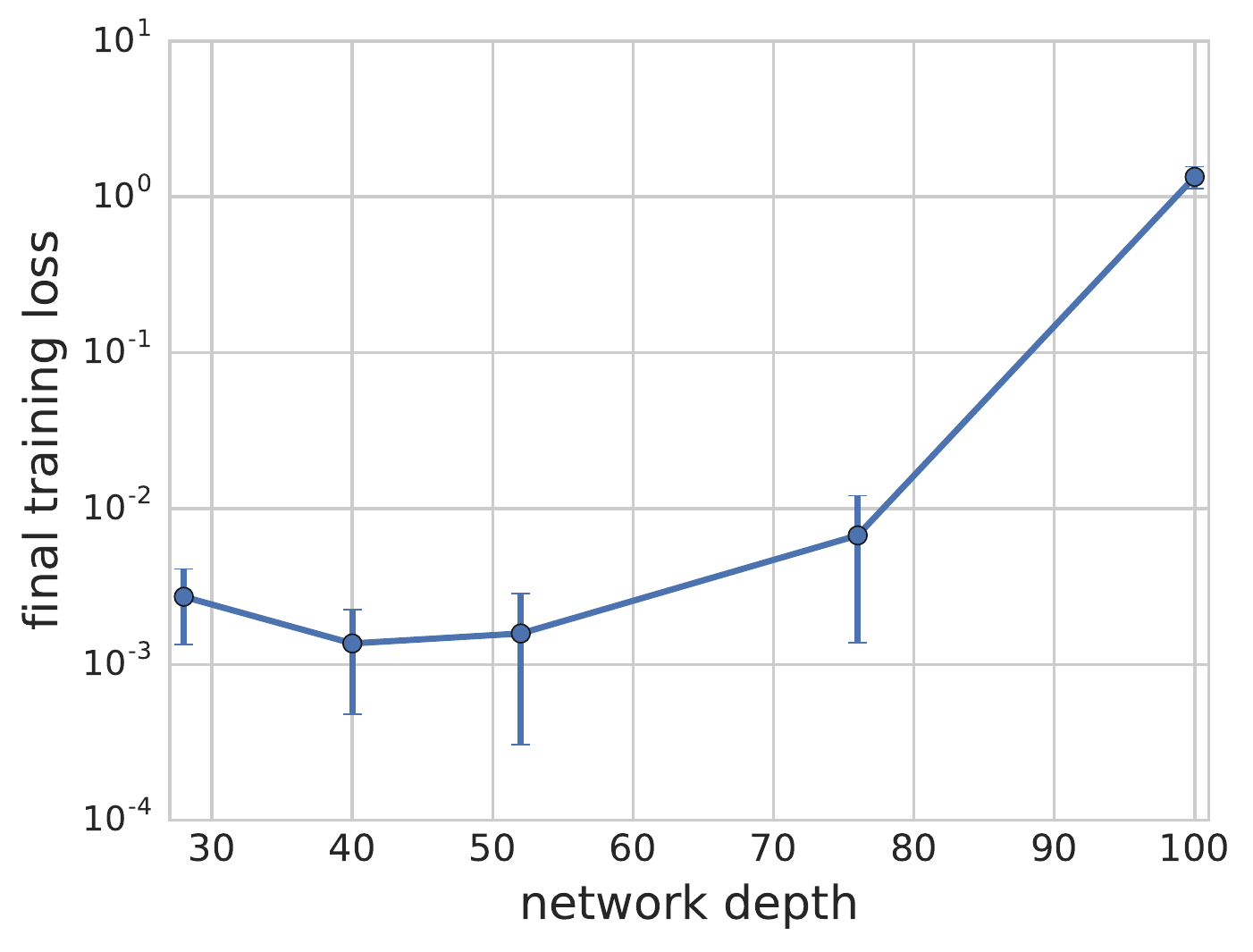}}
\subfigure[]{\includegraphics[width=0.32\textwidth]{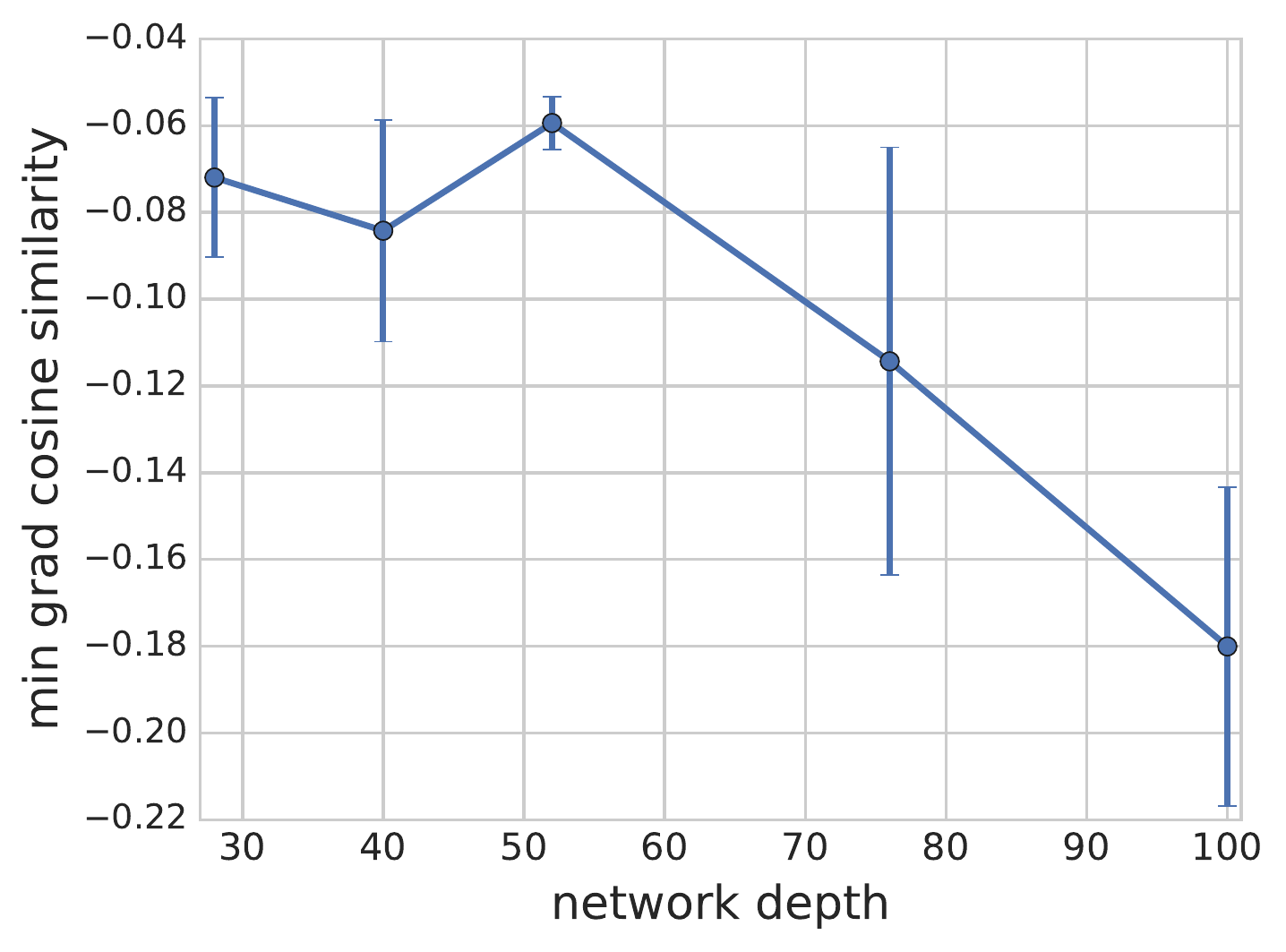}}
\subfigure[]{\includegraphics[width=0.32\textwidth]{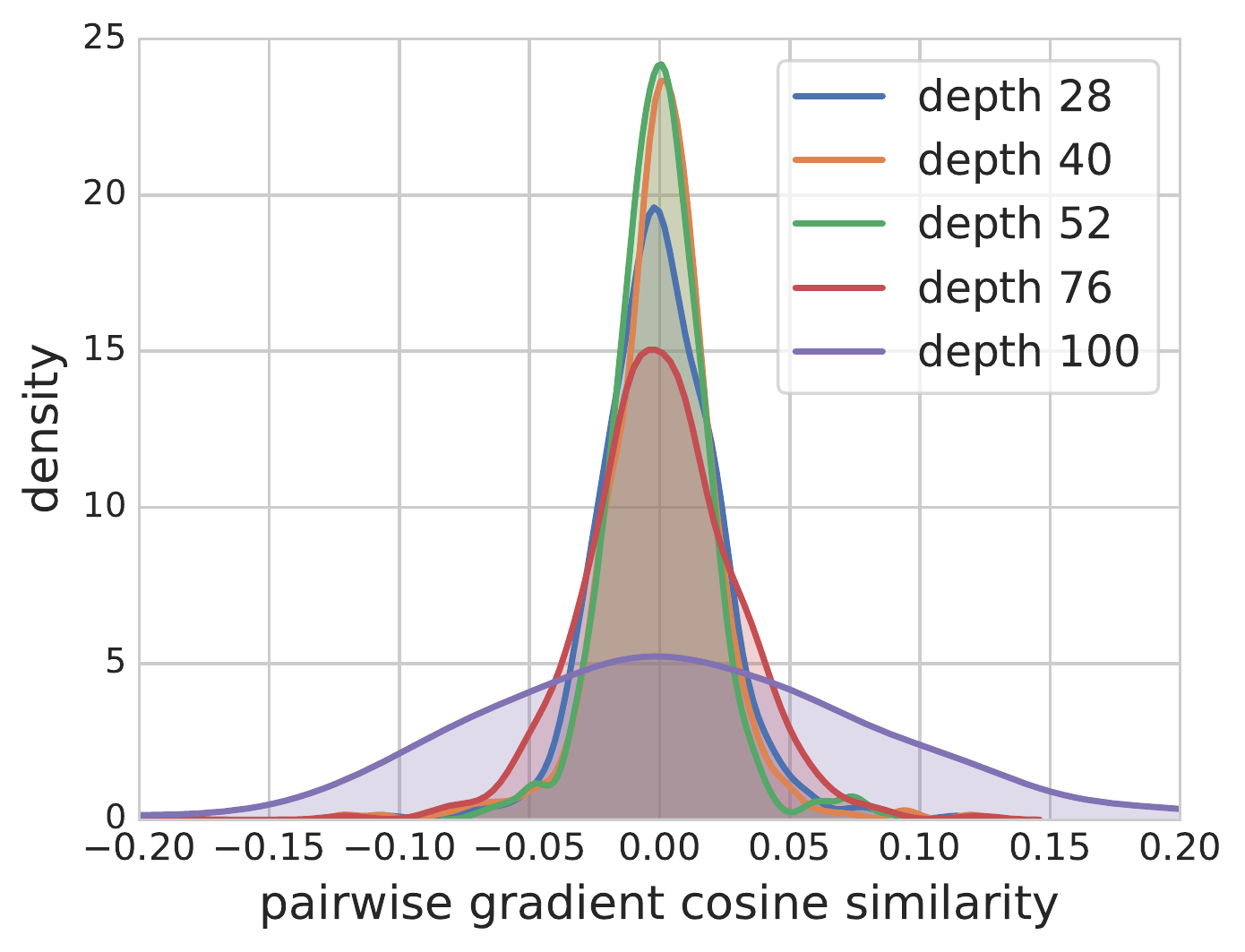}}
\caption{The effect of depth with WRN-$\beta$-2  (no batch normalization) on CIFAR-100. The plots show the (a) training loss values at the end of training, (b) minimum of pairwise gradient cosine similarities at the end of training, and the (c) kernel density estimate of the pairwise gradient cosine similarities at the end of training.}
\label{fig:c100_wrs_depth}
\end{figure*}

\begin{figure*}[t]
\centering
\subfigure[]{\includegraphics[width=0.32\textwidth]{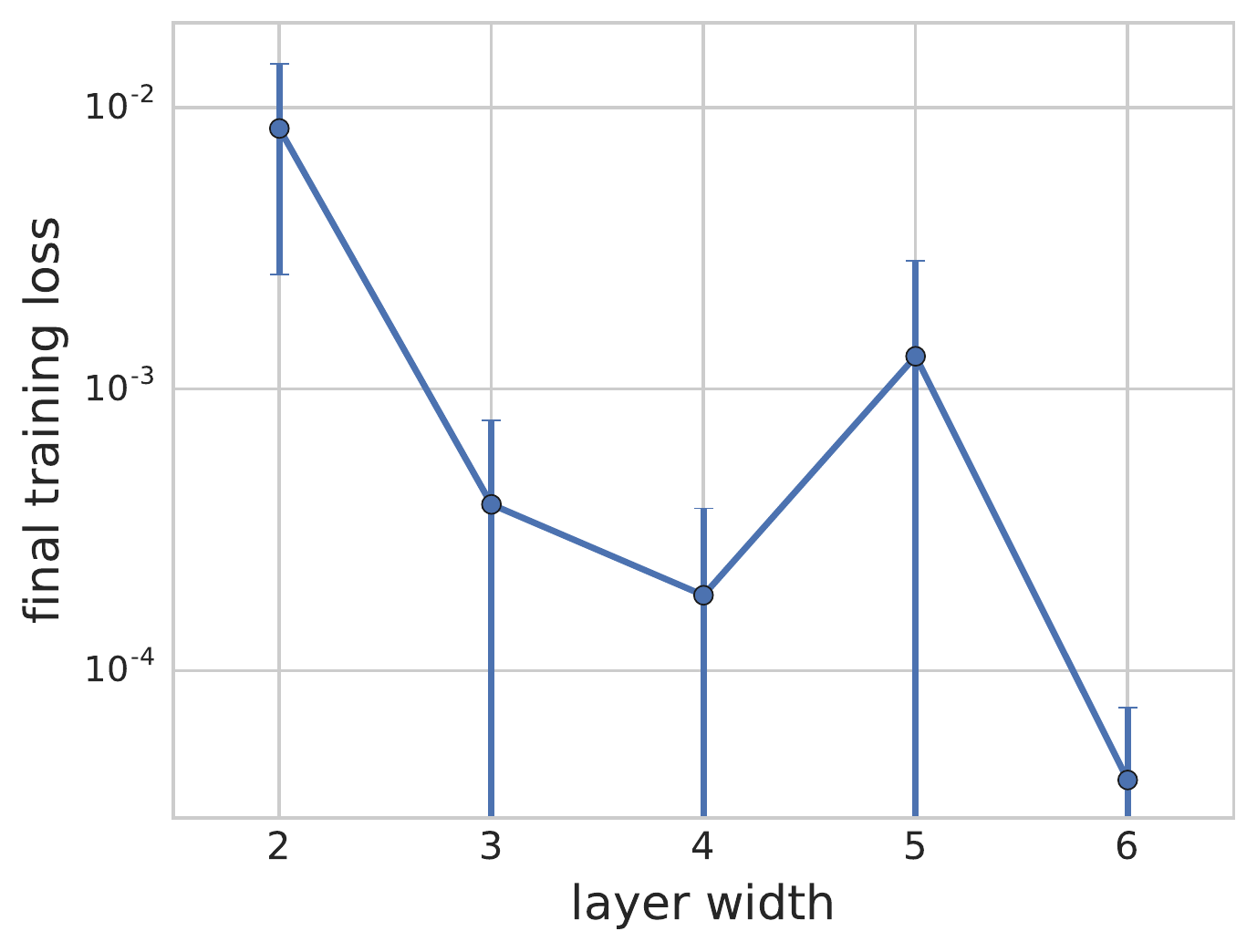}}
\subfigure[]{\includegraphics[width=0.32\textwidth]{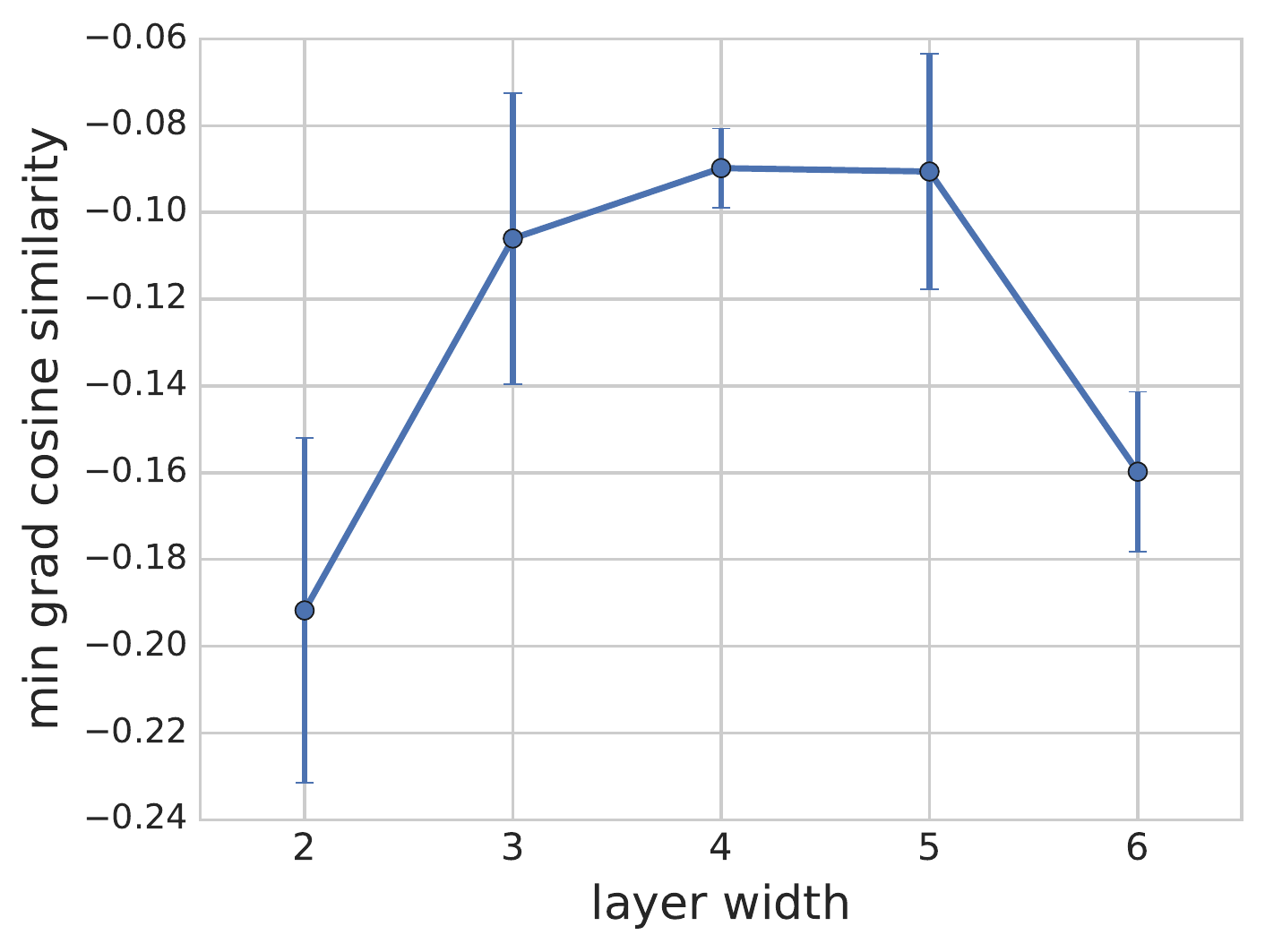}}
\subfigure[]{\includegraphics[width=0.32\textwidth]{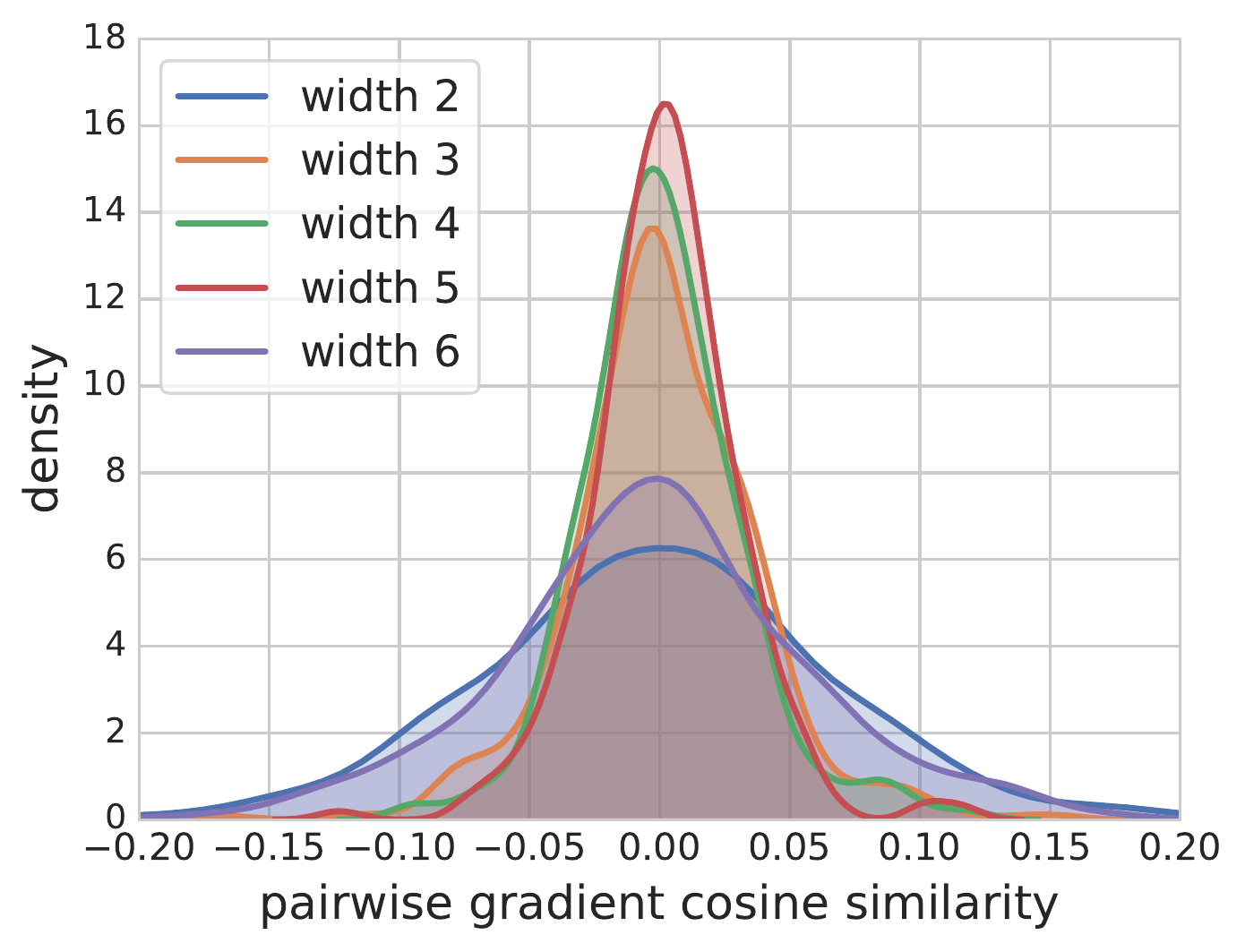}}
\caption{The effect of width with WRN-16-$\ell$  (no batch normalization) on CIFAR-10. The plots show the (a) training loss values at the end of training, (b) minimum of pairwise gradient cosine similarities at the end of training, and the (c) kernel density estimate of the pairwise gradient cosine similarities at the end of training.}
\label{fig:c10_wrs_width}
\end{figure*}

\begin{figure*}[t]
\centering
\subfigure[]{\includegraphics[width=0.32\textwidth]{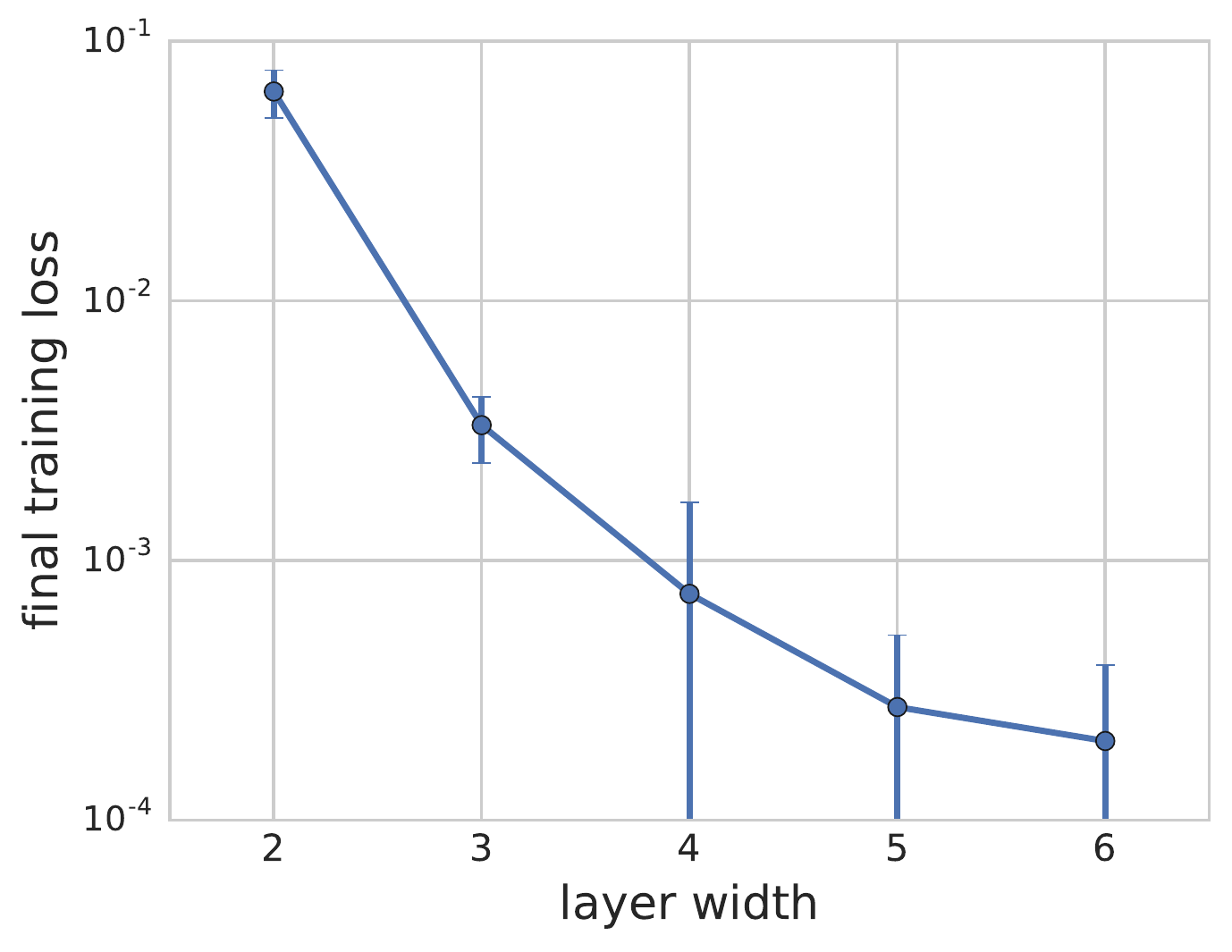}}
\subfigure[]{\includegraphics[width=0.32\textwidth]{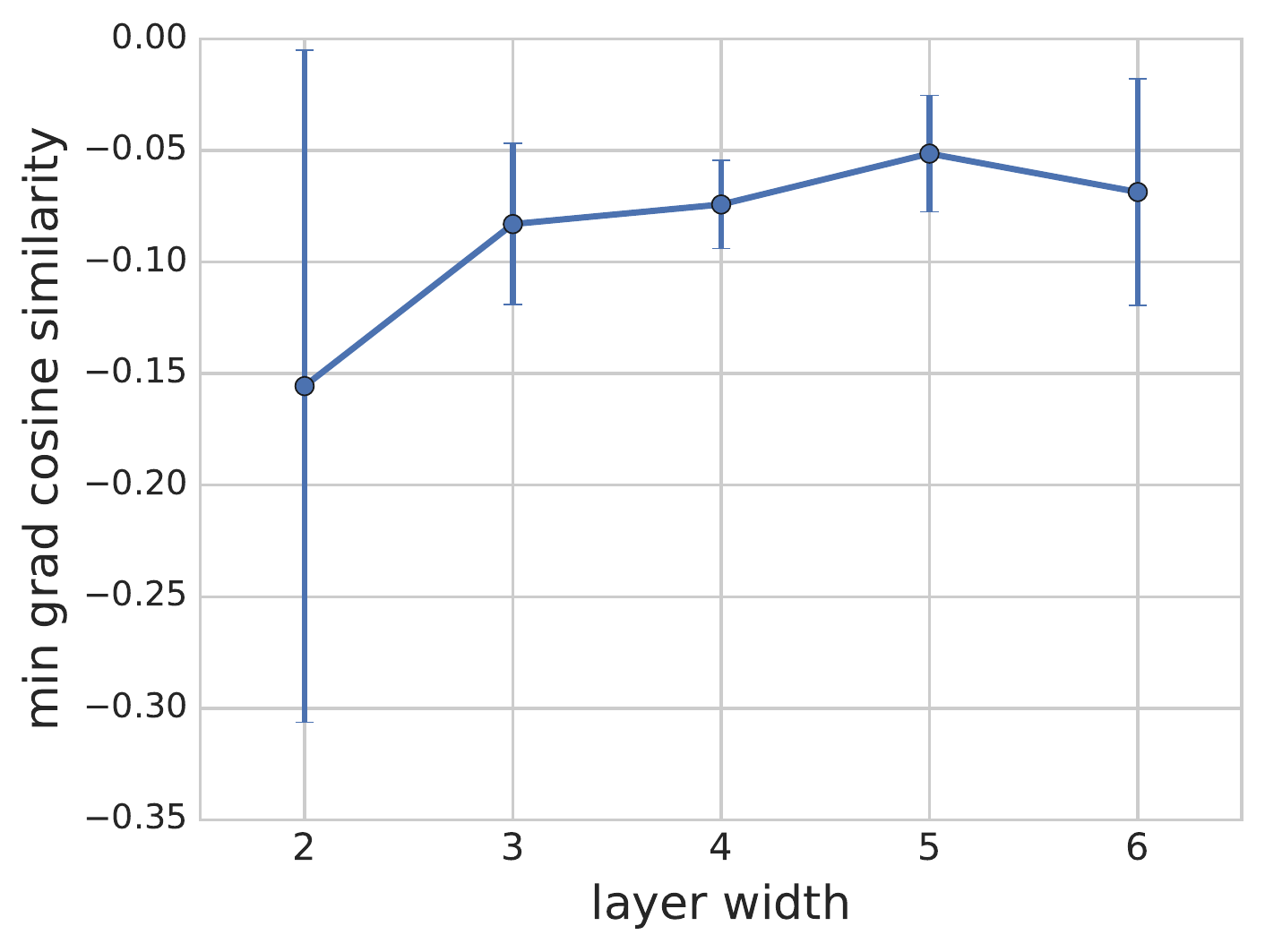}}
\subfigure[]{\includegraphics[width=0.32\textwidth]{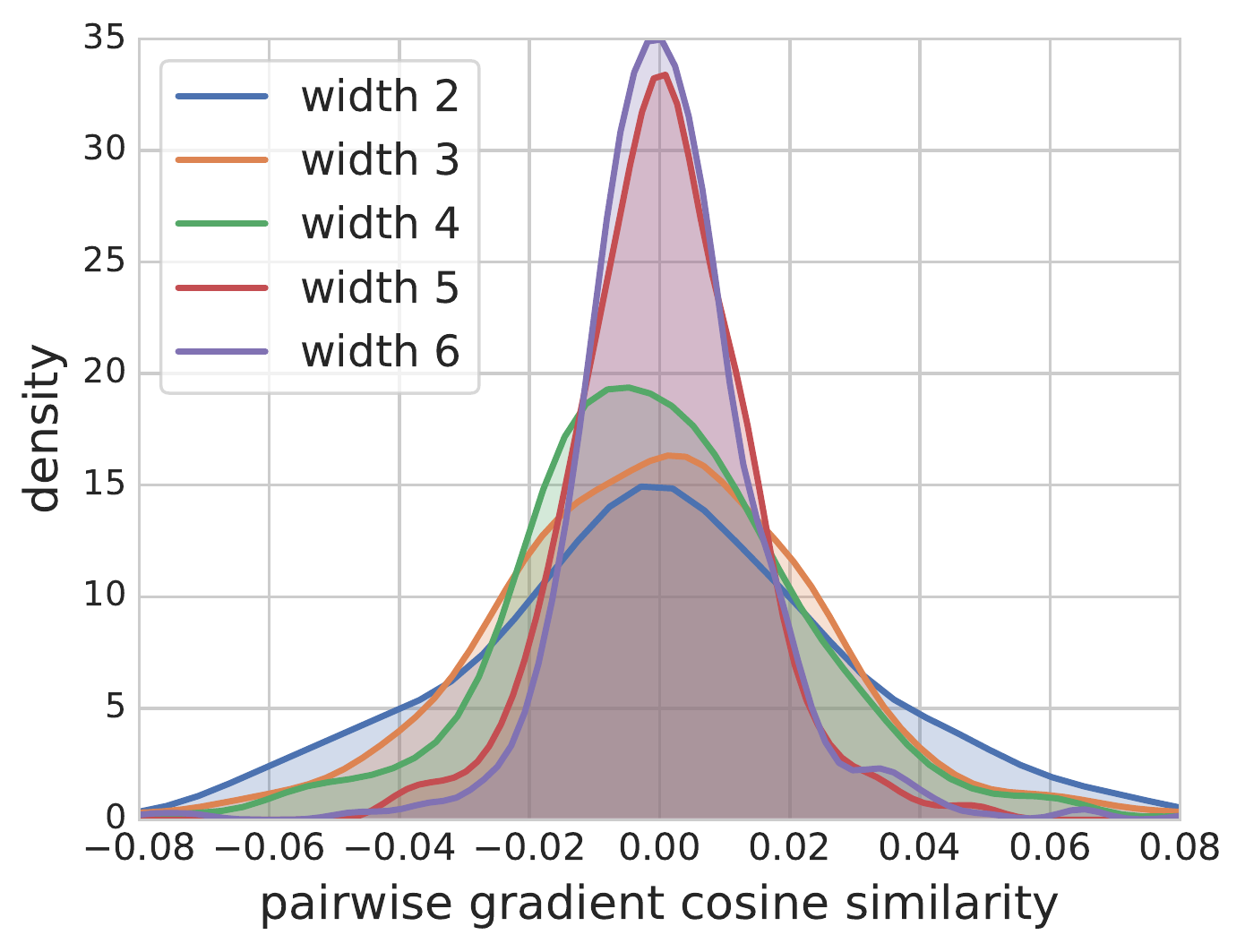}}
\caption{The effect of width with WRN-16-$\ell$  (no batch normalization) on CIFAR-100. The plots show the (a) training loss values at the end of training, (b) minimum of pairwise gradient cosine similarities at the end of training, and the (c) kernel density estimate of the pairwise gradient cosine similarities at the end of training.}
\label{fig:c100_wrs_width}
\end{figure*}
      
\subsection{Effect of batch normalization and skip connections}


In section \ref{sec:experiments} we showed results on the effect of adding batch normalization and skip connections to CNNs and WRNs on an image classification task on CIFAR-10. In this section, we present similar results for image classification on CIFAR-100. Similar to section \ref{sec:experiments}, figure~\ref{fig:c100_bnskip_depth} shows that adding skip connections or batch normalization individually help in training deeper models, but these models still suffer from worsening results and increasing gradient confusion as the network gets deeper. Both these techniques together keep the gradient confusion relatively low even for very deep networks, significantly improving trainability of deep models.

\begin{figure*}[t]
\centering
\subfigure[]{\includegraphics[width=0.32\textwidth]{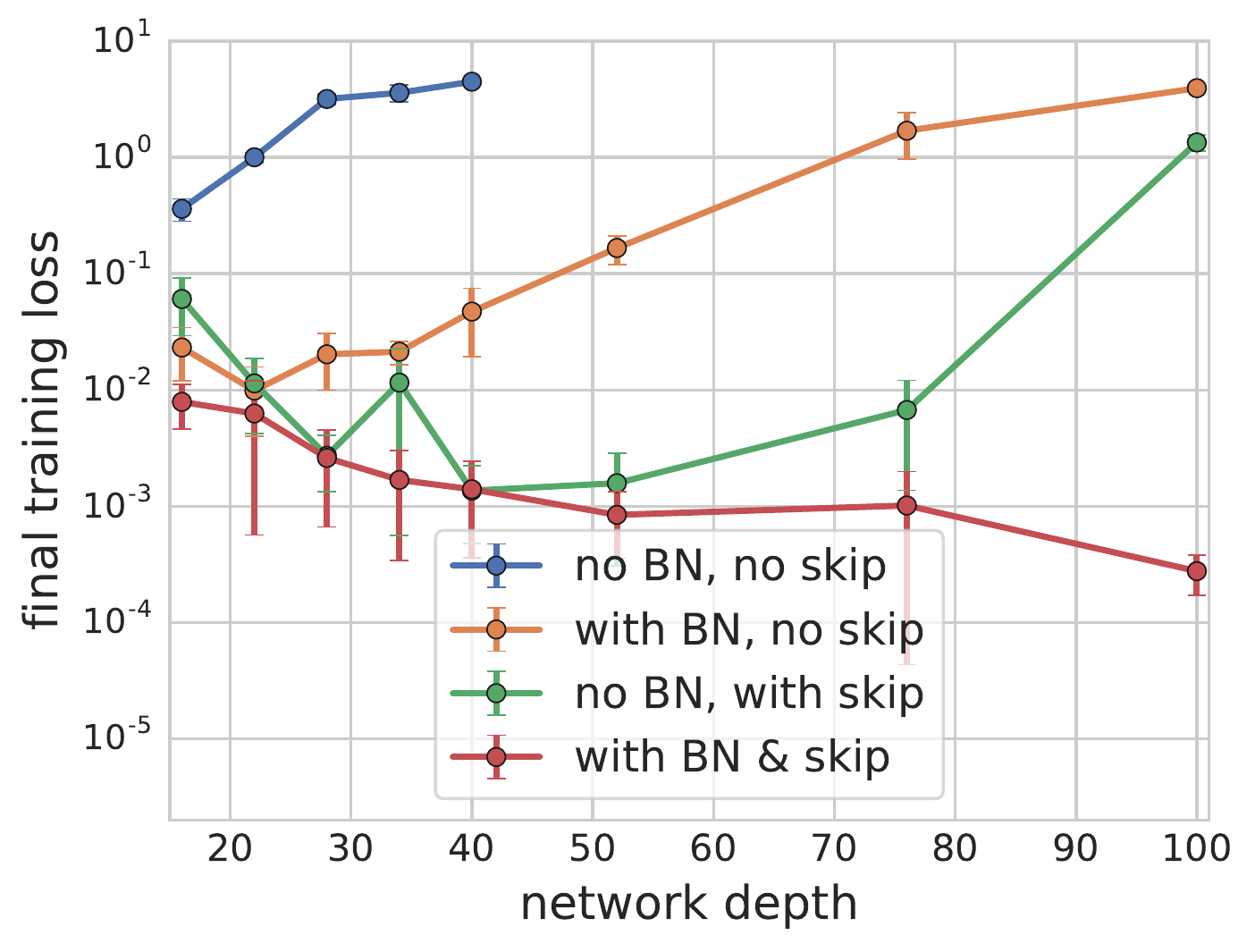}}
\subfigure[]{\includegraphics[width=0.32\textwidth]{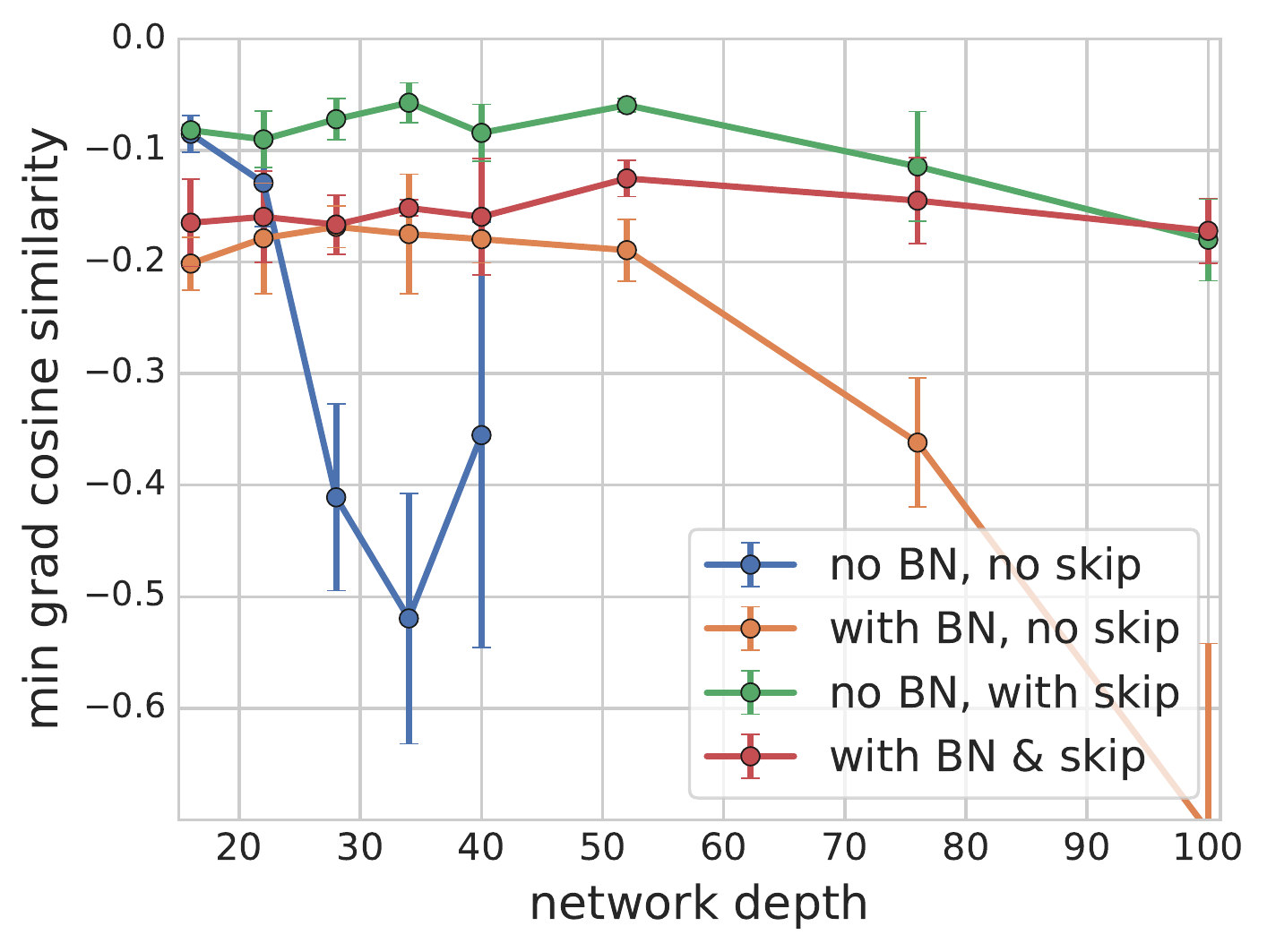}}
\subfigure[]{\includegraphics[width=0.32\textwidth]{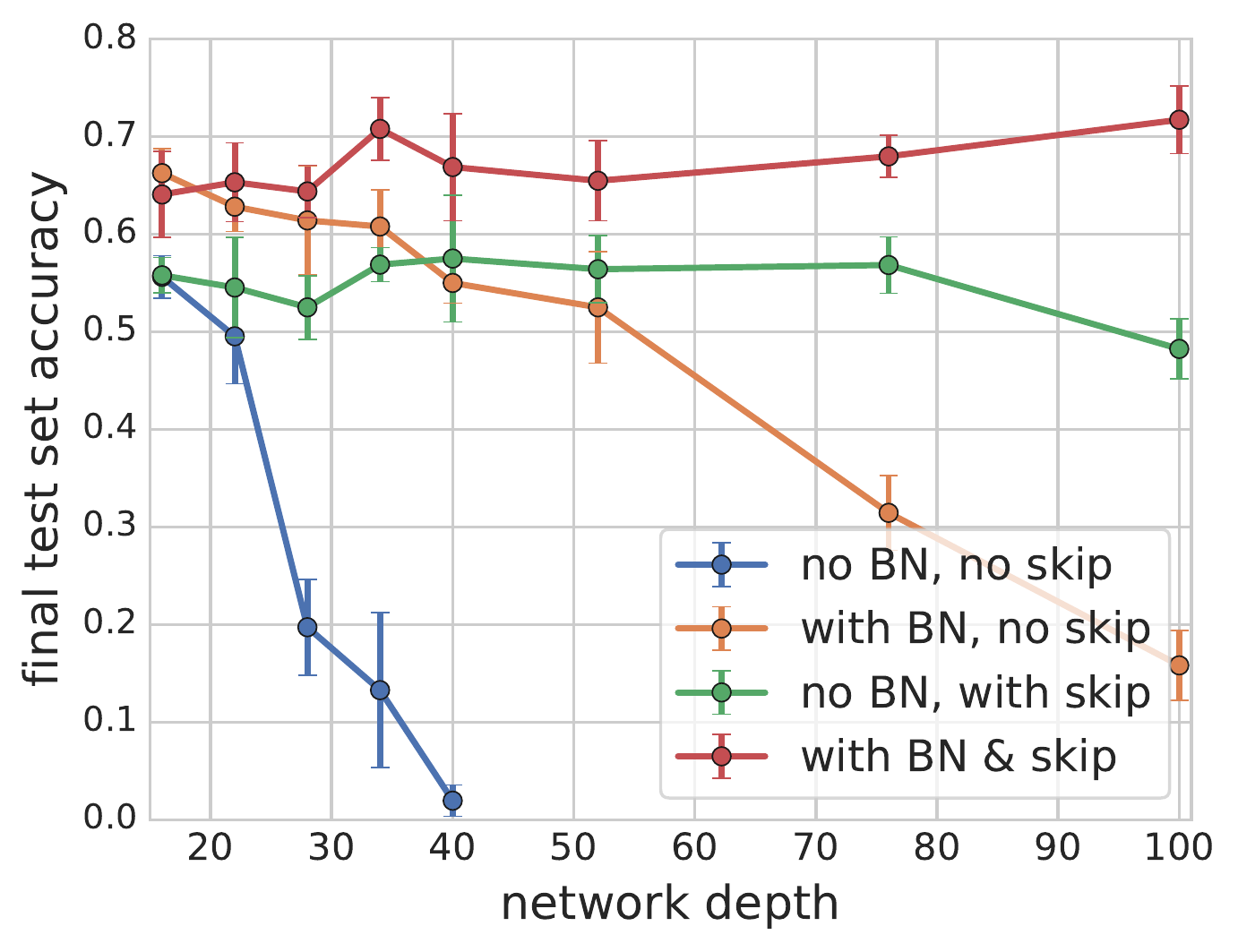}}
\caption{The effect of adding skip connections and batch normalization to CNN-$\beta$-2 on CIFAR-100. Plots show the (a) training loss, (b) minimum pairwise gradient cosine similarities, and the (c) test accuracies at the end of training.}
\label{fig:c100_bnskip_depth}
\end{figure*}

\addtocontents{toc}{\protect\setcounter{tocdepth}{2}}

\section{Near orthogonality of random vectors}
\label{app:orthovec}
For completeness, we state and prove below a lemma on the near orthogonality of random vectors. This result is often attributed to \citet{milman1986asymptotic}. 

\begin{restatable}[Near orthogonality of random vectors]{lem}{orthovec}
\label{lem:orthovec}
For vectors $\{\vec{x_i}\}_{i \in [N]}$ drawn uniformly from a unit sphere in $d$ dimensions, and $\nu>0,$ 
\begin{align*}
\textstyle
\Pr \big[ \exists i, j ~~ |\vec{x}_i^\top\vec{x}_j |  > \nu \big]  \le N^2 \sqrt{\frac{\pi}{8}} \exp \big( - \frac{d-1}{2} \nu^2 \big).
\end{align*}
\end{restatable}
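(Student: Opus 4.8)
The plan is to reduce the statement, via a union bound and the rotational invariance of the uniform measure on the sphere, to a one-dimensional tail estimate for a single coordinate, and then to establish that estimate with a Gaussian representation of the sphere.

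First, since the event concerns pairs $i \neq j$ (the case $i=j$ being excluded, as $\vec{x}_i^\top \vec{x}_i = 1$), I would apply a union bound over the $\binom{N}{2} \le \tfrac12 N^2$ unordered pairs. For a fixed pair, I condition on $\vec{x}_j$; because the uniform law on the sphere is rotation invariant, I may assume $\vec{x}_j = \vec{e}_1$, so that $\vec{x}_i^\top \vec{x}_j$ equals the first coordinate $X := (\vec{x}_i)_1$ of a single uniform point on the $(d-1)$-sphere. Since $X$ and $-X$ have the same distribution and $X$ is continuous, $\Pr[|X|>\nu] = 2\Pr[X>\nu]$. Everything then reduces to showing a one-sided bound $\Pr[X>\nu] \le c\,e^{-(d-1)\nu^2/2}$ with $c \le \sqrt{\pi/8}$.

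For this core estimate I would use the Gaussian model: write $\vec{x}_i = \vec{g}/\|\vec{g}\|$ with $\vec{g}=(g_1,\dots,g_d)$ having i.i.d.\ $\mathcal{N}(0,1)$ entries, so $X = g_1/\|\vec{g}\|$. For $\nu \in [0,1)$ a short manipulation (square both sides of $g_1 \ge \nu\|\vec{g}\|$, which forces $g_1 \ge 0$) gives the exact identity $\{X \ge \nu\} = \{g_1 \ge c_\nu W\}$, where $c_\nu = \nu/\sqrt{1-\nu^2}$ and $W = (g_2^2+\dots+g_d^2)^{1/2}$ is a $\chi_{d-1}$ variable independent of $g_1$. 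Conditioning on $W$, applying the Gaussian tail bound $\Pr[\mathcal{N}(0,1)\ge a]\le \tfrac12 e^{-a^2/2}$, and then using the moment generating function of $\chi^2_{d-1}$ gives
\[ \Pr[X\ge \nu] \le \tfrac12\,\mathbb{E}\big[e^{-c_\nu^2 W^2/2}\big] = \tfrac12(1+c_\nu^2)^{-(d-1)/2} = \tfrac12(1-\nu^2)^{(d-1)/2} \le \tfrac12\,e^{-(d-1)\nu^2/2}, \]
where the middle equality uses $1+c_\nu^2 = 1/(1-\nu^2)$. Combining this with the two-sided factor $2$ and the union bound over $\binom{N}{2}\le\tfrac12 N^2$ pairs gives the claim with constant $\tfrac12 \le \sqrt{\pi/8}$, which is the stated bound (in fact slightly sharper).

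The step requiring the most care — and where a naive approach fails — is obtaining exactly the exponent $(d-1)/2$ together with a bounded, $\nu$-independent constant. Working directly with the marginal density of $X$, which is proportional to $(1-t^2)^{(d-3)/2}$, the inequality $(1-t^2)^{(d-3)/2}\le e^{-(d-3)t^2/2}$ only yields the weaker exponent $(d-3)/2$, while the alternative ``$t/\nu$'' trick that recovers $(d-1)/2$ introduces a spurious $1/\nu$ factor and a $\sqrt{d}$ coming from the normalizing constant $\Gamma(d/2)/\Gamma((d-1)/2)$. The Gaussian-representation argument avoids both pitfalls at once, so I would prefer it; recognizing the identity $\{X\ge\nu\}=\{g_1\ge c_\nu W\}$ is the key insight, after which the computation is routine.
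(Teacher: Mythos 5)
Your proof is correct, but it takes a genuinely different route from the paper's. The paper's argument has the same outer structure as yours (rotational invariance to fix one vector, then a union bound over fewer than $N^2/2$ pairs), but for the core per-pair estimate it simply invokes, as a black box, the classical spherical-cap measure bound attributed to Milman: the event $|\vec{x}^\top\vec{y}|\ge\nu$ places $\vec{y}$ in one of two caps of angular radius $\cos^{-1}(\nu)\le\frac{\pi}{2}-\nu$, and the cited result bounds the measure of these caps by $\sqrt{\pi/2}\,\exp\bigl(-\frac{d-1}{2}\nu^2\bigr)$, which yields the constant $\frac12\sqrt{\pi/2}=\sqrt{\pi/8}$ after the union bound. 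You instead prove the per-pair tail from scratch: the Gaussian representation $X=g_1/\|\vec{g}\|$, the exact identity $\{X\ge\nu\}=\{g_1\ge c_\nu W\}$ with $c_\nu=\nu/\sqrt{1-\nu^2}$ and $W\sim\chi_{d-1}$ independent of $g_1$, the conditional Gaussian tail bound $\Pr[\mathcal{N}(0,1)\ge a]\le\frac12 e^{-a^2/2}$, and the $\chi^2_{d-1}$ moment generating function giving $\frac12(1-\nu^2)^{(d-1)/2}\le\frac12 e^{-(d-1)\nu^2/2}$. What your approach buys: it is self-contained (no appeal to the cap-measure lemma), it transparently produces the correct exponent $(d-1)/2$ — your closing remark about why the naive density computation loses this exponent is accurate — and it gives the slightly sharper constant $\frac12\le\sqrt{\pi/8}$. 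What the paper's approach buys is brevity and a direct connection to the concentration-of-measure literature it wants to cite. The only small thing to add to your write-up is a one-line disposal of the case $\nu\ge 1$ (where your identity $\{X\ge\nu\}=\{g_1\ge c_\nu W\}$ is not defined): there the event is empty by Cauchy--Schwarz, so the bound holds trivially.
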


\begin{proof}
Given a fixed vector $\vec{x},$ a uniform random vector $\vec{y}$ satisfies $|\vec{x}^\top\vec{y}| \ge \nu$ only if $\vec{y}$ lies in one of two spherical caps: one centered at $\vec{x}$ and the other at $-\vec{x},$ and both with angular radius $\cos^{-1}(\nu) \le \frac{\pi}{2} -\nu.$  A simple result often attributed to \citet{milman1986asymptotic} bounds the probability of lying in either of these caps as 
\begin{align} \label{capprob}
\Pr [ |\vec{x}^\top\vec{y}|  \ge \nu]  \le    \sqrt{\frac{\pi}{2}} \exp\left( -\frac{d-1}{2} \nu^2 \right).
\end{align}
Because of rotational symmetry, the bound \eqref{capprob} holds if both $\vec{x}$ and $\vec{y}$ are chosen uniformly at random.

We next apply a union bound to control the probability that $|\vec{x}_i^\top\vec{x}_j|  \ge \nu$ for some pair $(i,j).$  There are fewer than $N^2/2$ such pairs, and so the probability of this condition is
\begin{align*}
\Pr [ |\vec{x}_i^\top\vec{x}_j| \ge \nu, \text{for some } i,j]  \le   \frac{N^2}{2} \sqrt{\frac{\pi}{2}} \exp\left( -\frac{d-1}{2} \nu^2 \right).&\qedhere
\end{align*}
\end{proof}

\section{Low-rank Hessians lead to low gradient confusion}
\label{app:hessians}

In this section, we show that low-rank random Hessians result in low gradient confusion. For clarity in presentation, suppose each $f_i$ has a minimizer at the origin (the same argument can be easily extended to the more general case). Suppose also that there is a Lipschitz constant for the Hessian of each function $f_i$ that satisfies $\|\vec{H}_i(\vec{w}) - \vec{H}_i(\vec{w'}) \| \le L_H \|\vec{w}-\vec{w'}\|$ (note that this is a standard optimization assumption \citep{nesterov2018lectures}, with evidence that it is applicable for neural networks \citep{martens2016second}). Then
$ \nabla f_i(\vec{w}) = \vec{H}_i\vec{w}+\vec{e}$,
where $\vec{e}$ is an error term bounded as: $\|\vec{e}\| \le \half L_H\|\vec{w}\|^2,$
and we use the shorthand $\vec{H}_i$ to denote $\vec{H}_i(\vec{0}).$ Then we have:
\aln{
  | \la \nabla f_i(\vec{w}) , \nabla f_j(\vec{w}) \ra | 
&  = | \la  \vec{H}_i\vec{w} ,  \vec{H}_j\vec{w} \ra |  +  \la \vec{e}, \vec{H}_i \vec{w} + \vec{H}_j \vec{w}  \ra  + \| \vec{e} \|^2\nonumber \\ 
&  \le \|\vec{w}\|^2 \| \vec{H}_i \| \| \vec{H}_j\|  + \| \vec{e} \| \|\vec{w}\| (  \|\vec{H}_i\|  + \|\vec{H}_j\| ) + \| \vec{e} \|^2\nonumber \\
&   \le \|\vec{w}\|^2 \| \vec{H}_i \| \| \vec{H}_j\|   +  \frac{1}{2} L_H \|\vec{w}\|^3 (  \|\vec{H}_i\|  + \|\vec{H}_j\| ) + \frac{1}{4} L_H^2 \| \vec{w} \|^4. \nonumber
}
If the Hessians are sufficiently random and low-rank (e.g., of the form $\vec{H}_i = \vec{a}_i\vec{a}_i^\top$ where $\vec{a}_i\in \reals^{N\times r}$ are randomly sampled from a unit sphere), then one would expect the terms in this expression to be small for all $\vec{w}$ within a neighborhood of the minimizer. 

There is evidence that the Hessian at the minimizer is very low rank for many standard over-parameterized neural network models \citep{sagun2017empirical, cooper2018loss, chaudhari2016entropy, wu2017towards, ghorbani2019investigation}. While a bit non-rigorous, the above result nonetheless suggests that for many standard neural network models, the gradient confusion might be small for a large class of weights near the minimizer.

\section{Missing proofs}
\label{app:missing_proofs}

\subsection{Proofs of theorems \ref{thm:cosine} and \ref{thm:nonconvex}}
\label{appsec:rate}
This section presents proofs for the convergence theorems of SGD presented in section \ref{sec:convergence}, under the assumption of low gradient confusion.  For clarity of presentation, we re-state each theorem before its proof.

\linearConvergence*

\begin{proof}
Let $\tilde{i} \in [N]$ denote the index of the realized function $\tilde{f}_k$ in the uniform sampling from $\{ f_i \}_{i \in [N]}$ at step $k$. From assumption (A1), we have
\begin{align*}
F(\vec{w}_{k+1}) & \leq  F(\vec{w}_k) + \langle \nabla F (\vec{w}_k), \, \vec{w}_{k+1}  - \vec{w}_k \rangle + \frac{L}{2} \| \vec{w}_{k+1} - \vec{w}_k \|^2 \\
&=  F(\vec{w}_k) - \alpha \langle \nabla F  (\vec{w}_k), \, \nabla \tilde f_k(\vec{w}_k) \rangle + \frac{L\alpha^2}{2} \|  \nabla \tilde f_k(\vec{w}_k)  \|^2  \\
&= F(\vec{w}_k) - \Big( \frac{\alpha}{N} - \frac{L\alpha^2}{2} \Big)   \|  \nabla \tilde f_k(\vec{w}_k)  \|^2 - \frac{\alpha}{N} \sum_{\forall i: i \neq \tilde{i}}  \langle \nabla f_i  (\vec{w}_k), \, \nabla \tilde f_k(\vec{w}_k) \rangle \\
&\le F(\vec{w}_k) - \Big( \frac{\alpha}{N} - \frac{L\alpha^2}{2} \Big)   \|  \nabla \tilde f_k(\vec{w}_k)  \|^2 + \frac{\alpha (N-1) \eta}{N}, \\
&\le F(\vec{w}_k) -  \Big( \frac{\alpha}{N} - \frac{L\alpha^2}{2} \Big)   \|  \nabla \tilde f_k(\vec{w}_k)  \|^2 + \alpha \eta,
\end{align*}
where the second-last inequality follows from definition \ref{ass:cosine}. Let the learning rate $\alpha < 2/NL$. Then, using assumption (A2) and subtracting by $F\opt = \min_\vec{w} F(\vec{w})$ on both sides, we get
\begin{align*}
F(\vec{w}_{k+1}) - F\opt \le F(\vec{w}_k) - F\opt - 2 \mu\Big(\frac{\alpha}{N} - \frac{L\alpha^2}{2}  \Big) (\tilde f_k(\vec{w}_k) - \tilde f_k\opt) + \alpha \eta,
\end{align*}
where $\tilde f_k\opt = \min_\vec{w} \tilde f_k(\vec{w})$. It is easy to see that by definition we have, $\expect_i [ f_i\opt ] \le F\opt$. Moreover, from assumption that $\alpha < \frac{2}{NL}$, it implies that $\Big(\frac{\alpha}{N} - \frac{L\alpha^2}{2}  \Big) > 0$. Therefore, taking expectation on both sides we get,
\begin{align*}
\expect[F(\vec{w}_{k+1}) - F\opt] \le \Big( 1 - \frac{2\mu \alpha}{N} + \mu L\alpha^2 \Big) \expect[F(\vec{w}_k) - F\opt] + \alpha \eta.
\end{align*}
Writing $\rho = 1 -  \frac{2\mu\alpha}{N} + \mu L\alpha^2 $, and unrolling the iterations, we get
\begin{align*}
\expect[F(\vec{w}_{k+1}) - F\opt] &\le \rho^{k+1} (F(\vec{w}_0) - F\opt) + \sum_{i=0}^k \rho^i \alpha \eta \\
&\le \rho^{k+1} (F(\vec{w}_0) - F\opt) + \sum_{i=0}^\infty \rho^i \alpha \eta \\
&= \rho^{k+1} (F(\vec{w}_0) - F\opt) + \frac{\alpha \eta}{1 - \rho}.\qedhere
\end{align*}
\end{proof}


\boundedVariance*

\begin{proof}
From theorem \ref{thm:cosine}, we have:
\begin{align}
\label{cvg_lemma}
F(\vec{w}_{k+1})
\le F(\vec{w}_k) - \Big( \frac{\alpha}{N} - \frac{L\alpha^2}{2} \Big)   \|  \nabla \tilde f_k(\vec{w}_k)  \|^2 + \alpha  \eta.
\end{align}

Now we know that:
\begin{align*}
\expect \|  \nabla \tilde f_k(\vec{w}_k)  \|^2 = \expect \|  \nabla \tilde f_k(\vec{w}_k) - \nabla F(\vec{w}_k) \|^2 + \expect \| \nabla F(\vec{w}_k)  \|^2 \ge \expect \| \nabla F(\vec{w}_k)  \|^2.
\end{align*}

Thus, taking expectation and assuming the step size $\alpha < 2/(NL)$, we can rewrite equation \ref{cvg_lemma} as:
\begin{align*}
\expect \| \nabla F(\vec{w}_k)  \|^2 
&\le \frac{2N}{2 \alpha - NL\alpha^2 } \expect [F(\vec{w}_k) - F(\vec{w}_{k+1}) ] + \frac{2N  \eta}{2 - NL\alpha}.
\end{align*}
Taking an average over $T$ iterations, and using $F\opt = \min_{\vec{w}} F(\vec{w})$, we get:
\begin{align*}
\min_{k=1, \dots, T} \expect \| \nabla F(\vec{w}_k)  \|^2 \le \frac{1}{T} \sum_{k=1}^\top \expect \| \nabla F(\vec{w}_k)  \|^2 &\le \frac{2N}{2 \alpha - NL\alpha^2 } \frac{  F(\vec{w}_1) - F\opt }{T} + \frac{2N  \eta}{2 - NL\alpha}. \qedhere
\end{align*}
\end{proof}

\subsection{Proof of lemma \ref{lem:lossPropNeural}}
	\label{appsec:helperlemma}

  \begin{lemma}
\label{lem:lossPropNeural}
		Consider the set of loss-functions $\{f_i(\vec{W})\}_{i \in [N]}$ where all $f_i$ are either the square-loss function or the logistic-loss function. Recall that $f_i(\vec{W}) := f(\vec{W}, \vec{x}_i)$. Consider a feed-forward neural network as defined in equation \ref{eq:NNmodel} whose weights $\vec{W}$ satisfy assumption~\ref{ass:small_weight}. Consider the gradient $\nabla_{\vec{W}} f_i(\vec{W})$ of each function $f_i$. From definition we have that $\nabla_{\vec{W}} f_i(\vec{W}) = \zeta_{\vec{x}_i}(\vec{W}) \nabla_{\vec{W}}g_{\vec{W}}(\vec{x}_i)$, where we define $\zeta_{\vec{x}_i}(\vec{W}) = \partial f_i(\vec{W}) / \partial g_{\vec{W}}$. Then we have the following properties.
		\begin{enumerate}
			\item When $\| \vec{x} \| \leq 1$ for every $p \in [\beta]$ we have $\| \nabla_{\vec{W}_p}g_{\vec{W}}(\vec{x}_i) \| \leq 1$.
			\item There exists  $0 < \zeta_0 \leq 2 \sqrt{\beta}$, such that $|\zeta_{\vec{x}_i}(\vec{W}) | \leq 2 \,$, $\|\nabla_{\vec{x}_i} \zeta_{\vec{x}_i}(\vec{W}) \|_2 \leq \zeta_0 \,$, $\|\nabla_{\vW} \zeta_{\vec{x}_i}(\vec{W})\|_2 \leq \zeta_0$.
		\end{enumerate}
	\end{lemma}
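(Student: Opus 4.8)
The plan is to track how the forward signal and the backward sensitivity propagate through the network under assumption~\ref{ass:small_weight} ($\|\vec{W}_p\|\le 1$ for all $p$), and then read off both statements from two ``master'' estimates: a forward bound on the activations and a backward bound on the gradient of the scalar output $g_{\vec{W}}$. Introduce the forward variables $\vec{z}_0=\vec{W}_0\vec{x}$, $\vec{a}_p=\sigma(\vec{z}_p)$, and $\vec{z}_p=\vec{W}_p\vec{a}_{p-1}$ for $p\ge 1$, so that $g_{\vec{W}}(\vec{x})=\sigma(\vec{z}_\beta)$. First I would show by induction on $p$ that $\|\vec{z}_p\|\le 1$ and $\|\vec{a}_p\|\le 1$: the base case uses $\|\vec{z}_0\|\le\|\vec{W}_0\|\|\vec{x}\|\le 1$, and the inductive step uses $\|\vec{z}_p\|\le\|\vec{W}_p\|\|\vec{a}_{p-1}\|\le\|\vec{a}_{p-1}\|\le 1$ together with the activation mapping $\vec{z}_p\mapsto\vec{a}_p$ without increasing the norm. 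Since $\|\vec{z}_p\|\le 1$ forces every coordinate of $\vec{z}_p$ into $[-1,1]$, properties (P1)--(P2) apply coordinate-wise, so the diagonal Jacobians $\vec{D}_p:=\mathrm{diag}(\sigma'(\vec{z}_p))$ satisfy $\|\vec{D}_p\|\le 1$. Back-propagation then yields the rank-one expression $\nabla_{\vec{W}_p}g_{\vec{W}}(\vec{x})=\boldsymbol{\delta}_p\vec{a}_{p-1}^\top$ with $\boldsymbol{\delta}_p=\vec{D}_p\vec{W}_{p+1}^\top\cdots\vec{W}_\beta^\top\,\sigma'(\vec{z}_\beta)$, and a telescoping estimate gives $\|\boldsymbol{\delta}_p\|\le\prod_{q>p}\|\vec{W}_q\|\cdot\prod_q\|\vec{D}_q\|\le 1$. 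Part 1 is then immediate, since the operator norm of a rank-one matrix is $\|\boldsymbol{\delta}_p\|\,\|\vec{a}_{p-1}\|\le 1$ (and the same bound holds for $p=0$ using $\|\vec{x}\|\le 1$).

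For part 2 I would first bound the scalar multiplier $\zeta=\partial f/\partial g$. For the square loss $f=\tfrac12(\cC(\vec{x})-g)^2$ one has $\zeta=g-\cC(\vec{x})$ with $|g|,|\cC(\vec{x})|\le 1$, so $|\zeta|\le 2$ and $\partial\zeta/\partial g=1$; for the logistic loss $\zeta=-\cC(\vec{x})/(1+e^{\cC(\vec{x})g})$, so $|\zeta|\le 1$ and a short computation gives $|\partial\zeta/\partial g|\le\cC(\vec{x})^2/4\le 1$. In both cases $|\zeta|\le 2$ and $|\partial\zeta/\partial g|\le 1$. Because $\zeta$ depends on $(\vec{W},\vec{x})$ only through $g_{\vec{W}}(\vec{x})$, the chain rule gives $\nabla_{\vec{x}}\zeta=(\partial\zeta/\partial g)\nabla_{\vec{x}}g$ and $\nabla_{\vec{W}}\zeta=(\partial\zeta/\partial g)\nabla_{\vec{W}}g$. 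The input gradient obeys $\|\nabla_{\vec{x}}g\|=\|\vec{W}_0^\top\boldsymbol{\delta}_0\|\le 1$, while stacking the $\beta+1$ layer gradients from part 1 gives $\|\nabla_{\vec{W}}g\|^2=\sum_{p=0}^{\beta}\|\nabla_{\vec{W}_p}g\|^2\le\beta+1$. Hence $\|\nabla_{\vec{x}}\zeta\|\le 1$ and $\|\nabla_{\vec{W}}\zeta\|\le\sqrt{\beta+1}$, so taking $\zeta_0:=\sqrt{\beta+1}\le 2\sqrt{\beta}$ (valid for $\beta\ge 1$) establishes the claim.

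The hard part will be the forward norm-propagation step $\|\vec{a}_p\|\le 1$: pointwise boundedness (P1) alone controls each coordinate but not the Euclidean norm of a wide layer, so the induction genuinely needs the activation to carry $\vec{z}_p$ to $\vec{a}_p$ without increasing the norm (true for tanh and ReLU, which vanish at $0$ and are $1$-Lipschitz). This is precisely the step that makes the final bounds width-independent---consistent with the absence of any width factor in theorem~\ref{thm:arbitraryNN}---and it is where the activation assumptions must be used most carefully. The only other place requiring attention is ensuring that the $\sqrt{\beta}$ accumulation in $\|\nabla_{\vec{W}}g\|$ comes solely from summing $\beta+1$ layer gradients and does not pick up hidden width or depth factors from the products of the $\vec{D}_q$ and $\vec{W}_q$, which is exactly what the telescoping bound $\|\boldsymbol{\delta}_p\|\le 1$ guarantees.
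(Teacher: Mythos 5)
Your overall architecture is sound and matches the paper's proof where the paper gives details: part 1 via a rank-one factorization $\nabla_{\vec{W}_p}g_{\vec{W}} = \boldsymbol{\delta}_p\vec{a}_{p-1}^\top$ with a telescoping product bound (the paper asserts this part in a single sentence, so your derivation is a genuine fleshing-out), and part 2 via bounding $\zeta$ and its derivatives and stacking $\beta+1$ per-layer bounds into a $\sqrt{\beta+1}$ factor. Your treatment of $\nabla_{\vec{W}}\zeta$ is correct and in fact tighter than the paper's (the paper bounds each $\|\nabla_{\vec{W}_p}\zeta\|\le\|\nabla_{\vec{W}_p}g\|+1\le 2$, giving $2\sqrt{\beta}$, even though the label term has zero $\vec{W}$-gradient). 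However, there is a genuine error in your chain-rule step for the input gradient: you claim ``$\zeta$ depends on $(\vec{W},\vec{x})$ only through $g_{\vec{W}}(\vec{x})$,'' but $\zeta$ also depends on $\vec{x}$ through the label $\cC(\vec{x})$. For the square loss, $\zeta = g_{\vec{W}}(\vec{x}) - \cC(\vec{x})$, so $\nabla_{\vec{x}}\zeta = \nabla_{\vec{x}}g_{\vec{W}}(\vec{x}) - \nabla_{\vec{x}}\cC(\vec{x})$; the paper handles the extra term by invoking the setting's assumption $\|\nabla_{\vec{x}}\cC(\vec{x})\|_2\le 1$, which yields the bound $2$, not your claimed $1$ (and similarly for the logistic loss). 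This matters quantitatively: with the corrected bound, your choice $\zeta_0=\sqrt{\beta+1}$ fails for $\beta\in\{1,2\}$, since then $\sqrt{\beta+1}<2$. The fix is to take $\zeta_0=\max\{2,\sqrt{\beta+1}\}$, which still satisfies $\zeta_0\le 2\sqrt{\beta}$ for all $\beta\ge 1$, so the lemma survives, but the proof as written proves a false intermediate claim.

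A second, scope-level issue is the forward induction $\|\vec{a}_p\|_2\le 1$. You correctly identify that this needs the activation to be norm-non-increasing in $\ell_2$ (e.g., $\sigma(0)=0$ and $1$-Lipschitz), and that this is \emph{not} implied by (P1)--(P2): for the sigmoid, $\|\sigma(\vec{0})\|_2 = \tfrac{1}{2}\sqrt{\ell_p}$, which exceeds $1$ for any reasonably wide layer, so your induction (and hence your part 1) genuinely breaks for sigmoid and softmax, both of which the lemma's setting nominally admits. To be fair, the paper's one-line proof of part 1 is silent on exactly this point, so your proposal makes explicit a restriction the paper glosses over rather than introducing a new one; but since your argument \emph{relies} on this property while the statement you are proving does not assume it, you should either state it as an additional hypothesis or restrict the class of activations, rather than leave it as a remark that the step is ``where the activation assumptions must be used most carefully.''
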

	\begin{proof}
	    The first property is a direct consequence of assumption~\ref{ass:small_weight} and property (P2) of the activation function.
	
		Let $\vec{W}$ denote the tuple $(\vec{W}_p)_{p \in [\beta]_0}$. Consider $|\zeta_{\vec{x}_i}(\vec{W})| = | \partial f_i(\vec{W}) / \partial g_{\vec{W}} |$. In the case of square-loss function this evaluates to $|g_{\vec W}(\vec{x}) - \cC(\vec{x})| \leq 2$. In case of logistic regression, this evaluates to $|\frac{-1}{1+\exp(\cC(\vec{x}_i)g_{\vec W}(\vec{x}_i))}| \leq 1$. Now we consider $\| \nabla_{\vec{x}_i} \zeta_{\vec{x}_i}(\vec{W}) \|$. Consider the squared loss function. We then have the following.
			\begin{align*}
			\| \nabla_{\vec{x}_i} \zeta_{\vec{x}_i}(\vec{W}) \| &= \| \nabla_{\vec{x}_i} f'(\vec{W}) \| \\
			&= \| \nabla_{\vec{x}_i} g_{\vec{W}}(\vec{x}_i) - \cC(\vec{x}_i) \| \\
			& \leq \| \nabla_{\vec{x}_i} g_{\vec{W}}(\vec{x}_i) \| + 1.	
		\end{align*}
		
		Likewise, consider the logistic-loss function. We then have the following.
		\begin{align*}
			\| \nabla_{\vec{x}_i} \zeta_{\vec{x}_i}(\vec{W}) \| & \leq \norm{\frac{\cC(\vec{x}_i)^2}{(1+\exp(\cC(\vec{x}_i) g_{\vec{W}}(\vec{x}_i)))^2} \exp(\cC(\vec{x}_i) g_{\vec{W}}(\vec{x}_i))} \| \nabla_{\vec{x}_i} g_{\vec{W}}(\vec{x}_i) \| \\
			& \leq \| \nabla_{\vec{x}_i} g_{\vec{W}}(\vec{x}_i) \|.
		\end{align*}
		
		Thus, it suffices to bound $\| \nabla_{\vec{x}_i} g_{\vec{W}}(\vec{x}_i)	\|$. Using assumption~\ref{ass:small_weight} and the properties (P1), (P2) of $\sigma$, this can be upper-bounded by $1$.
		
	 Consider $\nabla_{\vec{W}_p} \zeta_{\vec{x}_i}(\vec{W})$ for some layer index $p \in [\beta]_0$. We will show that $\| \nabla_{\vec{W}_p} \zeta_{\vec{x}_i}(\vec{W}) \|_2 \leq 2$. Then it immediately follows that $\| \nabla_{\vec{W}} \zeta_{\vec{x}_i}(\vec{W}) \|_2 \leq 2 \sqrt{\beta}$.  In the case of a squared loss function. We have the following.
		\begin{align*}
			\| \nabla_{\vec{W}_p} \zeta_{\vec{x}_i}(\vec{W}) \| &= \| \nabla_{\vec{W}_p} f'(\vec{W}) \| \\
			&= \| \nabla_{\vec{W}_p} g_{\vec{W}}(\vec{x}_i) - \cC(\vec{x}_i) \| \\
			& \leq \| \nabla_{\vec{W}_p} g_{\vec{W}}(\vec{x}_i) \| + 1.	
		\end{align*}
		
		Likewise, consider the logistic-loss function. We then have the following.
		\begin{align*}
			\| \nabla_{\vec{W}_p} \zeta_{\vec{x}_i}(\vec{W}) \| & \leq \norm{\frac{\cC(\vec{x}_i)^2}{(1+\exp(\cC(\vec{x}_i) g_{\vec{W}}(\vec{x}_i)))^2} \exp(\cC(\vec{x}_i) g_{\vec{W}}(\vec{x}_i))} \| \nabla_{\vec{W}_p} g_{\vec{W}}(\vec{x}_i) \| \\
			& \leq \| \nabla_{\vec{W}_p} g_{\vec{W}}(\vec{x}_i) \|.
		\end{align*}
		
		Since $\|\nabla_{\vec{W}_p} g_{\vec{W}}(\vec{x}_i) \| \leq 1$, we have that $\| \nabla_{\vec{W}_p} \zeta_{\vec{x}_i}(\vec{W}) \| \leq 2$ in both the cases. Thus, $\zeta_0 = 2 \sqrt{\beta}$. 
	\end{proof}

\subsection{Proofs of theorem \ref{thm:arbitraryNN} and corollary \ref{thm:uniformNN}}
\label{appsec:nnconc}
		In this section, we will present the proofs of theorem \ref{thm:arbitraryNN} and corollary \ref{thm:uniformNN}.

		\NeuralNet*
		\begin{proof}
	We show two key properties, namely bounded gradient and non negative expectation. We will then use both these properties to complete the proof.
	
		\xhdr{Bounded gradient.}
		For every $i \in [n]$ define $\zeta_{\vec{x}_i}(\vec{W}) := f'(\vec{W})$. 
		For every $p \in [\beta]$ define $\vec{H}_p$ as follows.
		\[
				\vec{H}_p(\vec{x}) := \sigma(\vec{W}_p \cdot \sigma( \vec{W}_{p-1} \cdot \sigma(\ldots \cdot \sigma(\vec{W}_0 \cdot \vec{x}) \ldots).
		\]
		Fix an $i \in [N]$. Then we have the following recurrence
		\begin{align*}
			g_\beta(\vec{x}_i) &:= \sigma'(H_{\beta}(\vec{x}_i))& \\
			\vec{g}_p(\vec{x}_i) &:= (\vec{W}_{p+1}^\top \cdot \vec{g}_{p+1}(\vec{x}_i)) \cdot \Diag(\sigma'(\vec{H}_p(\vec{x}_i)))& \quad \forall p \in \{0, 1, \ldots, \beta-1\}.
		\end{align*}
		
		Then the gradients can be written in terms of the above quantities as follows.
		\begin{align*}
			\nabla_{\vec{W}_p} f_i (\vec{W}) &= \vec{g}_p(\vec{x}_i) \cdot \vec{H}_{p-1}(\vec{x}_i)^\top & \quad \forall p \in [\beta]_0.
		\end{align*}

		We can write, the gradient confusion denote by  $h_{\vec{W}}(\vec{x}_i, \vec{x}_j)$, as follows.
		\begin{equation}
			\label{eq:HfunctionNN}
			\zeta_{\vec{x}_i}(\vec{W}) \zeta_{\vec{x}_j}(\vec{W}) \left( \sum_{p \in [\beta]_0} \Tr[ \vec{H}_{p-1}(\vec{x}_i) \cdot \vec{g}_p(\vec{x}_i)^\top \cdot \vec{g}_p(\vec{x}_j) \cdot \vec{H}_{p-1}(\vec{x}_i)^\top]\right).
		\end{equation}
		We will now bound $\|\nabla_{(\vec{x}_i, \vec{x}_j)} h_{\vec{W}}(\vec{x}_i, \vec{x}_j) \|_2$. Consider 
		$\nabla_{\vec{x}_i} h_{\vec{W}}(\vec{x}_i, \vec{x}_j)$. This can be written as follows.
		\begin{multline}
			\label{eq:genNNMain}
			(\nabla_{\vec{x}_i} \zeta_{\vec{x}_i}(\vec{W})) \zeta_{\vec{x}_j}(\vec{W}) \left( \sum_{p \in [\beta]_0} \Tr[ \vec{H}_{p-1}(\vec{x}_i) \cdot \vec{g}_p(\vec{x}_i)^\top \cdot \vec{g}_p(\vec{x}_j) \cdot \vec{H}_{p-1}(\vec{x}_i)^\top]\right) +\\
		\zeta_{\vec{x}_i}(\vec{W}) \zeta_{\vec{x}_j}(\vec{W}) \sum_{p \in [\beta]_0} \left[ \nabla_{\vec{x}_i} \left( \vec{H}_{p-1}(\vec{x}_i) \cdot \vec{g}_p(\vec{x}_i)^\top \cdot \vec{g}_p(\vec{x}_j) \cdot \vec{H}_{p-1}(\vec{x}_i) \right) \right]^\top.
		\end{multline}
		
		Observe that each of the entries in the diagonal matrix $\Diag(\sigma'(\vec{H}_p(\vec{x}_i)))$ is at most $1$. Thus, we have that $\|\Diag(\sigma'(\vec{H}_p(\vec{x}_i)))\| \leq 1$.

		We have the following relationship.
		\begin{align*}
			\| g_\beta(\vec{x}_i) \| & \leq 1 &\\
			\| \vec{g}_p(\vec{x}_i) \| & \leq  \| \vec{W}_{p+1}^\top \| \| \vec{g}_{p+1}(\vec{x}_i)) \| \| \Diag(\sigma'(\vec{H}_p(\vec{x}_i))) \| \leq 1 & \quad \forall p \in \{0, 1, \ldots, \beta-1\}.
		\end{align*}
		
		Moreover we have, 
		\[
			\| \Tr[ \vec{H}_{p-1}(\vec{x}_i) \cdot \vec{g}_p(\vec{x}_i)^\top \cdot \vec{g}_p(\vec{x}_j) \cdot \vec{H}_{p-1}(\vec{x}_i)^\top] \| \leq \| \vec{H}_{p-1}(\vec{x}_i) \| \|\vec{g}_p(\vec{x}_i)^\top\| \| \vec{g}_p(\vec{x}_j) \| \|\vec{H}_{p-1}(\vec{x}_i)^\top\| \leq 1.
		\]
		
		Consider $\| \nabla_{\vec{x}_i} \left( \vec{H}_{p-1}(\vec{x}_i) \cdot \vec{g}_p(\vec{x}_i)^\top \cdot \vec{g}_p(\vec{x}_j) \cdot \vec{H}_{p-1}(\vec{x}_i) \right) \|$ for every $p \in [\beta]_0$.

		This can be upper-bounded by,
		\[
				\| \nabla_{\vec{x}_i} \vec{H}_{p-1}(\vec{x}_i) \| \| \vec{g}_p(\vec{x}_i)^\top \| \| \vec{g}_p(\vec{x}_j) \|  \| \vec{H}_{p-1}(\vec{x}_i) \| + \| \vec{H}_{p-1}(\vec{x}_i) \| \| \nabla_{\vec{x}_i} \vec{g}_p(\vec{x}_i)^\top\| \| \vec{g}_p(\vec{x}_j) \| \| \vec{H}_{p-1}(\vec{x}_i) \|.
		\]
		Note that $\nabla_{\vec{x}_i} \vec{H}_{p-1}(\vec{x}_i) = \vec{g}_1(\vec{x}_i) \cdot \Diag(\sigma'(\vec{W}_0 \cdot \vec{x}_i)) \cdot \vec{W}_0^\top \cdot \vec{g}_p(\vec{x}_i)^\top$. Thus, $\| \nabla_{\vec{x}_i} \vec{H}_{p-1}(\vec{x}_i) \| \leq 1$. We will now show that $\| \nabla_{\vec{x}_i} \vec{g}_p(\vec{x}_i) \| \leq \beta - p + 1$. We prove this inductively.
		Consider the base case when $p=\beta$.
		\[
				\| \nabla_{\vec{x}_i} \vec{g}_{\beta}(\vec{x}_i) \| = \| \nabla_{\vec{x}_i} \sigma'(\vec{H}_{\beta}(\vec{x}_i)) \| \leq 1 = \beta-\beta+1.
		\]
		Now, the inductive step.
		\[
				\| \nabla_{\vec{x}_i} \vec{g}_p(\vec{x}_i) \| \leq \| \nabla_{\vec{x}_i} \vec{g}_{p+1}(\vec{x}_i) \| + \| \nabla_{\vec{x}_i} \Diag(\sigma'(\vec{H}_p(\vec{x}_i))) \| \leq \beta-p \leq \beta-p+1.
		\]
		Thus, using equation \ref{eq:genNNMain} and the above arguments, we obtain,
		$\| \nabla_{\vec{x}_i} h_{\vec{W}}(\vec{x}_i, \vec{x}_j) \|_2 \leq \zeta_0^2(\beta+1) + \zeta_0^2(\beta+1)(\beta+2) \leq 2 \zeta_0^2(\beta+2)^2$ and thus, $\|\nabla_{(\vec{x}_i, \vec{x}_j)} h_{\vec{W}}(\vec{x}_i, \vec{x}_j)\|_2 \leq 4 \zeta_0^2(\beta+2)^2$.
	
	\xhdr{Non-negative expectation.}
	\begin{align}
		\mathbb{E}_{\vxi,\vxj}[h(\vec{x}_i, \vec{x}_j)] =& \mathbb{E}_{\vxi,\vxj}[\langle \nabla f_i (\vec{W}) , \nabla f_j (\vec{W}) \rangle] \nonumber \\
		&= \langle \mathbb{E}_{\vxi}[ \nabla f_i (\vec{W})] , \mathbb{E}_{\vxj}[\nabla f_j (\vec{W}) ] \rangle \nonumber \\
		&= \|\mathbb{E}_{\vxi}[ \nabla f_i (\vec{W})]\|^2\ge 0. \label{eq:nonNegativeIID}
		\end{align}
We have used the fact that $\nabla f_i (\vec{W})$ and $ \nabla f_j (\vec{W})$ are identically distributed and independent.

	 \xhdr{Concentration of Measure.}	
     We combine the two properties as follows. From \textbf{Non-negative Expectation} property and equation \ref{lem:conc:eq2}, we have that 
			\begin{align} \label{linregprobbound}
				{ \Pr[h_{\vec{W}}(\vec{x}_i, \vec{x}_j) \leq -\eta] \leq   \Pr[h_{\vec{W}}(\vec{x}_i, \vec{x}_j) \leq \mathbb{E}_{(\vec{x}_i, \vec{x}_j)}[h_{\vec{W}}(\vec{x}_i, \vec{x}_j)]-\eta] \leq  \exp\left( \frac{-c d \eta^2}{16 \zeta_0^4 (\beta+2)^4} \right) .}
			\end{align}
To obtain the probability that {\em some} value of $h_{\vec{w}}(\nabla_{\vec{w}} f_i ,\nabla_{\vec{w}} f_j)$ lies below $-\eta,$ we use a union bound.  There are $N(N-1)/2<N^2/2$ possible pairs of data points to consider, and so this probability is bounded above by $N^2 \exp\left( \frac{-c d \eta^2}{16 \zeta_0^4 (\beta+2)^4} \right)$. 
\end{proof}

\subsubsection{Proof of corollary~\ref{thm:uniformNN}}
Before we prove corollary~\ref{thm:uniformNN} we first prove the following helper lemma.

\begin{lemma} \label{lipschitz_product}
Suppose $\max_\vW \|\nabla_\vW f_i (\vW)\| \le M,$ and both  $\nabla_{\vW} f_i(\vw)$ and  $\nabla_{\vW} f_j(\vW)$ are Lipschitz in $\vW$ with constant $L$. Then $h_\vW(\vxi,\vxj)$ is Lipschitz in $\vW$ with constant $2LM.$
\end{lemma}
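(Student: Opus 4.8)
The plan is to prove this by the same add-and-subtract (telescoping) manipulation used for the linear-model case earlier, now carried out on the weight tuple $\vW$ rather than on a single vector. Recall that $h_\vW(\vxi,\vxj) = \langle \nabla_\vW f_i(\vW), \nabla_\vW f_j(\vW)\rangle$ is bilinear in the two gradient vectors, so the difference between its values at two points $\vW$ and $\vW'$ is a difference of inner products. First I would fix $\vW, \vW'$ and write $h_\vW(\vxi,\vxj) - h_{\vW'}(\vxi,\vxj) = \langle \nabla_\vW f_i(\vW), \nabla_\vW f_j(\vW)\rangle - \langle \nabla_{\vW'} f_i(\vW'), \nabla_{\vW'} f_j(\vW')\rangle$.

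The key step is to split this single difference into two pieces, each isolating the variation of exactly one of the two gradient factors. Concretely, I would insert $\pm\,\nabla_{\vW'} f_i(\vW')$ in the first argument and $\pm\,\nabla_\vW f_j(\vW)$ in the second, then regroup so that the expression becomes $\langle \nabla_\vW f_i(\vW) - \nabla_{\vW'} f_i(\vW'),\, \nabla_\vW f_j(\vW)\rangle - \langle \nabla_{\vW'} f_i(\vW'),\, \nabla_{\vW'} f_j(\vW') - \nabla_\vW f_j(\vW)\rangle$. Applying the triangle inequality followed by Cauchy--Schwarz then bounds the absolute value by $\|\nabla_\vW f_i(\vW) - \nabla_{\vW'} f_i(\vW')\|\,\|\nabla_\vW f_j(\vW)\| + \|\nabla_{\vW'} f_i(\vW')\|\,\|\nabla_{\vW'} f_j(\vW') - \nabla_\vW f_j(\vW)\|$.

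Next I would invoke the two hypotheses. The Lipschitz assumption on the gradients gives $\|\nabla_\vW f_i(\vW) - \nabla_{\vW'} f_i(\vW')\| \le L\|\vW-\vW'\|$ and likewise for $f_j$, while the uniform norm bound $\max_\vW \|\nabla_\vW f_i(\vW)\| \le M$ controls the two remaining gradient factors by $M$. It is precisely here that the bound must hold at both $\vW$ and $\vW'$, which is why the hypothesis is stated as a maximum over all weights. Combining these, each of the two terms is at most $LM\|\vW-\vW'\|$, so their sum is at most $2LM\|\vW-\vW'\|$, establishing that $h_\vW(\vxi,\vxj)$ is Lipschitz in $\vW$ with constant $2LM$.

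There is no genuine obstacle here; the argument is mechanical and entirely parallel to the linear case. The only care needed is (i) choosing the add-subtract split so that each surviving inner product contains exactly one gradient-difference factor, and (ii) applying the hypotheses symmetrically, so that both $f_i$ and $f_j$ contribute one factor of $L$ and one factor of $M$. This lemma is the slow-variation ingredient which, combined with the per-layer gradient-norm and Lipschitz bounds extracted from Lemma~\ref{lem:lossPropNeural} and the pointwise concentration of Theorem~\ref{thm:arbitraryNN}, feeds into the $\epsilon$-net argument that yields the uniform bound of Corollary~\ref{thm:uniformNN}.
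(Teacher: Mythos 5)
Your proposal is correct and follows essentially the same argument as the paper's proof: the identical add-and-subtract decomposition isolating one gradient-difference factor per inner product, followed by the triangle inequality, Cauchy--Schwarz, and the Lipschitz and uniform norm hypotheses (the paper additionally notes that $\vW$ is treated as a flattened vector, which you do implicitly). Nothing is missing; the two proofs match step for step.
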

\begin{proof}
We view $\vW$ as flattened vector. We now prove the above result for these two vectors. For two vectors $\vw,\vw',$
\begin{align*}
&|h_\vw(\vxi,\vxj)-h_{\vw'}(\vxi,\vxj)| \\&= | \la  \nabla_{\vw} f_i(\vw), \nabla_{\vw} f_j(\vw) \ra-\la  \nabla_{\vw'} f_i(\vw'), \nabla_{\vw'} f_j(\vw') \ra |\\
&= | \la  \nabla_{\vw} f_i(\vw) - \nabla_{\vw'} f_i(\vw')+ \nabla_{\vw'} f_i(\vw')   , \nabla_{\vw} f_j(\vw) \ra \\ & \quad -\la  \nabla_{\vw'} f_i(\vw'), \nabla_{\vw'} f_j(\vw')-\nabla_{\vw} f_j(\vw)+\nabla_{\vw} f_j(\vw) \ra |\\
 &= | \la  \nabla_{\vw} f_i(\vw) - \nabla_{\vw'} f_i(\vw')  , \nabla_{\vw} f_j(\vw) \ra  -\la  \nabla_{\vw'} f_i(\vw'), \nabla_{\vw'} f_j(\vw')-\nabla_{\vw} f_j(\vw) \ra| \\
  &\le  | \la  \nabla_{\vw} f_i(\vw) - \nabla_{\vw'} f_i(\vw')  , \nabla_{\vw} f_j(\vw) \ra |  +|\la  \nabla_{\vw'} f_i(\vw'), \nabla_{\vw'} f_j(\vw')-\nabla_{\vw} f_j(\vw) \ra| \\
    & \le  \|   \nabla_{\vw} f_i(\vw) - \nabla_{\vw'} f_i(\vw') \|  \| \nabla_{\vw} f_j(\vw)  \|  +\| \nabla_{\vw'} f_i(\vw')\| \| \nabla_{\vw'} f_j(\vw')-\nabla_{\vw} f_j(\vw) \| \\
         &\le  L \|\vw - \vw' \|  \| \nabla_{\vw} f_j(\vw)  \|  +\| \nabla_{\vw'} f_i(\vw')\|  L \| \vw'- \vw \| \\
               & \le  2 LM \|\vw - \vw' \|.  
\end{align*}
Here the first inequality uses the triangle inequality, the second inequality uses the Cauchy-Schwartz inequality, and the third and fourth inequalities use the assumptions that $\nabla_\vw f_i (\vw)$ and $\nabla_\vw f_j (\vw)$ are Lipschitz in $\vw$ and have bounded norm.
\end{proof}
 
 We are now ready to prove the corollary, which we restate here.  The proof uses a standard "epsilon-net" argument; we identify a fine net of points within the ball $\mathcal{B}_r.$ If the gradient confusion is small at every point in this discrete set, and the gradient confusion varies slowly enough with $\vW,$ when we can guarantee small gradient confusion at every point in $\mathcal{B}_r.$ \\
 
	\NeuralNetUniform*

\begin{proof}
Define the function $h^+(\vW) = \max_{ij} h_\vW(\vxi,\vxj).$ Our goal is to find conditions under which $h^+(\vW) > -\eta$ for all $\vW$ in a large set.
To derive such conditions, we will need a Lipschitz constant for  $h^+(\vW),$ which is no larger than the maximal Lipschitz constant of $h_\vW(\vxi,\vxj)$ for all $i,j.$  We have that $\|\nabla_{\vW} f_i\| = \|\zeta_{\vec{x}_i}(\vec{W}) \vec{x}_i\| \le \zeta_0 .$ Now we need to get a $\vW$-Lipschitz constants for  $\nabla_\vxi f_i =\zeta_{\vec{x}_i}(\vec{W}) \vxi.$ By lemma \ref{lem:lossPropNeural}, we have $ \| \nabla_\vW (\zeta_{\vec{x}_i}(\vec{W}) \vec{x}_i)\|  =  \|(\nabla_\vW \zeta_{\vec{x}_i}(\vec{W})) \vec{x}_i \| \le \zeta_0.$ Using lemma \ref{lipschitz_product}, we see that $2\zeta_0^2$ is a Lipschitz constant for $h_\vW(\vxi,\vxj),$ and thus also  $h^+(\vW).$ 

Now, consider a minimizer $\vW$ of the objective, and a ball $\mathcal{B}_r$ around this point of radius $r$.  Define the constant $\epsilon = \frac{\eta}{4\zeta_0^2},$ and create an $\epsilon$-net of points $\mathcal{N}_\epsilon=\{\vW_i\}$ inside the ball.  This net is sufficiently dense that any $\vW' \in\mathcal{B}_r$ is at most $\epsilon$ units away from some $\vW_i \in \mathcal{N}_\epsilon.$ Furthermore, because $h^+(\vW)$ is Lipschitz in $\vW,$  $|h^+(\vW') - h^+(\vWi) |  \le 2\zeta_0^2 \epsilon = \eta/2.$

We now know the following: if we can guarantee that 
\begin{align} \label{linearnet}
h^+(\vWi) \ge -\eta/2, \text{ for all } \vWi \in  \mathcal{N}_\epsilon,
\end{align}
then we also know that $h^+(\vW') \ge -\eta$ for all $\vW'\in \mathcal{B}_r$.  For this reason, we prove the result by bounding the probability that \eqref{linearnet} holds.  It is known that $\mathcal{N}_\epsilon$ can be constructed so that $|\mathcal{N}_\epsilon | \le (2r/\epsilon+1)^d = (8\zeta_0^2 r/\eta +1)^d$ (see \citet{vershynin2016high}, corollary 4.1.13). Theorem~\ref{thm:arbitraryNN} provides a bound on the probability that each individual point in the net satisfies condition \eqref{linearnet}.  Using a union bound, we see that all points in the net satisfy this condition with probability at least
\begin{align}
&1 - N^2\left(\frac{8\zeta_0^2 r}{\eta} +1\right)^d \exp\left(-\frac{cd(\eta/2)^2}{16\zeta_0^4}\right)\\
& =1- N^2\exp( d \log(8\zeta_0^2 r/\eta +1)) \exp\left(-\frac{cd\eta^2}{64\zeta_0^4}\right) \\
& \ge 1- N^2\exp( 8d\zeta_0^2 r/\eta) \exp\left(-\frac{cd\eta^2}{64\zeta_0^4}\right)\\
& = 1- N^2 \exp\left(-\frac{cd\eta^2}{64\zeta_0^4} + \frac{8d\zeta_0^2 r}{\eta}\right).
\end{align}

Finally, note that, if $r<\epsilon,$ then we can form a net with $|\mathcal{N}_\epsilon | =1$.  In this case, the probability of satisfying \eqref{linearnet} is at least 
 \[1 - N^2\exp\left(-\frac{cd(\eta/2)^2}{64\zeta_0^4}\right). \qedhere\]
\end{proof} 

\subsection{Proof of theorem \ref{thm:fixedData}}
\label{appsec:randomw}
\NeuralNetsFixedData*

Both parts in theorem~\ref{thm:fixedData} depend on the following argument. From theorem 2.3.8 and Proposition 2.3.10 in \citet{tao2012topics} with appropriate scaling\footnote{In particular, each entry has to be scaled by $\frac{1}{\ell}$ for matrices $\{ \vec{W}_p \}_{p \in [\beta]}$ and $\frac{1}{d}$ for the matrix $\vec{W}_0$.}, we have for every $p = 1 ,\ldots, \beta$ we have that the matrix norm $\| \vec{W}_p \| \leq 1$ with probability at least $1- \beta \exp\left( -c_1 \kappa^2 \ell^2 \right)$ and $\| \vec{W}_0 \| \leq 1$ with probability at least $1- \exp\left( -c_1 \kappa^2 d^2 \right)$ when the weight matrices are initialized according to strategy~\ref{strat:weights}. Thus, conditioning on this event it implies that these matrices satisfy assumption~\ref{ass:small_weight}. The proof strategy is similar to that of theorem~\ref{thm:arbitraryNN}. We will first show that the gradient of the function $h(., .)$ as defined in equation \eqref{eq:HfunctionNN} \emph{with respect to the weights} is bounded. Note that in part (1) the random variable is the set of weight matrices $\{ W_p \}_{p \in [\beta]}$. Thus, the dimension used to invoke theorem~\ref{lem:HDPBook} is at most $\ell^2 \beta$. In part (2) along with the weights, the data $\vec{x} \in \mathbb{R}^d$ is also random. Thus, the dimension used to invoke theorem~\ref{lem:HDPBook} is at most $\ell d + \ell^2 \beta$. Combining this with theorem~\ref{lem:HDPBook}, the bound on the gradient of $h(., .)$ and taking a union bound, we get the respective parts of the theorem. Thus, all it remains to prove is the bound on the gradient of the function $h(., .)$ as defined in equation \eqref{eq:HfunctionNN} \emph{with respect to the weights} conditioning on the event that $\| \vec{W}_p \| \leq 1$ for every $p \in \{ 0, 1, \ldots, \beta \}$.

	We obtain the following analogue of equation \eqref{eq:genNNMain}.
		\begin{multline}
			\label{eq:genNNExtension}
			(\nabla_{\vec{W}} \zeta_{\vec{x}_i}(\vec{W})) \zeta_{\vec{x}_j}(\vec{W}) \left( \sum_{p \in [\beta]_0} \Tr[ \vec{H}_{p-1}(\vec{x}_i) \cdot \vec{g}_p(\vec{x}_i)^\top \cdot \vec{g}_p(\vec{x}_j) \cdot \vec{H}_{p-1}(\vec{x}_i)^\top]\right) +\\
			(\nabla_{\vec{W}} \zeta_{\vec{x}_j}(\vec{W})) \zeta_{\vec{x}_i}(\vec{W}) \left( \sum_{p \in [\beta]_0} \Tr[ \vec{H}_{p-1}(\vec{x}_i) \cdot \vec{g}_p(\vec{x}_i)^\top \cdot \vec{g}_p(\vec{x}_j) \cdot \vec{H}_{p-1}(\vec{x}_i)^\top]\right) +\\
		\zeta_{\vec{x}_i}(\vec{W}) \zeta_{\vec{x}_j}(\vec{W}) \sum_{p \in [\beta]_0} \left[ \nabla_{\vec{W}} \left( \vec{H}_{p-1}(\vec{x}_i) \cdot \vec{g}_p(\vec{x}_i)^\top \cdot \vec{g}_p(\vec{x}_j) \cdot \vec{H}_{p-1}(\vec{x}_i) \right) \right]^\top.
		\end{multline}
	As in the case of the proof for theorem~\ref{thm:arbitraryNN}, we will upper-bound the $\ell_2$-norm of the above expression. In particular, we show the following.
	\begin{align}
		&  \Big\| (\nabla_{\vec{W}} \zeta_{\vec{x}_i}(\vec{W})) \zeta_{\vec{x}_j}(\vec{W}) \left( \sum_{p \in [\beta]_0} \Tr[ \vec{H}_{p-1}(\vec{x}_i) \cdot \vec{g}_p(\vec{x}_i)^\top \cdot \vec{g}_p(\vec{x}_j) \cdot \vec{H}_{p-1}(\vec{x}_i)^\top]\right) \Big\|_2 \leq 2\zeta_0^2(\beta+2)^2.\label{eq:randomWNNextension1} \\
		&  \Big\| (\nabla_{\vec{W}} \zeta_{\vec{x}_j}(\vec{W})) \zeta_{\vec{x}_i}(\vec{W}) \left( \sum_{p \in [\beta]_0} \Tr[ \vec{H}_{p-1}(\vec{x}_i) \cdot \vec{g}_p(\vec{x}_i)^\top \cdot \vec{g}_p(\vec{x}_j) \cdot \vec{H}_{p-1}(\vec{x}_i)^\top]\right) \Big\|_2 \leq 2\zeta_0^2(\beta+2)^2. \label{eq:randomWNNextension2} \\
		& \Big\| \zeta_{\vec{x}_i}(\vec{W}) \zeta_{\vec{x}_j}(\vec{W}) \sum_{p \in [\beta]_0} \left[ \nabla_{\vec{W}} \left( \vec{H}_{p-1}(\vec{x}_i) \cdot \vec{g}_p(\vec{x}_i)^\top \cdot \vec{g}_p(\vec{x}_j) \cdot \vec{H}_{p-1}(\vec{x}_i) \right) \right]^\top \Big\|_2 \leq 4\zeta_0^2(\beta+2)^2. \label{eq:randomWNNextension3}
	\end{align}
	
	Equations \eqref{eq:randomWNNextension1} and \ref{eq:randomWNNextension2} follow from the the fact that $\|(\nabla_{\vec{W}} \zeta_{\vec{x}_i}(\vec{W}))\|_2 \leq \zeta_0$ and the arguments in the proof for theorem~\ref{thm:arbitraryNN}. We will now show the proof sketch for equation \eqref{eq:randomWNNextension3}. For every $p \in [\beta]_0$, consider $\| \nabla_{\vec{W}} \left( \vec{H}_{p-1}(\vec{x}_i) \cdot \vec{g}_p(\vec{x}_i)^\top \cdot \vec{g}_p(\vec{x}_j) \cdot \vec{H}_{p-1}(\vec{x}_i) \right) \|$. Using the symmetry between $\vec{x}_i$ and $\vec{x}_j$, the expression can be upper-bounded by,
		\[
				2\| \nabla_{\vec{W}} \vec{H}_{p-1}(\vec{x}_i) \| \| \vec{g}_p(\vec{x}_i)^\top \| \| \vec{g}_p(\vec{x}_j) \|  \| \vec{H}_{p-1}(\vec{x}_i) \| + 2\| \vec{H}_{p-1}(\vec{x}_i) \| \| \nabla_{\vec{W}} \vec{g}_p(\vec{x}_i)^\top\| \| \vec{g}_p(\vec{x}_j) \| \| \vec{H}_{p-1}(\vec{x}_i) \|.
		\]
		As before we can use an inductive argument to find the upper-bound and thus, we obtain the following which implies equation \eqref{eq:randomWNNextension3}.
		\[
				\| \nabla_{\vec{W}} \left( \vec{H}_{p-1}(\vec{x}_i) \cdot \vec{g}_p(\vec{x}_i)^\top \cdot \vec{g}_p(\vec{x}_j) \cdot \vec{H}_{p-1}(\vec{x}_i) \right) \| \leq 4(\beta+2)^2.
		\]
	
Next, we show that the expected value can be lower-bounded by $-4$ as in the case of theorem \ref{thm:fixedData} above. Combining these two gives us the desired result. Consider $\mathbb{E}_{\vec{W}}[h_{\vec{W}}(\vec{x}_i, \vec{x}_j)]$. We compute this expectation iteratively as follows.
	\begin{align*}
		& \mathbb{E}_{\vec{W}}[h_{\vec{W}}(\vec{x}_i, \vec{x}_j)] \\ &= \mathbb{E}_{\vec{W}_0}[\mathbb{E}_{\vec{W}_1}[ \ldots \mathbb{E}_{\vec{W}_\beta}[h_{\vec{W}}(\vec{x}_i, \vec{x}_j)] \\
		&  \geq - 4 \mathbb{E}_{\vec{W}_0}\left[\mathbb{E}_{\vec{W}_1}\left[ \ldots \mathbb{E}_{\vec{W}_\beta}\left[\sum_{p \in [\beta]_0} \Tr( \vec{H}_{p-1}(\vec{x}_i) \cdot \vec{g}_p(\vec{x}_i)^\top \cdot \vec{g}_p(\vec{x}_j) \cdot \vec{H}_{p-1}(\vec{x}_i)^\top) \right] \right] \right].
	\end{align*}
	The inequality combines equation \ref{eq:HfunctionNN} with Lemma~\ref{lem:lossPropNeural}.
	We now prove the following inequality.
	\begin{equation}
		\label{eq:ineqExpradnW}
		 \mathbb{E}_{\vec{W}_0}\left[\mathbb{E}_{\vec{W}_1}\left[ \ldots \mathbb{E}_{\vec{W}_\beta}\left[\sum_{p \in [\beta]_0} \Tr( \vec{H}_{p-1}(\vec{x}_i) \cdot \vec{g}_p(\vec{x}_i)^\top \cdot \vec{g}_p(\vec{x}_j) \cdot \vec{H}_{p-1}(\vec{x}_i)^\top) \right] \right] \right] \leq 1.	
	\end{equation}
	Consider the inner-most expectation. Note that the only random variable is $\vec{W}_\beta$. Moreover, the term inside the trace is \emph{scalar}. Note that the activation function $\sigma$ satisfies $|\sigma'(x)| \leq 1$. Using the linearity of expectation, the LHS in equation \eqref{eq:ineqExpradnW} can be upper-bounded by the following.
	\begin{align}
	 & \mathbb{E}_{\vec{W}_0} \Big[ \mathbb{E}_{\vec{W}_1} \Big[ \ldots \mathbb{E}_{\vec{W}_{\beta-1}} \Big[ \Tr( \vec{H}_{\beta-1}(\vec{x}_i) \cdot \vec{H}_{\beta-1}(\vec{x}_i)^\top)\Big] \Big] \Big]\label{eq:ineqExpradnW21} \\ 
	& + \mathbb{E}_{\vec{W}_0} \Big[ \mathbb{E}_{\vec{W}_1} \Big[ \ldots \mathbb{E}_{\vec{W}_{\beta}} \Big[ \sum_{p \in [\beta]_0 \setminus \{\beta\} } \Tr( \vec{H}_{p-1}(\vec{x}_i) \cdot \vec{g}_p(\vec{x}_i)^\top \cdot \vec{g}_p(\vec{x}_j) \cdot \vec{H}_{p-1}(\vec{x}_i)^\top) \Big] \Big] \Big].\label{eq:ineqExpradnW22}
	\end{align}
	The first sum in the above expression can be upper-bounded by $1$, since $|\sigma(x)| \leq 1$. We will now show that the second sum is $0$. Consider the inner-most expectation. The weights $\vec{W}_{\beta}$ appears only in the expression $\vec{g}_p(\vec{x}_i)^\top \cdot  \vec{g}_p(\vec{x}_j)$. Moreover, note that every entry in $\vec{W}_{\beta}$ is an i.i.d. normal random variable with mean $0$. Thus, the second summand simplifies to,
	\[
			 \mathbb{E}_{\vec{W}_0} \Big[ \mathbb{E}_{\vec{W}_1} \Big[ \ldots \mathbb{E}_{\vec{W}_{\beta-1}} \Big[ \sum_{p \in [\beta]_0 \setminus \{\beta, \beta-1\} } \Tr( \vec{H}_{p-1}(\vec{x}_i) \cdot \vec{g}_p(\vec{x}_i)^\top \cdot \vec{g}_p(\vec{x}_j) \cdot \vec{H}_{p-1}(\vec{x}_i)^\top) \Big] \Big] \Big].
	\]
	Applying the above argument repeatedly we obtain that the second summand (equation \eqref{eq:ineqExpradnW22}) is $0$.
	
Thus, we obtain the inequality in equation \eqref{eq:ineqExpradnW} which implies that $\mathbb{E}_{\vec{W}}[h_{\vec{W}}(\vec{x}_i, \vec{x}_j)] \geq -4$. 

\subsection{Proof of Theorem~\ref{thm:OrthInit}}
\label{appsec:orthoinit}
    In this section, we prove Theorem~\ref{thm:OrthInit}. The proof follows similar to those in previous sub-sections; we prove a bound on the gradient of the gradient inner-product and show that the expectation is non-negative. Combining these two with an argument similar to equation \ref{linregprobbound} we get the theorem.

    Note that the dataset is obtained by considering i.i.d. samples from a $d$-dimensional unit sphere. Thus, the lower-bound on the expectation (\ie non-negative expectation of the gradient inner-product) follows from equation \ref{eq:nonNegativeIID}. Thus, it remains to prove an upper-bound on the norm of the gradient of the gradient inner-product term.
    
    Throughout this proof, we will use $g(\vec{x})$ as a short-hand to denote $g_{\vec{W}}(\vec{x})$. Consider the gradient $\nabla_{\vec{W}} g(\vec{x})$. The the $i^{th}$ component of this can be written as follows.
    \begin{equation}
        \label{eq:OrthInitGradW}
        \left[ \nabla_{\vec{W}} g(\vec{x}) \right]_i = \gamma^2 \zeta_{\vec{x}}(\vec{W}) \left( \vec{W}_{\beta}^T \cdot \ldots \vec{W}_{i+1}^T \cdot \vec{x}^T \cdot \vec{W}_1^T \cdot \ldots \vec{W}_{i-1}^T \right).
    \end{equation}
    Now consider, the gradient inner-product $h_{\vec{W}}(\vec{x}_i, \vec{x}_j)$. We want to upper-bound the quantity $\| \nabla_{(\vec{x}_i, \vec{x}_j)} h_{\vec{W}}(\vec{x}_i, \vec{x}_j) \|$. From symmetry, this can be upper-bounded by $2 \| \nabla_{\vec{x}_i} h_{\vec{W}}(\vec{x}_i, \vec{x}_j) \|$. Consider the $k^{th}$ coordinate of $\nabla_{\vec{x}_i} h_{\vec{W}}(\vec{x}_i, \vec{x}_j)$. Using equation \ref{eq:OrthInitGradW}, the assumption that $\{ \vec{W}_i \}_{i \in [\beta]}$ are  orthogonal matrices and taking the gradient, this can be written as,
    \begin{equation}
        \label{eq:OrthInitGradX}
        \left[ \nabla_{\vec{x}_i} h_{\vec{W}}(\vec{x}_i, \vec{x}_j) \right]_k = \gamma^2 \zeta_{\vec{x}_i}(\vec{W}) \vec{x}_j + \alpha^2 \left( \vec{W}_{\beta}^T \cdot \ldots \vec{W}_{i+1}^T \cdot \vec{x}^T \cdot \vec{W}_1^T \cdot \ldots \vec{W}_{i-1}^T \right)\left( \nabla_{\vec{x}_i} \zeta_{\vec{x}_i}(\vec{W}) \right).
    \end{equation}
    Combining assumption~\ref{ass:small_weight} with equation \ref{eq:OrthInitGradX} we have that $\| \nabla_{\vec{x}_i} h_{\vec{W}}(\vec{x}_i, \vec{x}_j) \|$ is at most $2 \gamma^2 \beta \| \vec{x} _j \| \leq 2 \gamma^2 \beta$. For the definition of the scaling factor $\gamma = \frac{1}{\sqrt{2 \beta}}$, we have that $2 \gamma^2 \beta = 1$. Thus, $\| \nabla_{(\vec{x}_i, \vec{x}_j)} h_{\vec{W}}(\vec{x}_i, \vec{x}_j) \| \leq 2$.

\section{Technical lemmas}
\label{app:technical_lemmas}
	We will briefly describe some technical lemmas we require in our analysis. The following Chernoff-style concentration bound is proved in Chapter 5 of \citet{vershynin2016high}.
	
	\begin{lemma}[Concentration of Lipshitz function over a sphere]
			\label{lem:HDPBook}
				Let $\vec{x} \in \mathbb{R}^d$ be sampled uniformly from the surface of a $d$-dimensional sphere. Consider a Lipshitz function $\ell: \mathbb{R}^d \rightarrow \mathbb{R}$ which is differentiable everywhere. Let $\|\nabla \ell\|_2$ denote $\sup_{\vec{x} \in \mathbb{R}^d} \|\nabla \ell(\vec{x})\|_2$. Then for any $t \geq 0$ and some fixed constant $c \geq 0$, we have the following.
					\begin{equation}
						\label{lem:conc:eq}	
\Pr\left[ \Bigl\lvert \ell(\vec{x}) - \mathbb{E}[\ell(\vec{x})] \Bigr\rvert \geq t\right] \leq 2 \exp\left(-\frac{cdt^2}{\rho^2}\right),
					\end{equation}
					where $\rho \ge \| \nabla \ell\|_2$. 
		\end{lemma}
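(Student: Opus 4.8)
The plan is to prove this as the classical Lévy concentration-of-measure inequality on the sphere, importing the spherical isoperimetric inequality (the one genuinely deep geometric input, available in Chapter~5 of \citet{vershynin2016high}) as a black box and packaging the rest. First I would reduce to the normalized case: since $\|\nabla \ell\|_2 \le \rho$ everywhere, integrating along the chord (or geodesic) joining two points shows that $\ell$ is $\rho$-Lipschitz on the sphere, and after replacing $\ell$ by $\ell/\rho$ and rescaling $t$ it suffices to treat the case $\rho = 1$, so I take $\ell$ to be $1$-Lipschitz.

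The core step is Lévy's lemma. Let $M$ be a median of $\ell$ and set $A = \{\vec{x}\in S^{d-1} : \ell(\vec{x}) \le M\}$, so that $\Pr[\vec{x}\in A] \ge 1/2$. Writing $A_\epsilon$ for the geodesic $\epsilon$-neighborhood of $A$, the spherical isoperimetric inequality (comparison against a hemispherical cap of measure $1/2$) yields $\Pr[\vec{x}\in A_\epsilon] \ge 1 - \sqrt{\pi/8}\,\exp(-(d-1)\epsilon^2/2)$, which is exactly the cap bound already used in Lemma~\ref{lem:orthovec}. Because geodesic distance dominates Euclidean (chordal) distance and $\ell$ is $1$-Lipschitz, every $\vec{x}\in A_\epsilon$ satisfies $\ell(\vec{x}) \le M + \epsilon$; hence $\Pr[\ell(\vec{x}) > M + \epsilon] \le \sqrt{\pi/8}\,\exp(-(d-1)\epsilon^2/2)$. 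Applying the identical argument to $\{\ell \ge M\}$ gives the matching lower tail, so after restoring $\rho$ (i.e.\ taking $\epsilon = t/\rho$) one obtains the two-sided bound $\Pr[|\ell(\vec{x}) - M| \ge t] \le \sqrt{\pi/2}\,\exp(-(d-1)t^2/(2\rho^2))$ around the \emph{median}.

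Finally I would convert the median bound into the stated bound around the \emph{mean}. Integrating the median tail bound gives $|\mathbb{E}\ell - M| \le \int_0^\infty \Pr[|\ell(\vec{x}) - M| \ge t]\,dt = O(\rho/\sqrt{d})$. Then for $t \ge 2|\mathbb{E}\ell - M|$ the triangle inequality gives $\Pr[|\ell(\vec{x}) - \mathbb{E}\ell| \ge t] \le \Pr[|\ell(\vec{x}) - M| \ge t/2]$, which is already of the claimed form; for smaller $t$ the right-hand side $2\exp(-cdt^2/\rho^2)$ exceeds $1$ once $c$ is chosen small enough, so the inequality holds trivially there. Shrinking the universal constant $c$ absorbs both regimes together with the $(d-1)$-versus-$d$ discrepancy, landing on the clean exponent $cdt^2/\rho^2$.

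I expect the genuine difficulty to be entirely contained in the spherical isoperimetric inequality, which I would cite rather than reprove (it follows from two-point symmetrization; alternatively the whole lemma can be routed through Gaussian concentration via $\vec{x} = \vec{g}/\|\vec{g}\|$ with $\vec{g}\sim\mathcal{N}(0,\vec{I})$, at the cost of controlling the blow-up of the radial extension's Lipschitz constant near the origin where $\|\vec{g}\|$ is small). Everything downstream—the Lévy packaging, the chordal-versus-geodesic comparison, and the median-to-mean transfer—is routine, with the only real care being the bookkeeping of universal constants.
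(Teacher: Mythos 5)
Your proof is correct: the chain of isoperimetry on the sphere, L\'evy's lemma around the median, the chordal-versus-geodesic comparison, and the median-to-mean transfer is sound; the one-cap estimate $\sqrt{\pi/8}\,\exp(-(d-1)\epsilon^2/2)$ is consistent with the two-cap bound in Lemma~\ref{lem:orthovec}; and your case split (trivial bound for $t = O(\rho/\sqrt{d})$, triangle inequality otherwise) legitimately absorbs the median-to-mean shift and the $(d-1)$-versus-$d$ discrepancy into the universal constant $c$. The paper itself offers no proof of this lemma---it states it with only a citation to Chapter~5 of \citet{vershynin2016high}---and your argument is precisely the standard proof given in that cited source (Theorem~5.1.4 there), so you have in effect reconstructed the very proof the paper defers to.
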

We will rely on the following generalization of lemma \ref{lem:HDPBook}. We would like to point out that the underlying metric is the Euclidean metric and thus we use the $\| . \|_2$-norm.
		\begin{corollary}
			\label{lem:HDPCorr}
			Let $\vec{x}, \vec{y} \in \mathbb{R}^d$ be two mutually independent vectors sampled uniformly from the surface of a $d$-dimensional sphere. Consider a Lipshitz function $\ell: \mathbb{R}^d \times \mathbb{R}^d \rightarrow \mathbb{R}$ which is differentiable everywhere. Let $\|\nabla \ell\|_2$ denote $\sup_{(\vec{x}, \vec{y}) \in \mathbb{R}^d \times \mathbb{R}^d} \| \nabla \ell(\vec{x, y})\|_2$. Then for any $t \geq 0$ and some fixed constant $c \geq 0$, we have the following.
				\begin{equation}
						\label{lem:conc:eq2}	
\Pr\left[ \Bigl\lvert \ell(\vec{x}, \vec{y}) - \mathbb{E}[\ell(\vec{x}, \vec{y})] \Bigr\rvert \geq t\right] \leq 2 \exp\left(-\frac{cdt^2}{\rho^2}\right),
				\end{equation}
				where $\rho \ge \| \nabla \ell\|_2$.
		\end{corollary}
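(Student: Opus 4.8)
The plan is to reduce this two-variable concentration statement to the single-variable Lemma \ref{lem:HDPBook} by a standard tensorization (martingale-difference) argument over the product of spheres $S^{d-1} \times S^{d-1}$, equipped with the product uniform measure and the Euclidean metric. Two observations drive the reduction: a two-sided sub-Gaussian tail bound for a Lipschitz function on a single sphere is equivalent, up to an absolute constant, to a bound on its moment-generating function (MGF); and the hypothesis $\rho \ge \|\nabla \ell\|_2$ on the full gradient controls the Lipschitz constant of $\ell$ in each coordinate block separately, since each block-gradient is a sub-vector of $\nabla \ell$. Concretely, I would first record the sub-Gaussian form of Lemma \ref{lem:HDPBook}: if $h:\mathbb{R}^d \to \mathbb{R}$ is $\rho$-Lipschitz on the sphere, then \eqref{lem:conc:eq} gives $\mathbb{E}[\exp(\lambda(h(\vec{z}) - \mathbb{E} h(\vec{z})))] \le \exp(C\lambda^2\rho^2/(cd))$ for all $\lambda$ and an absolute constant $C$, which is the routine passage between a sub-Gaussian tail and an MGF bound.

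Next I would split the deviation by conditioning on the two independent blocks in sequence. Define $g(\vec{x}) := \mathbb{E}_{\vec{y}}[\ell(\vec{x},\vec{y})]$ and write
\[
\ell(\vec{x},\vec{y}) - \mathbb{E}[\ell(\vec{x},\vec{y})] = \underbrace{\big(\ell(\vec{x},\vec{y}) - g(\vec{x})\big)}_{=:D_1} + \underbrace{\big(g(\vec{x}) - \mathbb{E}_{\vec{x}}[g(\vec{x})]\big)}_{=:D_2}.
\]
For each fixed $\vec{x}$, the map $\vec{y}\mapsto \ell(\vec{x},\vec{y})$ is $\rho$-Lipschitz because its gradient is a sub-block of $\nabla\ell$, so conditionally on $\vec{x}$ the term $D_1$ is a centered $\rho$-Lipschitz function of $\vec{y}$ on the sphere. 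The averaged map $g$ is itself $\rho$-Lipschitz in $\vec{x}$, since $|g(\vec{x}) - g(\vec{x}')| \le \mathbb{E}_{\vec{y}}|\ell(\vec{x},\vec{y}) - \ell(\vec{x}',\vec{y})| \le \rho\|\vec{x}-\vec{x}'\|$, so $D_2$ is a centered $\rho$-Lipschitz function of $\vec{x}$.

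Applying the MGF bound of the first step to each block and using independence through the tower rule gives
\[
\mathbb{E}\big[e^{\lambda(\ell - \mathbb{E}\ell)}\big] = \mathbb{E}_{\vec{x}}\Big[e^{\lambda D_2}\,\mathbb{E}_{\vec{y}}\big[e^{\lambda D_1}\mid \vec{x}\big]\Big] \le e^{C\lambda^2\rho^2/(cd)}\,\mathbb{E}_{\vec{x}}\big[e^{\lambda D_2}\big] \le e^{2C\lambda^2\rho^2/(cd)},
\]
and a Chernoff bound optimized over $\lambda$ yields $\Pr[|\ell(\vec{x},\vec{y}) - \mathbb{E}\ell| \ge t] \le 2\exp(-c'dt^2/\rho^2)$ after relabeling the constant. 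Since the statement only asserts the existence of some fixed $c > 0$, absorbing the factor $2C$ into $c'$ completes the proof.

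The main obstacle is purely bookkeeping rather than conceptual: one must verify that the partially-averaged function $g$ inherits the Lipschitz constant $\rho$ (and not something larger), and track how the absolute constant in the tail-to-MGF conversion propagates through the two-fold product so that the final exponent retains the clean form $cdt^2/\rho^2$ with the \emph{same} ambient dimension $d$ (the product has effective dimension $\approx 2d$, which only improves the constant). There is no genuine difficulty beyond this constant-chasing; the entire content is the tensorization of sphere concentration, which could alternatively be invoked directly as a known property of products of spheres in the Euclidean metric.
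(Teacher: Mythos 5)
Your proof is correct, and it takes a genuinely different route from the paper's. The paper's own proof is a one-shot conditioning argument: it fixes $\vec{y}=\tilde{\vec{y}}$, applies Lemma \ref{lem:HDPBook} to the map $\vec{x} \mapsto \ell(\vec{x},\tilde{\vec{y}})$, and integrates the resulting bound against the density of $\vec{y}$; no moment-generating functions appear. Your route---tail-to-MGF conversion, the decomposition $\ell - \mathbb{E}\ell = D_1 + D_2$ with $g(\vec{x}) = \mathbb{E}_{\vec{y}}[\ell(\vec{x},\vec{y})]$, and the tower rule---is longer, but it is actually the more rigorous version of the same idea. The paper's argument glosses over a recentering issue: conditionally on $\vec{y}=\tilde{\vec{y}}$, the event in question concerns the deviation of $\ell(\vec{x},\tilde{\vec{y}})$ from the \emph{unconditional} mean $\mathbb{E}[\ell(\vec{x},\vec{y})]$, whereas Lemma \ref{lem:HDPBook} controls its deviation from the \emph{conditional} mean $\mathbb{E}_{\vec{x}}[\ell(\vec{x},\tilde{\vec{y}})]$; the discrepancy between the two is exactly your term $D_2$ (with the roles of $\vec{x}$ and $\vec{y}$ swapped), which fluctuates at scale $\rho/\sqrt{d}$ and must be controlled separately---your decomposition does precisely this, at the cost of constant factors that the statement's unspecified $c$ absorbs. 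So what the paper's route buys is brevity, and what yours buys is a complete handling of the conditional-versus-unconditional centering. Two small points to tidy in your write-up: (i) Lemma \ref{lem:HDPBook} as stated requires differentiability, so you should note that $g$ is differentiable with $\nabla g(\vec{x}) = \mathbb{E}_{\vec{y}}[\nabla_{\vec{x}}\ell(\vec{x},\vec{y})]$ (dominated convergence, using $\|\nabla \ell\|_2 \le \rho$), or remark that sphere concentration holds for all Lipschitz functions regardless; (ii) your two centered pieces $D_1$ and $D_2$ are each sub-Gaussian, so the final Chernoff step should either optimize $\lambda$ for the sum's MGF bound (as you do) or split $t$ into $t/2 + t/2$ across the two pieces---either way the exponent keeps the form $c'dt^2/\rho^2$, as you claim.
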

		
			\begin{proof}
			This corollary can be derived from lemma~\ref{lem:HDPBook} as follows. Note that for every fixed $\tilde{\vec{y}} \in \mathbb{R}^d$, equation \ref{lem:conc:eq} holds. Additionally, we have that the vectors $\vec{x}$ and $\vec{y}$ are mutually independent. Hence we can write the LHS of equation \ref{lem:conc:eq2} as the following.
			\[
			 	\bigintsss_{(\tilde{\vec{y}})_1=-\infty}^{(\tilde{\vec{y}})_1=\infty} \ldots \bigintsss_{(\tilde{\vec{y}})_d=-\infty}^{(\tilde{\vec{y}})_d=\infty} \Pr\left[ \Bigl\lvert \ell(\vec{x}, \vec{y}) - \mathbb{E}[\ell(\vec{x}, \vec{y})] \Bigr\rvert \geq t ~ \Biggl\lvert~\vec{y}=\tilde{\vec{y}} \Biggr\rvert \right] \phi(\tilde{\vec{y}}) d(\tilde{\vec{y}})_1 \ldots d(\tilde{\vec{y}})_d.
			 \]
			 Here $\phi(\tilde{\vec{y}})$ refers to the pdf of the distribution of $\vec{y}$. From independence, the inner term in the integral evaluates to $\Pr\left[ \Bigl\lvert \ell(\vec{x}, \tilde{\vec{y}}) - \mathbb{E}[\ell(\vec{x}, \tilde{\vec{y}})] \Bigr\rvert \geq t \right]$. We know this is less than or equal to $ 2 \exp\left(-\frac{cdt^2}{\|\vec{\nabla} \ell\|_2^2}\right)$. Therefore, the integral can be upper bounded by the following.
			 \[
			 	\bigintsss_{(\tilde{\vec{y}})_1=-\infty}^{(\tilde{\vec{y}})_1=\infty} \ldots \bigintsss_{(\tilde{\vec{y}})_d=-\infty}^{(\tilde{\vec{y}})_d=\infty}2 \exp\left(-\frac{cdt^2}{\|\vec{\nabla} \ell\|_2^2}\right) \phi(\tilde{\vec{y}}) d(\tilde{\vec{y}})_1 \ldots d(\tilde{\vec{y}})_d.
			 \]
			 
			 Since $\phi(\tilde{\vec{y}})$ is a valid pdf, we get the required equation \ref{lem:conc:eq2}.
		\end{proof}

		Additionally, we will use the following facts about a normalized Gaussian random variable.
	\begin{lemma}
			\label{lem:gauss_supp}
			For a normalized Gaussian $\vec{x}$ (\emph{i.e.,} an $\vec{x}$ sampled uniformly from the surface of a unit $d$-dimensional sphere)  the following statements are true.
		\begin{enumerate}
			\item $\forall p \in [d]$ we have that $\mathbb{E}[(\vec{x})_p] = 0$.
			\item $\forall p \in [d]$ we have that $\mathbb{E}[(\vec{x})_p^2] = 1/d$.
		\end{enumerate}
		\end{lemma}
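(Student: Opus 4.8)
The plan is to prove both statements purely from the rotational symmetry of the uniform distribution on the sphere, avoiding any explicit surface-integral computation. Write $\vec{x} = ((\vec{x})_1, \ldots, (\vec{x})_d)$ for a point drawn uniformly from the surface $S^{d-1}$ of the unit sphere, and recall the defining property that this distribution is invariant under every orthogonal transformation $\vec{Q}$, i.e.\ $\vec{Q}\vec{x}$ has the same law as $\vec{x}$. Both parts follow by choosing the right orthogonal maps.

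For the first claim, I would fix a coordinate $p \in [d]$ and consider the reflection $\vec{R}_p$ that negates the $p$-th coordinate and fixes all others. Since $\vec{R}_p$ is orthogonal, $\vec{R}_p \vec{x}$ has the same distribution as $\vec{x}$, and hence $(\vec{x})_p$ and $-(\vec{x})_p$ are identically distributed. Taking expectations gives $\mathbb{E}[(\vec{x})_p] = \mathbb{E}[-(\vec{x})_p] = -\mathbb{E}[(\vec{x})_p]$, which forces $\mathbb{E}[(\vec{x})_p] = 0$.

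For the second claim, I would first note that for any two indices $p, q \in [d]$ there is a permutation matrix (a special orthogonal transformation) swapping coordinates $p$ and $q$; by the invariance above, $(\vec{x})_p$ and $(\vec{x})_q$ are identically distributed, so $\mathbb{E}[(\vec{x})_p^2]$ does not depend on $p$. Next I would use the defining constraint $\|\vec{x}\|^2 = \sum_{p=1}^d (\vec{x})_p^2 = 1$, which holds almost surely because $\vec{x}$ lies on the unit sphere. Taking expectation of both sides and applying linearity yields $\sum_{p=1}^d \mathbb{E}[(\vec{x})_p^2] = 1$, and since all $d$ summands are equal this gives $d \, \mathbb{E}[(\vec{x})_p^2] = 1$, i.e.\ $\mathbb{E}[(\vec{x})_p^2] = 1/d$.

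There is essentially no hard step here: the only care required is to justify that the uniform measure on the sphere is genuinely invariant under the orthogonal maps used (reflections and coordinate permutations), which is immediate from the rotational invariance built into the definition of the uniform distribution on $S^{d-1}$. The constraint $\|\vec{x}\| = 1$ is what couples the symmetry argument to the ambient dimension and produces the $1/d$ normalization in the second part.
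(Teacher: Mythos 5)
Your proposal is correct and follows essentially the same route as the paper: reflection symmetry for the first part, and exchangeability of coordinates combined with the unit-norm constraint for the second. The paper's only cosmetic difference is that it verifies $\sum_{p'} \mathbb{E}[(\vec{x})_{p'}^2] = 1$ by writing out an explicit integral of the ratio $\bigl(\sum_{p'} x_{p'}^2\bigr)/\bigl(\sum_{p''} x_{p''}^2\bigr)$ against the Gaussian density, which is just a longer way of saying, as you do, that $\|\vec{x}\|^2 = 1$ almost surely.
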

	\begin{proof}
			Part (1) can be proved by observing that the normalized Gaussian random variable is spherically symmetric about the origin. In other words, for every $p \in [d]$ the vectors $(x_1, x_2, \ldots, x_p, \ldots, x_d)$ and $(x_1, x_2, \ldots, -x_p, \ldots, x_d)$ are identically distributed. Hence $\mathbb{E}[x_p] = \mathbb{E}[-x_p]$ which implies that $\mathbb{E}[x_p]=0$.
			
			Part (2) can be proved by observing that for any $p, p' \in [d]$, $x_p$ and $x_{p'}$ are identically distributed. Fix any $p \in [d]$. We have that $\sum_{p' \in [d]} \mathbb{E}[x_{p'}^2] = d \times \mathbb{E}[x^2_p]$. Note that we have
			
				\[
						\sum_{p' \in [d]} \mathbb{E}[x_{p'}^2] = \bigintsss_{(\vec{x})_1=-\infty}^{(\vec{x})_1=\infty} \ldots \bigintsss_{(\vec{x})_d=-\infty}^{(\vec{x})_d=\infty}  \frac{\sum_{p' \in [d]} x_{p'}^2}{\sum_{p'' \in [d]} x^2_{p''}}
						\phi(\vec{x}) d(\vec{x})_1 \ldots d(\vec{x})_d = 1.
			 \]
			Therefore $\mathbb{E}[x^2_p] = 1/d$.
		\end{proof}

	We use the following well-known Gaussian concentration inequality in our proofs (\emph{e.g.,} Chapter 5 in \citet{boucheron2013concentration}).
	
	\begin{lemma}[Gaussian Concentration]
		\label{lem:gaussianConcenteration}
		Let $\vec{x} = (x_1, x_2, \ldots, x_d)$ be \emph{i.i.d.} $\mathcal{N}(0, \nu^2)$ random variables. 
		Consider a Lipshitz function $\ell: \mathbb{R}^d \rightarrow \mathbb{R}$ which is differentiable everywhere. Let $\|\nabla \ell\|_2$ denote $\sup_{\vec{x} \in \mathbb{R}^d} \|\nabla \ell(\vec{x})\|_2$. Then for any $t \geq 0$, we have the following.
					\begin{equation}
						\label{lem:concGaussian:eq}	
	\Pr\left[ \Bigl\lvert \ell(\vec{x}) - \mathbb{E}[\ell(\vec{x})] \Bigr\rvert \geq t\right] \leq 2 \exp\left(-\frac{t^2}{2 \nu^2 \rho^2}\right),
					\end{equation}
					where $\rho \ge \| \nabla \ell\|_2$. 
	\end{lemma}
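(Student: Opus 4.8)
The plan is to prove this sharp sub-Gaussian tail bound via the Gaussian logarithmic Sobolev inequality together with the \emph{Herbst argument}. This is the route that yields \emph{exactly} the constant $\tfrac{1}{2\nu^2\rho^2}$ in the exponent; a softer coupling/interpolation argument (Maurey--Pisier) would only produce a constant of order $\pi^2/8$, which is too weak for the claimed estimate.

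First I would reduce to the standard Gaussian case by rescaling. Writing $\vec{x}=\nu\vec{z}$ with $\vec{z}\sim\mathcal{N}(\vec 0,\vec I_d)$ and setting $g(\vec{z}):=\ell(\nu\vec{z})$, the function $g$ is differentiable with $\sup_{\vec z}\|\nabla g(\vec z)\|_2=\nu\sup_{\vec x}\|\nabla\ell(\vec x)\|_2\le\nu\rho=:L$, and $g(\vec z)$ has the same law as $\ell(\vec x)$. Hence it suffices to show, for standard Gaussian $\vec z$ and $L$-Lipschitz $g$, that $\Pr[\,|g(\vec z)-\mathbb{E}g|\ge t\,]\le 2\exp(-t^2/(2L^2))$. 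Since $\ell$ is assumed differentiable everywhere with bounded gradient, $e^{\lambda g/2}$ is locally Lipschitz and the log-Sobolev inequality below applies directly; a short preliminary check that $H(\lambda):=\mathbb{E}[e^{\lambda g}]$ is finite for all $\lambda$ (true because a Lipschitz function of a Gaussian has at most linear growth) justifies differentiating under the expectation.

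The key ingredient is the Gaussian log-Sobolev inequality: for the standard Gaussian measure $\gamma$ on $\mathbb{R}^d$ and smooth $f$,
\[
\mathrm{Ent}_\gamma(f^2)\le 2\,\mathbb{E}_\gamma\|\nabla f\|_2^2,\qquad \mathrm{Ent}_\gamma(h)=\mathbb{E}[h\log h]-\mathbb{E}[h]\log\mathbb{E}[h].
\]
I would take this as known (it tensorizes from the one-dimensional case, or follows from the Ornstein--Uhlenbeck semigroup), and this is genuinely the analytic heart of the proof. Applying it to $f=e^{\lambda g/2}$ gives $\|\nabla f\|_2^2=\tfrac{\lambda^2}{4}\|\nabla g\|_2^2\,e^{\lambda g}\le\tfrac{\lambda^2L^2}{4}e^{\lambda g}$, so that
\[
\mathrm{Ent}_\gamma(e^{\lambda g})=\lambda H'(\lambda)-H(\lambda)\log H(\lambda)\le\tfrac{\lambda^2L^2}{2}\,H(\lambda).
\]

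Finally I would run the Herbst argument. Dividing the differential inequality by $\lambda^2H(\lambda)$ and recognizing the left-hand side as $\tfrac{d}{d\lambda}\big(\tfrac{1}{\lambda}\log H(\lambda)\big)$, integration from $0$ (using $\lim_{s\to0^+}\tfrac1s\log H(s)=\mathbb{E}g$) yields $\log H(\lambda)\le\lambda\,\mathbb{E}g+\tfrac{\lambda^2L^2}{2}$, i.e.\ $\mathbb{E}[e^{\lambda(g-\mathbb{E}g)}]\le e^{\lambda^2L^2/2}$ for all $\lambda$. A Chernoff bound then gives $\Pr[g-\mathbb{E}g\ge t]\le\inf_{\lambda>0}e^{-\lambda t+\lambda^2L^2/2}=e^{-t^2/(2L^2)}$; applying the same to $-g$ and combining by a union bound produces the two-sided estimate with leading factor $2$, and substituting $L=\nu\rho$ recovers the stated bound. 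The main obstacle is not the bookkeeping but securing the \emph{sharp} constant: this is precisely what forces the use of the log-Sobolev inequality rather than an elementary interpolation argument, and it is the log-Sobolev inequality whose proof carries all the real weight.
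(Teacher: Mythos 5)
Your proof is correct and matches the paper's own justification: the paper gives no proof of this lemma, citing it as the well-known Gaussian concentration inequality from Chapter 5 of \citet{boucheron2013concentration}, and the argument there is precisely the route you take---rescaling to a standard Gaussian, applying the Gaussian logarithmic Sobolev inequality to $e^{\lambda g/2}$, running the Herbst argument to get $\mathbb{E}\bigl[e^{\lambda(g-\mathbb{E}g)}\bigr]\le e^{\lambda^2L^2/2}$ with $L=\nu\rho$, and finishing with a two-sided Chernoff bound. Your bookkeeping of the sharp constant, yielding the exponent $-t^2/(2\nu^2\rho^2)$ exactly as stated, is also correct.
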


\section{Additional discussion of the small weights assumption (assumption \ref{ass:small_weight})}
\label{app:small_weights}

 Without the small-weights assumption, the signal propagated forward or the gradients  $\nabla_{\vec{W}} f_i$ could potentially blow up in magnitude, making the network untrainable. Proving non-vacuous bounds in case of such blow-ups in magnitude of the signal or the gradient is not possible in general, and thus, we assume this restricted class of weights. 
 	

Note that the small-weights assumption is not just a theoretical concern, but also usually holds in practice. Neural networks are often trained with \emph{weight decay} regularizers of the form $\sum_i \|W_i\|_F^2$, which keep the weights small during optimization.
The operator norm of convolutional layers have also recently been used as an effective regularizer for image classification tasks by \citet{sedghi2018singular}. 
		
In the proof of theorem~\ref{thm:fixedData} we showed that assumption~\ref{ass:small_weight} holds with high probability at standard Gaussian initializations used in practice. While, in general, there is no reason to believe that such a small-weights assumption would continue to hold during optimization without explicit regularizers like weight decay, some recent work has shown evidence that the weights do not move too far away during training from the random initialization point for overparameterized neural networks \citep{neyshabur2018towards, dziugaite2017computing, nagarajan2019generalization, zou2018stochastic, allen2018convergence, du2018gradient, oymak2018overparameterized}. It is worth noting though that all these results have been shown under some restrictive assumptions, such as the width requiring to be much larger than generally used by practitioners.

\end{document}